\documentclass[11pt]{article}

\usepackage{styles}
\bluehyperref
\usepackage[numbers]{natbib}
\usemedgeometry

\usepackage{overpic}
\usepackage{enumitem}

\usepackage{macros}

\begin{document}

\title{Finding a stationary point of a stochastic convex problem}
\author{Felipe Areces ~~~~ John Duchi ~~~~ Malo Sommers \\ Stanford University}
\date{July 2026}

\maketitle


\begin{abstract}
  We consider the problem of finding stationary points for
  stochastic convex optimization problems.
  Rather than surrogates to stationarity, such as a
  proximity-to-stationarity guarantee or small gradient of the Moreau
  envelope, we ask for a stronger notion: that the subdifferential of the
  objective actually contains a small element.
  This criterion is non-trivial, because subdifferentials of convex
  functions fail to converge uniformly, even in arbitrarily small
  neighborhoods of the optimum.
  Our convergence guarantees rely on dimension theory to decompose the graph
  of the subdifferential of a convex function, showing how
  stochastic sampling preserves ``pieces'' of
  these graphs, and allowing
  effective application of proximal-point-like methods.
\end{abstract}

\vspace{.5cm}

\textbf{Keywords:} stationary points, stochastic optimization,
dimension theory, subdifferential, monotone operator

\vspace{.5cm}

\textbf{MSC Numbers:} 65K10, 90C15, 47N10

\newpage


\section{Introduction}

Consider a stochastic convex optimization problem,
where for a collection $\{\loss_z\}_{z \in \mc{Z}}$
of convex functions, we wish to minimize
the population objective
\begin{align}
  \label{eqn:objective}
  \poploss_P(x) \defeq \E_P[\loss_Z(x)] = \int \loss_z(x) dP(z).
\end{align}
Upon being asked what it means to solve such a problem, an optimizer might
say ``finding a point near the minimizer'' or ``with approximately optimal
function value''~\cite{HiriartUrrutyLe93,Nesterov04,BoydVa04,Bertsekas15}.
Yet these notions of optimality can become unsatisfactory:
consider finding a median of a distribution $P$, i.e.,
$x \in \R$ such that
\begin{align*}
  \half \le P(Z \le x)
  ~~ \mbox{and} ~~
  \half \le P(Z \ge x)
  ~~~ \mbox{for}~ Z \sim P.
\end{align*}
%
Finding a median is equivalent to minimizing the expected absolute error
$\loss_z(x) = |x - z|$.

Two examples highlight issues with these optimality criteria
here.
First, even if the median $x\opt$ is unique, seeking a point near $x\opt$
may be a fool's errand.

\begin{example}
  \label{example:no-median}
  Let $0 < \epsilon \le 1$ and
  the distributions $P_0$ and $P_1$ on
  $Z \in \{0, 1\}$ satisfy
  \begin{equation*}
    P_0(Z = 0) = P_1(Z = 1) = \frac{1 + \epsilon}{2}
    ~~~ \mbox{and} ~~
    P_0(Z = 1) = P_1(Z = 0) = \frac{1 - \epsilon}{2}.
  \end{equation*}
  These have unique medians $x\opt_0 = 0$
  and $x\opt_1 = 1$, while
  their KL-divergence $\dkl{P_0}{P_1} =
  \epsilon \log \frac{1 + \epsilon}{1 - \epsilon} \le 2 \epsilon^2
  + \epsilon^4$ for $0 \le \epsilon \le \frac{1}{4}$.
  A standard two-point lower bound~\cite[Ch.~15.2]{Wainwright19} shows that
  any estimator $\what{x}_n$ based on a sample $Z_1, \ldots, Z_n$ drawn
  i.i.d.\ from $P_0$ or $P_1$ must satisfy
  \begin{align*}
    P_0\left(\left|\what{x}_n - x\opt_0\right| \ge \half\right)
    + P_1\left(\left|\what{x}_n - x\opt_1\right| \ge \half \right)
    \ge 1 - \sqrt{\frac{n}{2} \dkl{P_0}{P_1}}
    \ge 1 - \sqrt{n \epsilon^2 \left(1 + \frac{\epsilon^2}{2}\right)}.
  \end{align*}
  Until $n \gg \frac{1}{\epsilon^2}$, no method can reliably estimate the
  median to even constant accuracy.
\end{example}

\noindent
Faced with this difficulty, our hypothetical optimizer instead seeks small
excess objective:

\begin{example}
  \label{example:no-scale}
  Take $\loss_z(x) = |x - z|$, let $\poploss_P(x) =
  \E_P[\loss_Z(x)]$, and fix a desired
  accuracy $\epsilon > 0$,
  where $\what{x}$ minimizes $\poploss_P$ to accuracy $\epsilon$ if
  $\poploss_P(\what{x}) - \inf_x \poploss_P(x) \le \epsilon$.
  For any distribution $P$ supported on $[-\epsilon/10, \epsilon/10]$, the
  point $\what{x} = \epsilon / 2$ certainly satisfies $\poploss_P(\what{x})
  - \inf_x \poploss_P(x) \le \epsilon$, but
  \begin{align*}
    P(Z \le \what{x}) = 1
    ~~ \mbox{and} ~~
    P(Z \ge \what{x}) = 0,
  \end{align*}
  so that $\what{x}$ fails quite spectacularly to behave like a median.
\end{example}

These examples show that proximity to a minimizer and small excess
risk fail to capture the distributional properties defining a median.
In this case, the relevant criterion becomes
stationarity for the absolute loss:
for $\loss_z(x) = |x - z|$ we have
$\partial \loss_z(x) = \{\sign(x - z)\}$, where $\sign(0) = [-1,1]$,
giving population subdifferential
\begin{equation*}
  \partial \poploss_P(x) = P(x > Z) - P(x < Z) + [-1, 1] \cdot P(Z = x).
\end{equation*}
Therefore, for any $x \in \R$,
the subdifferential
$\partial \poploss_P(x)$ contains $g$ with $|g| \le 2\epsilon$
if and only if
\begin{align*}
  P(Z \ge x) \ge \half - \epsilon
  ~~ \mbox{and} ~~
  P(Z \le x) \ge \half - \epsilon,
\end{align*}
addressing the issues Examples~\ref{example:no-median}
and~\ref{example:no-scale} highlight.

More generally, we consider finding near stationary points
of convex losses.
Let $X \subset \R^d$ be closed convex and $\{\loss_z\}_{z \in \mc{Z}}$ be a
collection of convex functions $\loss_z : \R^d \to \R \cup \{+\infty\}$,
where $\relint \dom \loss_z \supset X$, so that each is subdifferentiable
over $X$.
For a distribution $P$ on $Z \in \mc{Z}$, recall~\eqref{eqn:objective} the
population loss $\poploss_P(x) = \E_P[\loss_Z(x)]$.
Then the standard first-order conditions for
optimality~\cite[Thm.~VII.1.1.1]{HiriartUrrutyLe93} state that $x\opt$
minimizes $\poploss_P(x)$ over $x \in X$ if and only if
\begin{equation*}
  0 \in \partial
  \poploss_P(x\opt) + \normalcone_X(x\opt)
  ~~~ \mbox{where} ~~~
  \normalcone_X(x) = \{v
  \mid \<v, y - x\> \le 0 ~ \mbox{for~all~} y \in X\}
\end{equation*}
denotes the normal cone to $X$ at $x$.
We therefore seek estimators achieving \emph{$\epsilon$-stationarity},
that is, $\what{x}$ for which
\begin{equation}
  \label{eqn:stationary-point-goal}
  \dist\left(0, \partial \poploss_P(\what{x}) + \normalcone_X(\what{x})\right)
  \le \epsilon.
\end{equation}
We develop estimators $\what{x}_n$ based on a sample $Z_1, \ldots, Z_n
\simiid P$ satisfying
\begin{equation*}
  \dist\left(0, \partial \poploss_P(\what{x}_n) + \normalcone_X(\what{x}_n)
  \right) \to 0
  ~~ \mbox{in probability~as~} n \to \infty.
\end{equation*}

\subsection{Motivation}

The problem of finding $\epsilon$-stationary
points~\eqref{eqn:stationary-point-goal}, and the more abstract variants we
consider later of finding near zeros of maximal monotone operators and
variational inequalities, arise in many areas of statistics and
optimization.
For us, a main motivation comes from a direction within
statistics and machine learning that seeks ``distribution-free'' predictive
inference~\cite{VovkGaSa99, Lei14, LeiWa14, RomanoPaCa19,
  CauchoisGuDu21, BarberCaRaTi21a, AngelopoulosBa23}: algorithms for
prediction problems that output prediction sets in which
true outcomes are guaranteed to lie with a prescribed probability,
without making any assumptions on the underlying data generating
distribution.
The somewhat extended examples here show how these guarantees are equivalent
to obtaining the approximate stationary
guarantee~\eqref{eqn:stationary-point-goal}.

\begin{example}[Conformal prediction in regression]
  Consider predictive
  inference in a regression problem of predicting an outcome $Y \in \R$ using
  an input $X \in \mc{X}$.
  In these problems, one is given a predictor $h : \mc{X} \to \R$ and a
  sample $(X_i, Y_i)_{i=1}^n \simiid P$.
  One then seeks a confidence set $\what{C}_n : \mc{X} \toto \R$
  that, on a new example $X_{n + 1} \sim P$, satisfies
  \begin{align*}
    \P\left(Y_{n + 1} \in \what{C}_{n}(X_{n+1})\right) \ge 1 - \alpha
  \end{align*}
  for a level $\alpha$, that is, the confidence set contains
  the true outcome $Y_{n + 1}$ with probability at least $1 - \alpha$.
  To write this as finding stationary points in a convex loss minimization
  problem, take the standard choice~\cite{LeiWa14,BarberCaRaTi21a,
    AngelopoulosBa23} of an interval of $\pm \tau$ around $h(x)$,
  \begin{equation*}
    C_\tau(x) \defeq \left\{y \in \R \mid |y - h(x)| \le \tau\right\}
    = \left[h(x) - \tau, h(x) + \tau\right].
  \end{equation*}
  Written as loss minimization, taking the quantile loss
  $\loss_z(\tau) = \alpha \hinge{\tau - z} + (1 - \alpha) \hinge{z - \tau}$ on
  the errors $z = |h(x) - y|$ and population
  loss $\poploss_P(\tau) = \E_P[\loss_Z(\tau)]$,
  the corresponding subdifferential is
  \begin{align*}
    \partial \poploss_P(\tau)
    = \alpha P(\tau > Z)
    - (1 - \alpha) P(\tau < Z)
    + [-(1 - \alpha), \alpha] \cdot P(\tau = Z),
  \end{align*}
  so $\tau\opt \in \argmin_\tau \E_P[\loss_Z(\tau)]$ if and only if $1 -
  \alpha \le P(Z \le \tau\opt)$ and $P(Z < \tau\opt) \le 1 - \alpha$.
  The near-stationarity condition $\dist(0, \partial \poploss_P(\tau)) \le
  \epsilon$ implies $1 - \alpha - \epsilon \le P(\tau \ge Z)$, that is,
  any $\tau$ satisfying~\eqref{eqn:stationary-point-goal}
  guarantees the predictive coverage
  \begin{align*}
    \P\left(Y \in C_\tau(X)\right)
    = \P(\left|Y - h(X)\right| \le \tau) \ge 1 - \alpha - \epsilon.
  \end{align*}
  So $\epsilon$-stationarity~\eqref{eqn:stationary-point-goal}
  implies valid confidence sets for predictions.
\end{example}

\begin{example}[Conditional prediction sets]
  To obtain approximately
  ``conditional on $X$'' guarantees~\cite{GibbsChCa25,
    Duchi25}, for a feature mapping $\phi(x) \in \R^d$, one may consider
  confidence sets of the form $C_\theta(x) = \{y : |y - h(x)|
  \le \theta^\top \phi(x)\}$.
  %
  One exemplar consequence of this follows:
  let $\mc{W}$ consist of the nonnegative
  functions $w(x) = v^\top \phi(x)$, $v \in \R^d$, and define the weighted
  distribution $\P_w(A) = \E[w(X) \indic{X \in A}] / \E[w(X)]$.
  Then taking $z$ to be errors $z = |h(x) - y|$ and the quantile loss
  $\loss_z(\tau) = \alpha \hinge{\tau - z} + (1 - \alpha) \hinge{z - \tau}$,
  \citet{GibbsChCa25} observe
  \begin{observation}
    \label{observation:conditional-coverage}
    Let $\poploss_P(\theta) = \E_P[\loss_Z(\phi(X)^\top\theta)]$.
    If $\dist(0, \partial \poploss_P(\theta)) \le \epsilon$, then
    \begin{align*}
      \P_w\left(Y \in C_\theta(X)\right)
      \ge 1 - \alpha - \epsilon  \cdot \frac{\ltwo{v}} {\E[w(X)]}
    \end{align*}
    for all nonnegative weight functions $w(x) = v^\top \phi(x)$.
  \end{observation}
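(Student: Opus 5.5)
The plan is to write out the subdifferential of $\poploss_P$ via the chain rule and the formula for the quantile-loss subgradient, and then pair a small element of it with the vector $v$. For $\loss_z(\tau) = \alpha \hinge{\tau - z} + (1-\alpha)\hinge{z - \tau}$ we have $\partial \loss_z(\tau) = \alpha$ if $\tau > z$, $-(1-\alpha)$ if $\tau < z$, and $[-(1-\alpha), \alpha]$ if $\tau = z$. Since each $\theta \mapsto \loss_Z(\phi(X)^\top \theta)$ is finite and convex, subdifferentiation and expectation can be interchanged; this is the standard result for finite convex integrands, and it needs only a mild integrability condition on $\phi(X)$, which we assume so that $\poploss_P$ is finite. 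Under this interchange, every $g \in \partial \poploss_P(\theta)$ has the form $g = \E[\phi(X) s(X,Y)]$ for a measurable selection $s(X,Y) \in \partial \loss_Z(\phi(X)^\top\theta)$ with $Z = |Y - h(X)|$.

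Next, take $g \in \partial\poploss_P(\theta)$ with $\ltwo{g} \le \epsilon$. By Cauchy--Schwarz, $\<v, g\> \le \epsilon \ltwo{v}$, and also $\<v, g\> = \E[w(X) s(X,Y)]$. Now lower bound $s$ pointwise. When $Z \le \phi(X)^\top\theta$, that is, $Y \in C_\theta(X)$, we have $s \ge -(1-\alpha)$; at equality the selection lies in $[-(1-\alpha),\alpha]$, so this bound still holds. When $Z > \phi(X)^\top\theta$ we have $s = \alpha$. Hence
\begin{equation*}
  s(X,Y) \ge \alpha - \indic{Y \in C_\theta(X)},
\end{equation*}
and the two cases check out as $-(1-\alpha) = \alpha - 1$ and $\alpha = \alpha - 0$. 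Since $w \ge 0$, multiplying through gives
\begin{equation*}
  \epsilon\ltwo{v} \ge \E[w(X) s] \ge \alpha \E[w(X)] - \E[w(X)\indic{Y \in C_\theta(X)}].
\end{equation*}
Dividing by $\E[w(X)] > 0$ yields $\P_w(Y \in C_\theta(X)) \ge \alpha - \epsilon\ltwo{v}/\E[w(X)]$.

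This is the wrong direction: it gives a bound with $\alpha$ where the target has $1-\alpha$. I would resolve it by rechecking the sign convention of the loss. With $\loss_z(\tau) = \alpha\hinge{\tau - z} + (1-\alpha)\hinge{z-\tau}$, the minimizer satisfies $P(Z \le \tau) \ge 1-\alpha$, as the previous example states. Redoing the pointwise bound with that in mind, the correct inequality is the upper bound $s \le \alpha - \indic{Y\notin C_\theta(X)}$, that is, $s \le \alpha$ on $\{Z \le \phi(X)^\top\theta\}$ and $s = -(1-\alpha) = \alpha - 1$ on the miss event. This means the useful pairing is with $-g$. Using $\<v, g\> \ge -\epsilon\ltwo{v}$ together with $\E[w s] \le \alpha\E[w] - \E[w \indic{Y \notin C_\theta(X)}]$ gives
\begin{equation*}
  \P_w(Y \notin C_\theta(X)) \le \alpha + \epsilon\ltwo{v}/\E[w(X)],
\end{equation*}
which is exactly the claimed bound.

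The main obstacle is justifying the interchange $\partial \E = \E \partial$ and the existence of a measurable selection; I would cite the standard result for finite convex integrands. A secondary point is the degenerate case $\E[w(X)] = 0$, where the bound is vacuous or undefined and we simply exclude it.
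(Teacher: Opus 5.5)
Your final argument is correct and is essentially the paper's proof. The paper likewise pairs a subgradient $u$ with $\ltwo{u} \le \epsilon$ against $v$ and writes the tie-set selection as $\alpha$ minus something in $[0,1]$, so that, using $w \ge 0$, $u^\top v \le \alpha \E[w(X)] - \E[w(X) \indic{Z > \theta^\top \phi(X)}]$; this is exactly your pointwise bound $s \le \alpha - \indic{Y \notin C_\theta(X)}$, and the paper then rearranges with $u^\top v \ge -\epsilon \ltwo{v}$. You should delete the first attempt, whose failure was a case error rather than a sign-convention issue: on $\{Z > \phi(X)^\top\theta\}$ your own formula gives $s = -(1-\alpha)$, not $\alpha$, so the claimed bound $s \ge \alpha - \indic{Y \in C_\theta(X)}$ is false on the miss event.
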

  \noindent
  (Appendix~\ref{sec:proof-conditional-coverage} includes a proof for
  completeness.)
  Once again, $\epsilon$-stationary points~\eqref{eqn:stationary-point-goal}
  imply predictive validity.
\end{example}


With these examples as motivation, we see that we truly wish to obtain
(approximately) stationary points $\what{x}$: not points that approximately
minimize $\poploss_P$, or are close to a minimizer, but truly
satisfy $\dist(0, \partial \poploss_P(\what{x})+\normalcone_X(\what{x})) \to 0$.
We thus return to the general problem of finding an $\epsilon$-stationary
point of a convex objective given a sample $Z_1, \ldots, Z_n \simiid P$.


\subsection{Problem framing and background}
\label{sec:related-work}


\subsubsection{Monotone operator formulation}

The stationarity target~\eqref{eqn:stationary-point-goal} fits within
the more general setting of monotone operators and variational
inequalities~\cite{RockafellarWe98, BauschkeCo17}.
A set-valued operator $A:\R^d \toto \R^d$ is monotone on
$X \subset \R^d$ if every $x, y \in X$, $u \in A(x)$, and $v\in A(y)$
satisfies $\< u-v, x-y\> \geq 0$, with
$\dom A = \{x \mid A(x) \neq \emptyset\}$ and graph
$\gr A = \{(x, u) \mid u \in A(x)\}$.
It is maximally monotone if no other monotone operator has graph
strictly containing $\gr A$.
When $F$ is a proper closed convex function, its subdifferential
$\partial F$ is maximal monotone~\cite[Theorem
12.17]{RockafellarWe98}.
%
%
The more general framing for
problem~\eqref{eqn:stationary-point-goal}, and that in which we state
our results, thus follows: let $X \subset \R^d$ be a closed convex
set.
Then for a maximal monotone operator $A$, we seek $x\opt \in X$ for
which there exists $u \in A(x\opt)$ such that $\<u, x - x\opt\> \ge 0$
for all $x \in X$.
For the normal cone
$\normalcone_X(x) = \{v \mid \<v, y - x\> \le 0, ~ \mbox{all~}y \in X\}$,
we equivalently~\cite[Ex.~6.13]{RockafellarWe98} seek $x\opt$
satisfying
\begin{align*}
  0 \in A(x\opt) + \normalcone_X(x\opt).
\end{align*}
Within deterministic optimization, Rockafellar~\citep{Rockafellar76}
shows that for maximal monotone operators, the proximal point
algorithm in quite some generality generates iterates $x_k$ satisfying
$\dist(0, A(x_k) + \normalcone_X(x_k)) \to 0$.

The statistical setting appears to introduce substantial difficulty.
We assume we have a collection of maximal monotone mappings
$A_z(\cdot) : \R^d \toto \R^d$ for $z \in \mc{Z}$.
We will not overly concern ourselves with measurability, but follow
standard practice~\cite[e.g.][]{Bianchi16} to assure that integrals
have appropriate definitions.
The standard Aumann integral~\cite[Ch.~8]{AubinFr90} defines
\begin{equation*}
  A_P(x) \defeq \int A_z(x) dP(z)
  = \left\{\int a_z(x) dP(z) \mid a_z(x)
  ~ \mbox{is~a~measurable~selection~of}~ A_z(x)\right\},
\end{equation*}
meaning that $z \mapsto a_z(x)$ is measurable and $a_z(x) \in A_z(x)$
for $P$-almost-all $z$.
We assume $A_P$ is maximal monotone over $X$, that is,
$A_P + \normalcone_X$ is maximal monotone.
(When $A_z(x) = \partial \loss_z(x)$ is the subdifferential of a
convex function, these conditions hold essentially
automatically~\cite{Bertsekas73}.)

\subsubsection{The failure of uniform subdifferential convergence}

Of course, we only have sample access to $P$ via $Z_i \simiid P$.
Letting $P_n = \frac{1}{n} \sum_{i = 1}^n \pointmass_{Z_i}$ denote the
empirical distribution and
\begin{align*}
  \dhaus(X, Y) = \max\Big\{\sup_{x \in X} \inf_{y \in Y}
  \ltwo{x - y}, \sup_{y \in Y} \inf_{x \in X} \ltwo{x - y}\Big\}
\end{align*}
denote the Hausdorff distance between sets $X$ and $Y$, the most
natural notion~\cite{ShapiroXu07, TianRo25} of convergence of
$A_{P_n}$ to $A_P$ would be that
\begin{equation}
  \label{eqn:uniform-convergence-subdifferentials}
  \sup_{x\in X}\dhaus(\popop(x),\empop(x)) \to 0
\end{equation}
(in probability, expectation, or almost surely).
If we had such convergence, the empirical solution $\what{x}_n \in X$
satisfying $0 \in \empop(\what{x}_n) + \normalcone_X(\what{x}_n)$
would satisfy
$\dist(0, \popop(\what{x}_n) + \normalcone_X(\what{x}_n)) \to 0$.
(Indeed, let $w_n \in \normalcone_X(\what{x}_n)$ and
$a_n \in \empop(\what{x}_n)$ satisfy $0 = a_n + w_n$.
Then there is $a_n\opt \in \popop(\what{x}_n)$ satisfying
$\norms{a_n\opt - a_n} \to 0$, and so
$a_n\opt + w_n = a_n + w_n + o(1) \to 0$.)

Unfortunately, the
convergence~\eqref{eqn:uniform-convergence-subdifferentials} cannot
generally hold.
In all dimensions $d \ge 2$, even subdifferentials of convex functions
$\loss_z : \R^d \to \R$ exhibit quite spectacular failures of
convergence~\cite{TianRo25}.
Indeed, \citet{TianRo25} exhibit a collection of $1$-Lipschitz convex
functions $\loss_z : \R^2 \to \R$ and points $x_n$, which are
functions of the sample $P_n$, satisfying that
\begin{align*}
  \dhaus(\partial \poploss_{P_n}(x_n), \partial \poploss_P(x_n))
  \ge \half
  ~~~ \mbox{with~probability~1}
\end{align*}
for all sample sizes $n$.
In Section~\ref{sec:baby-challenges}, we give a simple example,
motivated by theirs, for which an empirical minimizer satisfies the
same failure.

\subsubsection{Existing stationarity guarantees}

These difficulties inspire several alternative approaches.
One restricts the class of functions under consideration.
For example, \citet{ShapiroXu07} show that if the population objective
$\poploss_P$ is differentiable (which, by convexity, implies
continuous differentiability), then uniform
convergence~\eqref{eqn:uniform-convergence-subdifferentials} holds.
\citet{Ruan25} explores a related approach, showing that classes of
functions whose subdifferentials have finite Vapnik-Chervonenkis
dimension enjoy uniform convergence, even in weakly convex settings.
In this paper, we make no structural assumptions on the function class
$\{\loss_z\}_{z \in \mc{Z}}$, and so cannot achieve uniform
convergence~\eqref{eqn:uniform-convergence-subdifferentials}, but show
how a randomized proximal-point sampling method achieves
stationarity~\eqref{eqn:stationary-point-goal}.

The other main approach focuses on weaker guarantees.
These rely on one of several essentially equivalent conditions, which
roughly show that an algorithm generates iterates near approximately
stationary points.
This viewpoint follows classically from Ekeland's variational
principle~\citep{Ekeland74} and the Br{\o}ndsted-Rockafellar
theorem~\citep{BrondstedRo65}; in keeping with our development, we
describe the approach via proximal operators.

For $\lambda > 0$ and a convex function $F$ or, respectively, maximal
monotone operator $A$, let
\begin{align*}
  \proxmap_{\lambda F}(x) \defeq \argmin_y \left\{\lambda F(y)
  + \half \ltwo{y - x}^2
  \right\}
  ~~ \mbox{and} ~~
  \proxmap_{\lambda A}(x) \defeq (I + \lambda A)^{-1}(x).
\end{align*}
%
Standard calculations show that if
$x_\lambda = (I + \lambda A)^{-1}(x_0)$, i.e.\
$\frac{1}{\lambda} (x_0 - x_\lambda) \in A(x_\lambda)$, then
$\frac{1}{\lambda} \ltwo{x_\lambda - x_0} \le \epsilon$ implies that
$\dist(0, A(x_\lambda)) \le \epsilon$.
Conversely, if $\dist(0, A(x_0)) \le \epsilon$, then using the
monotonicity of $A$ and letting $v_0 \in A(x_0)$ satisfy
$\ltwo{v_0} \le \epsilon$, the containment
$\frac{1}{\lambda} (x_0 - x_\lambda) = v_\lambda \in A(x_\lambda)$
implies that
\begin{align*}
  \ltwo{x_0 - x_\lambda}^2
  = \lambda \<x_0 - x_\lambda, v_\lambda\>
  \le \lambda \<x_0 - x_\lambda, v_0\>
  \le \lambda \ltwo{x_0 - x_\lambda} \epsilon
  ~~~ \mbox{so} ~~~
  \frac{1}{\lambda} \ltwo{x_0 - x_\lambda} \le \epsilon.
\end{align*}
%
%
Relatedly, the Moreau envelope
$F_\lambda(x) \defeq \inf_y \{F(y) + \frac{1}{2\lambda} \ltwo{x - y}^2\}$
of $F$ satisfies
\begin{align*}
  \nabla F_\lambda(x)
  = \frac{1}{\lambda} (x - \proxmap_{\lambda F}(x)).
\end{align*}
(See \cite{Moreau65} or \cite[Ch.~12.4]{BauschkeCo17}.)
So if $\ltwo{\nabla F_\lambda(x)} \le \epsilon$ then
$x^+ = \proxmap_{\lambda F}(x)$ satisfies
\begin{equation*}
  \ltwos{x^+ - x} \le \lambda \epsilon,
  ~~
  \dist(0, \partial F(x^+)) \le \epsilon,
  ~~ \mbox{and} ~~
  F(x^+) \le F(x),
\end{equation*}
and conversely, if $\dist(0, \partial F(x)) \le \epsilon$ then
$\ltwo{\nabla F_\lambda(x^+)} \le \epsilon$.

These implications motivate a substantial literature in stochastic and
non-stochastic optimization, with a special focus on weakly convex
functions, where researchers have provided quantitative guarantees on
the rates of convergence of
$\lambda^{-1} \ltwos{x_k - \proxmap_{\lambda \poploss_P}(x_k)}$ of
iterates $x_k$ from, for example, Gauss-Newton and prox-linear
methods, stochastic gradient methods, and stochastic model-based
minimization~\cite{DrusvyatskiyPa19, DavisGr19, DavisDr19, AsiDu19}.
\citet{DavisDr22} extend these techniques to demonstrate the empirical
convergence
\begin{equation*}
  \sup_{x \in X} \ltwo{\nabla (\poploss_P)_\lambda(x)
    - \nabla (\poploss_{P_n})_\lambda(x)} = O\left(\sqrt{\frac{d}{n}
    \log \frac{n \cdot \diam(X)}{\lambda}}\right),
\end{equation*}
of the Moreau envelopes of $\poploss_P$ and $\poploss_{P_n}$ with high
probability, which in turn implies~\cite[Thm.~5.1]{DavisDr22} the
graphical convergence that
$\dhaus(\gr \partial \poploss_{P_n}, \gr \partial \poploss_P) \to 0$.
%
Related approaches~\cite{Bianchi16, CombettesMa25} show convergence of
iterates of stochastic approximation-type methods to stationary
points, that is, they show results of the form
$\dist(x_k, \popop^{-1}(0)) \to 0$ in quite general settings.
These guarantees differ fundamentally from the stronger
notion~\eqref{eqn:stationary-point-goal} of stationarity we seek,
which requires procedures to return a point with small stationary
residual: they only ensure that a neighborhood of a chosen point
contains a nearly stationary point.
As the absolute value $F(x) = |x|$ makes clear, even for $1$-Lipschitz
functions $\ltwo{x - \proxmap_F(x)}$ can be arbitrarily small while
$\ltwos{\nabla F(x)} = 1$.




When the random operators $A_z$ are single-valued and appropriately
Lipschitz continuous, additional guarantees are possible.
Particular references include \citet{JuditskyNeTa11} and
\citet{KotsalisLaLi22}, who provide convergence guarantees for
stochastic mirror-prox and operator extrapolation methods.
Similarly, \citet{FosterSeShSrWo19} provide several results for
finding stationary points of smooth convex functions, with
finite-sample guarantees so long as $\nabla \loss_z$ is Lipschitz
continuous for each $z$.
Our results assume none of this structure.

\subsection{An exemplar challenge in finding nearly stationary points}
\label{sec:baby-challenges}

\newcommand{\separation}{\Delta}

To highlight the difficulty of finding stationary points, we find it
instructive to exhibit convex functions for which empirical risk minimizers
are non-stationary at all sample sizes.
We take \citet[Thm.~3]{TianRo25} as motivation and
construct an example with the property that $\poploss_{P_n}$
has an empirical minimizer whose population
gradient is large with probability 1.

Consider a countable collection
$\{u_j\}_{j \in \N} \subset \sphere^{d-1}$
satisfying the non-alignment condition
\begin{align}
  \label{eqn:non-alignment}
  \separation_i \defeq \sup\left\{ \<u_i, u_j\> \mid j \neq i\right\} < 1.
\end{align}
(See Appendix~\ref{sec:construct-non-alignment} for a construction.)
Then for $z \in \{0, 1\}^\N$, define the
$1$-Lipschitz function
\begin{equation}
  \label{eqn:bad-losses}
  \loss_z(x) \defeq \sup_{j \in \N} z_j \hinge{\<x, u_j\> - 1},
\end{equation}
which is convex, nonnegative, and $z$-measurable.
Empirical samples of these losses fail to have
uniform convergence of the subdifferential and
have highly non-stationary minimizers:
\begin{proposition}
  \label{proposition:bad-losses}
  Let $0 < \delta < 1$ and
  $Z \in \{0, 1\}^\N$ have i.i.d.\ $\bernoulli(1 - \delta)$ coordinates.
  For the $1$-Lipschitz convex losses~\eqref{eqn:bad-losses}, for each
  sample size $n$, with probability $1$ over the sample $P_n$
  there exists an empirical risk
  minimizer $\what{x}_n$ and vector $u_j \in \sphere^{d-1}$ for which
  \begin{equation*}
    \nabla \poploss_{P_n}(\what{x}_n) = 0
    ~~ \mbox{while} ~~
    \nabla \poploss_P(\what{x}_n) = (1 - \delta) u_j.
  \end{equation*}
\end{proposition}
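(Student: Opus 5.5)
With probability one, some coordinate $j$ vanishes in all $n$ samples while the population still activates it. I will place $\what{x}_n$ just beyond the hyperplane $\<x, u_j\> = 1$, in a region where only the $j$th piece can be active.

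\textbf{Choosing $j$.} Since the coordinates of $Z_1, \ldots, Z_n$ are i.i.d.\ across $j$, the events $E_j = \{Z_{i,j} = 0 \text{ for all } i \le n\}$ are independent with probability $\delta^n > 0$ each. Borel--Cantelli (second lemma) then gives, with probability $1$, an index $j$ (indeed infinitely many) with $Z_{i,j} = 0$ for every $i$. Fix such a $j$.

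\textbf{Choosing $\what{x}_n$ and computing the gradients.} Take $\what{x}_n = t u_j$ with $1 < t < 1/\separation_j$. This interval is nonempty because $\separation_j < 1$; when $\separation_j \le 0$, any $t > 1$ works.

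For $k \neq j$ we have $\<t u_j, u_k\> \le t \separation_j < 1$. The inequality is strict and holds uniformly in $k$, so by continuity there is a ball $B$ around $\what{x}_n$ on which $\<x, u_k\> < 1$ for all $k \neq j$ simultaneously. The uniform margin $1 - t\separation_j > 0$ gives a radius $r$ with $r < 1 - t\separation_j$. On $B$, every term $k \ne j$ of the supremum~\eqref{eqn:bad-losses} vanishes, so
\begin{equation*}
  \loss_z(x) = z_j \hinge{\<x, u_j\> - 1} \quad \text{on } B.
\end{equation*}
Shrinking $r$ below $t - 1$ keeps $\<x, u_j\> > 1$ on $B$, so $\loss_z(x) = z_j(\<x, u_j\> - 1)$ there, which is affine in $x$.

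Averaging, we get $\poploss_{P_n}(x) = \frac{1}{n}\sum_i Z_{i,j} (\<x,u_j\> - 1) = 0$ on $B$ by the choice of $j$. Hence $\nabla \poploss_{P_n}(\what{x}_n) = 0$. The population loss on $B$ equals $\E[Z_j](\<x,u_j\>-1) = (1-\delta)(\<x,u_j\>-1)$, so $\nabla \poploss_P(\what{x}_n) = (1-\delta) u_j$. Interchanging expectation and the local identity is legitimate because the identity holds pointwise in $z$ on all of $B$.

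\textbf{ERM property.} Each $\loss_z \ge 0$, so $\poploss_{P_n} \ge 0$ everywhere. Since $\poploss_{P_n}(\what{x}_n) = 0$, the point $\what{x}_n$ is an empirical risk minimizer.

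\textbf{Main obstacle.} The only delicate step is ensuring the infinitely many pieces $k \neq j$ stay inactive on a whole neighborhood of $\what{x}_n$, which is what makes $\loss_z$ differentiable there. The supremum in~\eqref{eqn:non-alignment} provides exactly the uniform margin needed, so I expect this to go through cleanly. Measurability of the selection $j$ is handled by choosing, for instance, the smallest index with $E_j$.
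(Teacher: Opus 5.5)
Your proof is correct and follows essentially the paper's argument. Both pick, via independence across coordinates and Borel--Cantelli, a coordinate $j$ that vanishes in every sample. Both place $\what{x}_n = t u_j$ with $1 < t < 1/\Delta_j$ (any $t > 1$ if $\Delta_j \le 0$), and use the non-alignment margin to see that near $\what{x}_n$ only the $j$th piece is active, so $\poploss_{P_n} \equiv 0$ and $\poploss_P(x) = (1-\delta)(\langle x, u_j\rangle - 1)$ there. You are somewhat more explicit than the paper about the radius $r < \min\{t-1,\, 1 - t\Delta_j\}$ and about why nonnegativity of $\loss_z$ makes $\what{x}_n$ an empirical minimizer.
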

\begin{proof}
  With probability 1,
  there exists a coordinate $j$ for which $P_n(Z_j = 0) = 1$.
  Along this coordinate, we have $\loss_z(t u_j) = z_j \hinge{t - 1}$
  for $0 \le t < 1 / \hinge{\separation_j}$, and
  for each $t u_j$ with $0 \le t < 1 / \hinge{\separation_j}$,
  there exists a neighborhood $U$ of $t u_j$
  such that all $x \in U$ satisfy
  $\loss_z(x)  = z_j \hinge{\<u_j, x\> - 1} = 0$.
  Evidently, $\poploss_{P_n}(x) = 0$ and $\nabla \poploss_{P_n}(x) = 0$ for
  such $x$.
  On the other hand, if $1 < t < 1 / \hinge{\separation_j}$,
  then on a small enough neighborhood $U$ of $t u_j$, the population
  loss $\poploss_P(x) = \E_P[Z_j] (\<u_j, x\> - 1)
  = (1 - \delta) (\<u_j, x\> - 1)$
  satisfies
  $\nabla \poploss_P(x) = (1 - \delta) u_j$.
\end{proof}

Because the functions~\eqref{eqn:bad-losses} are $1$-Lipschitz, the
non-stationarity Proposition~\ref{proposition:bad-losses} exhibits is
essentially as bad as possible.
Moreover, even though along each line $\{t u\}_{t \in \R}$, $u \in
\sphere^{d-1}$ the directional derivatives exhibit uniform convergence
$\sup_{t \in \R} |\poploss_{P_n}'(t u; u) - \poploss_P'(t u; u)| \cp 0$,
this convergence fails along many directions.
This suggests some of the difficulties in showing that ``standard''
algorithms, such as regularized empirical minimization or proximal point
methods, could generate iterates satisfying $\dist(0, \nabla
\poploss_P(\what{x}_n)) \cp 0$: any such analysis (we have tried many)
would need to rely on structural properties of convex functions beyond those
that are, at least to us, apparent.
Accordingly, we take a different approach.



\section{The main result}

\newcommand{\lebesgue}{\mathsf{Leb}}  
\newcommand{\statpoints}{S}  
\newcommand{\resolvent}{\proxmap}  

Our main technical result relies on the
proximal point mapping for the sampled
objective $\poploss_{P_n}$.
In brief, we show that the set of points  $x \in X$ for which
the proximal point update
\begin{align*}
  x_\lambda \defeq \proxmap_{\lambda \poploss_{P_n}}(x)
  = \argmin_{y \in X} \left\{\poploss_{P_n}(y)
  + \frac{1}{2 \lambda} \ltwo{y - x}^2 \right\}
\end{align*}
falls into the set of $\epsilon$-stationary points has
strictly positive (Lebesgue) measure for any $\lambda > 0$.
More precisely, denote the set of $\epsilon$-stationary points for
the population objective $\poploss_P$ by
\begin{align*}
  \statpoints_P(\epsilon) \defeq \left\{x \in X \mid
  \dist(0, \partial \poploss_P(x) + \normalcone_X(x)) \le \epsilon \right\}.
\end{align*}
Then as a corollary of Theorem~\ref{theorem:lebesgue-measure-positive} to
follow, we have the following convergence result,
where $\lebesgue$ denotes ($d$-dimensional) Lebesgue measure.
\begin{corollary}
  \label{corollary:lebesgue-function-case}
  Assume that each $\loss_z$ is
  $\lipconst(z)$-Lipschitz over $X$, where $\E_P[\lipconst^2(Z)] < \infty$,
  and that $X$ is compact.
  Let $\epsilon > 0$.
  %
  %
  Then there exists a positive $c > 0$, depending on $X$,
  $\epsilon$, $P$, and $\{\loss_z\}_{z \in \mc{Z}}$, for which
  \begin{equation*}
    \P\left(\lebesgue\left(\left\{x \in X + 4 \epsilon \ball
    \mid \proxmap_{\poploss_{P_n}}(x)
    \in \statpoints_P(\epsilon)\right\}\right) \ge c \right) \to 1.
  \end{equation*}
\end{corollary}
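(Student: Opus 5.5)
The plan is to work directly from the subdifferential graph and get a deterministic Lebesgue-measure statement for $\partial \poploss_P + \normalcone_X$. Then I show that the empirical proximal map $\proxmap_{\poploss_{P_n}}$ (with $\lambda = 1$) sends a set of uniformly positive measure into $\statpoints_P(\epsilon)$. The corollary is the special case $A_z = \partial \loss_z$ of Theorem~\ref{theorem:lebesgue-measure-positive}. Still, I would organize the argument so that it only uses maximal monotonicity plus the Lipschitz and compactness assumptions.

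\textbf{Step 1 (population geometry).} Let $A = \partial \poploss_P + \normalcone_X$, which is maximal monotone. By Minty's theorem, $x \mapsto (\proxmap_A(x), x - \proxmap_A(x))$ is a bi-Lipschitz parametrization of $\gr A$ by $\R^d$. Hence the set $G_\epsilon = \{x \in \R^d : \ltwo{x - \proxmap_A(x)} \le \epsilon/2\}$ has positive measure. It contains a neighborhood of $\argmin_X \poploss_P + (\epsilon/2)\ball$ intersected with suitable normal directions; more simply, it contains $x\opt + v$ for $x\opt$ a minimizer and $v$ ranging over a small ball when $\normalcone$ is trivial. In general I would take $x\opt + \frac{\epsilon}{4}\ball$ via firm nonexpansiveness: $\ltwo{x - \proxmap_A(x)} \le \ltwo{x - x\opt}$ because $\proxmap_A(x\opt) = x\opt$. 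So $G_\epsilon \supset x\opt + \frac{\epsilon}{2}\ball \subset X + 4\epsilon\ball$, which has measure $c_0 > 0$. Moreover, every $x \in G_\epsilon$ has $\proxmap_A(x) \in \statpoints_P(\epsilon/2)$.

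\textbf{Step 2 (transfer to the sample).} I need $\proxmap_{\poploss_{P_n}}(x) \in \statpoints_P(\epsilon)$ for most $x$ in this ball. Write $y_n = \proxmap_{\poploss_{P_n}}(x)$, so that $x - y_n \in \partial\poploss_{P_n}(y_n) + \normalcone_X(y_n)$. The difficulty, as the paper stresses via Proposition~\ref{proposition:bad-losses}, is that $\partial \poploss_{P_n}(y_n)$ need not be close to $\partial\poploss_P(y_n)$. The Moreau-envelope gradient convergence of \citet{DavisDr22} gives $\sup_x \ltwo{\proxmap_{\poploss_{P_n}}(x) - \proxmap_{\poploss_P}(x)} \to 0$ in probability (using Lipschitz and compactness). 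However, closeness of points does not give stationarity. The key idea instead is a dimension-theoretic one. By Alexandrov/Zajíček-type results, the set of $y$ where $\partial \poploss_P(y) + \normalcone_X(y)$ is not a singleton and the map is not "locally stable" lies on a countable union of lower-dimensional Lipschitz graphs. I would therefore decompose $\gr A$ into pieces $\{(y,u): \dim \text{ of the face through } y = k\}$. I would then show that, for $x$ outside a measure-small set, the empirical resolvent lands in a piece on which the sampled subgradient element $x - y_n$ is a limit of population subgradients. That follows from the pointwise law of large numbers on the finitely many relevant subgradient selections together with the outer semicontinuity of $\gr A$. Concretely, I would use that $\{x : \proxmap_{\poploss_{P_n}}(x) \notin \statpoints_P(\epsilon)\} \cap (x\opt + \frac{\epsilon}{2}\ball)$ has measure tending to zero in probability. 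I would establish this by applying Fubini and dominated convergence to the indicator, using pointwise-in-$x$ convergence of $\dist(0, A(y_n))$ for Lebesgue-a.e.\ $x$. That pointwise a.e.\ statement is where the dimension theory enters: a.e.\ $x$ maps, through $\proxmap_A$, into a stratum where $A$ is graphically stable.

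\textbf{Main obstacle.} The hard step is precisely that a.e.\ pointwise stability. I have to show that for Lebesgue-almost every $x$, the graph of $A$ near $(\proxmap_A(x), x - \proxmap_A(x))$ is a locally Lipschitz manifold piece that the empirical graphs $\gr(\partial\poploss_{P_n} + \normalcone_X)$ approximate in a uniform, not merely Hausdorff-graphical, sense. I would try the Minty parametrization combined with Rademacher's theorem applied to $\proxmap_A$ (which is 1-Lipschitz and hence differentiable a.e.). At points of differentiability, the graph has a tangent plane, and empirical resolvents converge with their $x - y_n$ residuals controlled by $\ltwo{x - \proxmap_A(x)} + o(1) \le \epsilon$. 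Given this, combining with Step 1 yields $c = c_0/2$ and the claimed probability tending to $1$.
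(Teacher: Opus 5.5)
Your passing remark that the corollary is the case $A_z = \partial \loss_z$, $\lambda = 1$, of Theorem~\ref{theorem:lebesgue-measure-positive} is exactly how the paper obtains it. To finish that way you would take $B' \supset X$ so that $\normalcone_{X \cap B'} = \normalcone_X$, note that $x^0 + 4\epsilon\ball \subset X + 4\epsilon\ball$, and check Assumption~\ref{assumption:basic-vi-setting} with $p = 2$ from the Lipschitz and compactness hypotheses.

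The self-contained argument you actually propose has a genuine gap in Step~2. There you aim to show $\mathsf{Leb}(\{x \in x\opt + \frac{\epsilon}{2}\ball \mid \proxmap_{\poploss_{P_n}}(x) \notin S_P(\epsilon)\}) \to 0$ in probability, i.e., asymptotically \emph{full} measure in a fixed ball. That is strictly stronger than the corollary, and it would answer affirmatively (for $\lambda = 1$) the first open question in the paper's Discussion. Your sketched mechanism does not deliver it:
\begin{itemize}
\item For fixed $x$, the point $y_n = \proxmap_{\poploss_{P_n}}(x)$ depends on the sample. So the fixed-point convergence of Lemma~\ref{lemma:empirical-subdifferential-pointwise} cannot be applied at $y_n$, and this is exactly where uniform convergence breaks (Propositions~\ref{proposition:bad-losses} and~\ref{proposition:prox-challenges}).
\item Outer semicontinuity runs the wrong way: it places limits of elements of $A(y_n)$ inside $A(y)$, but never produces a small element of $A(y_n)$.
\item Rademacher differentiability of the \emph{population} resolvent says nothing about how the empirical graph sits relative to $\gr A$.
\item Your closing bound controls $\ltwo{x - y_n}$. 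That is the norm of an element of $\partial\poploss_{P_n}(y_n) + \normalcone_X(y_n)$, not of $A(y_n) = \partial\poploss_P(y_n) + \normalcone_X(y_n)$ --- the very conflation you flagged a paragraph earlier.
\end{itemize}

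The missing idea is to reverse the direction and settle for positive rather than full measure: never evaluate anything at the random point $y_n$. Instead, fix deterministic base points $y \in \res{1}{A}(K) \subset S_P(\epsilon)$, with $K = x^0 + \epsilon\ball$. At such fixed points, pointwise Hausdorff convergence of $\partial\poploss_{P_n}(y) + \normalcone_X(y)$ to $\partial\poploss_P(y) + \normalcone_X(y)$ (after truncation to a ball) is legitimate. Then show that the empirical graph piece $G_n = \gr(\partial\poploss_{P_n} + \normalcone_X) \cap (\res{1}{A}(K) \times 2\epsilon\ball)$ has $\hausmeasure^d(G_n) \ge c$ with high probability.

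Because $\res{1}{A}(K)$ may be lower dimensional (a single point at a sharp minimum), this needs Lemma~\ref{lemma:splitting-hausdorff-slices}. That lemma gives a compact base $X_0$ with $0 < \hausmeasure^{d-k}(X_0) < \infty$ whose fibers contain $k$-simplices of volume at least $\epsilon$. These simplices survive sampling at $\hausmeasure^{d-k}$-most base points (Lemma~\ref{lemma:empgset-is-large-whp}), and Federer's Fubini-type inequality turns that into the lower bound on $\hausmeasure^d(G_n)$.

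Finally, the \emph{empirical} inverse Minty map $(y, v) \mapsto y + v$ is bi-Lipschitz. It carries $G_n$ into $\proxmap_{\poploss_{P_n}}^{-1}(S_P(\epsilon)) \cap (x^0 + 4\epsilon\ball)$, which therefore has Lebesgue measure at least $2^{-d/2} c$. Your Step~1 remains useful, but only through the image $\res{1}{A}(K) \subset S_P(\epsilon)$, which supplies the base points. It does not serve as a set of inputs fed to the empirical resolvent.
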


Corollary~\ref{corollary:lebesgue-function-case}
leads to a simple and naive convergent algorithm:
assume we have two independent samples $P_n$ and $P_n'$, drawn i.i.d.\
$P$.
Draw $u^{(i)}$ uniformly from $X + 4 \epsilon \ball$, $i = 1, \ldots, m$,
and set $x^{(i)} = \proxmap_{\poploss_{P_n}}(u^{(i)})$.
Then report the index $i$ minimizing
the empirical subdifferential on the second sample,
\begin{align*}
  \what{x}_n = x^{(\what{i})}
  ~~~ \mbox{for} ~~~
  \what{i}
  \defeq \argmin_i \dist\left(0,
  \partial \poploss_{P_n'}(x^{(i)}) + \normalcone_X(x^{(i)})
  \right).
\end{align*}
Because empirical subdifferentials converge pointwise (see
Lemma~\ref{lemma:empirical-subdifferential-pointwise} to come), a quick
argument via the union bound shows that, so long as $1 \ll m \ll n$, we have
\begin{align*}
  \dist\left(0, \partial \poploss_P(\what{x}_n)
  + \normalcone_X(\what{x}_n)\right)
  \cp 0.
\end{align*}
In the next section, we provide a slightly more sophisticated variant of
this procedure, filling out the details of the argument.

\subsection{The stochastic monotone operator case}

Our results extend beyond the convex function
case to more general (stochastic) maximal monotone operators, which
provide a natural setting that coincides with the analytic techniques
to prove the results.
As in Section~\ref{sec:related-work},
assume we have a collection of maximal monotone mappings
$A_z(\cdot) : \R^d \toto \R^d$ for $z \in \mc{Z}$
and $A_P(x) = \E_P[A_Z(x)]$ for $Z \sim P$.
%
We assume $A_P$ is maximal monotone over $X$, that is,
$A_P + \normalcone_X$ is maximal monotone.
We wish to find $x \in X$ solving
\begin{equation}
  \label{eqn:vi}
  0 \in A_P(x) + \normalcone_X(x),
\end{equation}
which we assume has solutions
$X\opt = \{x \in X \mid 0 \in A_P(x) + \normalcone_X(x)\}$.

We require moment bounds on $A_z(x)$, and so say that $A_z$ has $p \ge 2$
moments near $x^0$ if there exists $b > 0$ and envelope $\lipconst(z) <
\infty$ such that for $\ltwo{x - x^0} \le b$, each (measurable) selection
$a_z(x) \in A_z(x)$ satisfies $\ltwo{a_z(x)} \le \lipconst(z)$ and
$\E[\lipconst^p(Z)] < \infty$.
Summarizing, to state our results, we make the following standing
assumptions:
\begin{assumption}
  \label{assumption:basic-vi-setting}
  Let $p \ge 2$.
  The following hold:
  \begin{enumerate}[label=(\roman*),leftmargin=*]
  \item \label{item:vi-max-monotone}
    The population operator
    $A_P + \normalcone_X : \R^d \toto \R^d$ is maximal monotone.
  \item \label{item:vi-solution}
    $X$ has interior and the variational inequality~\eqref{eqn:vi}
    has non-empty solution set
    $X\opt \subset X$.
  \item \label{item:vi-moments}
    $A_z$ has $p$ moments near
    the minimum norm solution $x^0 = \argmin_{x \in X\opt} \ltwo{x}$.
  \end{enumerate}
\end{assumption}

The subdifferential of a convex objective provides the standard example
satisfying Assumption~\ref{assumption:basic-vi-setting}.
In this case, the subdifferential $A_z(x) = \partial \loss_z(x)$, and the
measurability assumptions and subdifferential equality $\partial
\poploss_P(x) = \E_P[\partial \loss_z(x)]$ follow essentially immediately,
so that maximal monotonicity holds~\cite{Rockafellar69, Bertsekas73,
  RockafellarWe98} and part~\ref{item:vi-max-monotone} follows.
Part~\ref{item:vi-solution}
trivially holds if $\poploss_P$ has a minimizer over $X$, and
then so long as
$\loss_z$ is $\lipconst(z)$-Lipschitz continuous near $x^0$
with $\E[\lipconst^2(Z)] < \infty$, part~\ref{item:vi-moments} also holds.
Other examples include the situation that
$X \subset \interior \dom A_z$ for each $z$ and $P$ has finite
support, which guarantees $A_P$ is maximal
monotone~\cite{Rockafellar70b}.
If $X$ is compact,
then $A_P + \normalcone_X$ is surjective~\cite[Corollary
  21.25]{BauschkeCo17} so~\eqref{eqn:vi} has a solution.

With this assumption, we give the natural generalization of stationary
points, letting
\begin{align*}
  \statpoints_P(\epsilon) = \{x \in X \mid \dist(0, A_P(x)
  + \normalcone_X(x)) \le \epsilon\}
\end{align*}
denote the $\epsilon$-stationary
points in the problem~\eqref{eqn:vi}.
Let $x^0$ denote the minimum norm solution to~\eqref{eqn:vi},
as in Assumption~\ref{assumption:basic-vi-setting}.\ref{item:vi-moments}.
We then have the following theorem.
\begin{theorem}
  \label{theorem:lebesgue-measure-positive}
  Let Assumption~\ref{assumption:basic-vi-setting}
  hold
  with neighborhood size $b$ in part~\ref{item:vi-moments}.
  Let $B = x^0 + b \cdot \ball$, $B' \supset B$ be closed convex, and
  $\epsilon > 0$, $\lambda > 0$.
  Then there exists
  $c = c(A, \epsilon, X, P, \lambda) > 0$ such that
  \begin{align*}
    \P\left(\lebesgue\left(
    \left\{x \in \R^d \mid \ltwo{x - x^0} \le 4 \lambda \epsilon,
    \proxmap_{\lambda A_{P_n} + \normalcone_{X \cap B'}}(x)
    \in \statpoints_P(\epsilon) \cap B \right\}\right)
    \ge c\right) \to 1.
  \end{align*}
\end{theorem}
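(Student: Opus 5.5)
We first work with the population operator $T \defeq A_P + \normalcone_{X \cap B'}$, which is maximal monotone, and its resolvent $R \defeq \proxmap_{\lambda T}$, which is firmly nonexpansive. On $B$ (taking $B'$ large enough that $\normalcone_{X\cap B'} = \normalcone_X$ near $x^0$, or else working directly with the intersection) the relation $x_\lambda = R(x)$ means $\frac{1}{\lambda}(x - x_\lambda) \in T(x_\lambda)$. Consequently, whenever $\ltwo{x - R(x)} \le \lambda\epsilon$, the point $R(x)$ is $\epsilon$-stationary.

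Since $R(x^0) = x^0$ and $R$ is $1$-Lipschitz, every $x$ with $\ltwo{x - x^0} \le \lambda\epsilon/2$ satisfies
\begin{equation*}
  \ltwo{x - R(x)} \le \ltwo{x - x^0} + \ltwo{R(x) - x^0} \le \lambda\epsilon,
  \qquad
  \ltwo{R(x) - x^0} \le \lambda\epsilon/2 .
\end{equation*}
Shrinking $\epsilon$ if necessary so that $\lambda\epsilon \le b$, we get $R(x) \in \statpoints_P(\epsilon) \cap B$ for every such $x$. The population ball around $x^0$ thus already has positive measure, and the whole task is to transfer this to the empirical resolvent $R_n \defeq \proxmap_{\lambda A_{P_n} + \normalcone_{X\cap B'}}$. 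The slack between $\lambda\epsilon/2$ and $4\lambda\epsilon$, together with replacing $\epsilon$ by $\epsilon/2$ in the population step, gives the room needed.

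The core argument is as follows. Fix $x$ with $\ltwo{x-x^0}\le \lambda\epsilon/4$, and set $y_n = R_n(x)$ and $v_n = (x - y_n)/\lambda \in A_{P_n}(y_n) + \normalcone(y_n)$.
\begin{enumerate}[label=(\alph*),leftmargin=*]
\item \textbf{Localization.} Applying monotonicity of $A_{P_n} + \normalcone$ against a selection at $x^0$ gives $\ltwo{y_n - x^0} \le \ltwo{x - x^0} + \lambda \ltwo{a_n(x^0)}$. Here the empirical mean $a_n(x^0) = \frac1n\sum_i a_{Z_i}(x^0)$ of a selection with population mean $a^0 \in A_P(x^0)$, chosen so that $-a^0 \in \normalcone(x^0)$, converges by the law of large numbers using the moment condition. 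So $y_n$ stays in $B$.
\item \textbf{Small empirical residual.} The same calculation shows that $\ltwo{v_n}$ is small.
\item \textbf{Transfer to the population.} This is where the failure of uniform convergence bites. An empirical element of $A_{P_n}(y_n)$ need not be close to $A_P(y_n)$.
\end{enumerate}

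For step (c), I would use a measure-theoretic, pointwise-in-$x$ argument rather than uniform control. Pointwise, for each fixed $y$, Lemma~\ref{lemma:empirical-subdifferential-pointwise} gives $\dist(0,A_P(y)+\normalcone(y)) \le \liminf$ of the empirical distances. The issue is that $y_n$ depends on the sample. Consider instead the map $x \mapsto y_n = R_n(x)$. By Minty's theorem the graph of $T$ is a Lipschitz ($1$-dimensional-in-each-direction) manifold parametrized by $x = y + \lambda v$, and this parametrization is bi-Lipschitz. I would decompose $\gr T$ near $(x^0,0)$ into pieces on which the set-valuedness is of fixed dimension; this is the dimension-theory step the abstract mentions. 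The goal is to show that on each piece the empirical graph, viewed through the Minty parametrization, converges in Lebesgue measure to the population graph: for each fixed $x$, $R_n(x) \to R(x)$ almost surely. The latter would follow from monotone-operator pointwise convergence of resolvents, via the law of large numbers on a dense countable set combined with nonexpansiveness.

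Given pointwise convergence $R_n(x)\to R(x)$ and $(x - R_n(x))/\lambda \to (x-R(x))/\lambda \in T(R(x))$, stationarity is still not automatic, because graphical convergence does not control $A_P(R_n(x))$. I would therefore use Fubini/Egorov. Let $G_n$ be the set of $x$ in the $4\lambda\epsilon$-ball for which $\dist(0, A_P(R_n(x)) + \normalcone(R_n(x))) \le \epsilon$. I would aim to show $\E[\lebesgue(G_n^c \cap \text{ball}')]\to 0$ for a smaller ball', by bounding, for fixed $x$, the probability that the population residual at $R_n(x)$ exceeds $\epsilon$. To do this, I would symmetrize over an independent copy of the sample and use that the random point $R_n(x)$, for a uniformly random $x$, has a law absolutely continuous with respect to the pieces of the graph decomposition. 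On a piece, $A_P$ is approximated by $A_{P_n}$ at almost every point, which is where Lemma~\ref{lemma:empirical-subdifferential-pointwise} enters after the randomization over $x$ decouples the point from the sample.

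This decoupling-by-random-starting-point step is the main obstacle. I would attack it first by showing that the pushforward of Lebesgue measure under $R_n$ has density bounded relative to a fixed reference measure on the graph pieces, uniformly in $n$. Markov's inequality then yields $\lebesgue(G_n)\ge c$ with probability tending to one.
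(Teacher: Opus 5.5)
\textbf{The gap is step (c).} You correctly locate the obstacle, but the plan you give to get past it does not work. You aim to show that for \emph{most} starting points $x$ in a ball, $R_n(x)$ is population $\epsilon$-stationary, i.e.\ $\E[\lebesgue(G_n^c \cap \cdot)] \to 0$. That is far stronger than the theorem, which only asks for a set of measure at least $c$. It is also exactly what the paper leaves as its first open question in the Discussion.

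Your mechanism is that the pushforward of Lebesgue measure under $R_n$ has density bounded relative to a fixed reference measure, uniformly in $n$, so that randomizing $x$ decouples $R_n(x)$ from the sample. This is false. Take the median loss $\loss_z(x) = |x - z|$ with $P$ uniform on $[-1,1]$. Then $\partial \poploss_{P_n}$ jumps by $2/n$ at each $Z_i$, so $R_n$ collapses an interval of length $2\lambda/n$ onto each sample point. A fraction $\approx \lambda/(1+\lambda)$ of the starting points near $x^0 = 0$ therefore lands exactly on $\{Z_1, \ldots, Z_n\}$. Since $P$ is atomless, this set is almost surely null for any fixed $\sigma$-finite measure.

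So the law of $R_n(x)$ is intrinsically sample-dependent, and Lemma~\ref{lemma:empirical-subdifferential-pointwise} cannot be invoked at $R_n(x)$. In the constructions of Section~\ref{sec:baby-challenges} and Proposition~\ref{proposition:prox-challenges}, such sample-dependent landing sets (open sets on which $R_n$ is the identity) carry population gradient of norm $1-\delta$. The pointwise convergence $R_n(x) \to R(x)$ is true but, as you note yourself, says nothing about $A_P(R_n(x))$. You name the right decomposition of the graph, but you use it for the wrong purpose.

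\textbf{The missing idea is to reverse the direction.} Do not track where the sample-dependent point $R_n(x)$ lands. Fix the landing region $L = R(K)$, $K = x^0 + \lambda\epsilon\ball$. It depends only on $P$, and your own population computation places it entirely inside $\statpoints_P(\epsilon) \cap B$. Every $x$ with $R_n(x) \in L$ is then automatically good. Via the bi-Lipschitz inverse Minty map $(y,v) \mapsto y + \lambda v$, $\lebesgue(R_n^{-1}(L) \cap \text{(ball)})$ is bounded below by a constant times $\hausmeasure^d$ of the empirical graph piece $G_n = \gr(A_{P_n} + \normalcone_X) \cap (L \times r\ball)$. The whole problem becomes lower-bounding $\hausmeasure^d(G_n)$.

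Here the dimension theory enters, in the paper as follows.
\begin{itemize}
\item Va\u{\i}n\v{s}te\u{\i}n's theorem, applied to the projection of the $d$-dimensional population graph piece, yields a $k$ and a set $X_0 \subset L$ with $0 < \hausmeasure^{d-k}(X_0) < \infty$ such that each truncated fiber $(A_P + \normalcone_X)(y) \cap r\ball$, $y \in X_0$, contains a $k$-simplex of $\hausmeasure^k$-measure at least $\epsilon$. This is what handles lower-dimensional good sets, such as $\{0\}$ for $|x|$.
\item Each such $y$ is fixed, independent of the sample. Pointwise Hausdorff convergence plus Fubini over $\hausmeasure^{d-k}$ restricted to $X_0$ therefore shows that, with high probability, most of these empirical fibers still contain simplices of measure $\epsilon/2$.
\item Federer's Fubini-type inequality then gives $\hausmeasure^d(G_n) \gtrsim \epsilon\, \hausmeasure^{d-k}(X_0)$.
\end{itemize}
Stationarity is inherited from the fixed landing set $L$. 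No transfer of the population residual to sample-dependent points is ever needed.
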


A simple corollary is cleaner to state, though the more
general statement in Theorem~\ref{theorem:lebesgue-measure-positive}
allows us to demonstrate that an algorithm that localizes
around empirical minimizers converges in probability.
\begin{corollary}
  \label{corollary:lebesgue-measure-positive-simple}
  Let Assumption~\ref{assumption:basic-vi-setting} hold
  and $\epsilon > 0$.
  Then there is a positive $c > 0$ such that for all $r \ge 4 \epsilon$,
  \begin{align*}
    \P\left(\lebesgue\left(
    \left\{x \in \R^d
    \mid \ltwo{x - x^0} \le r, \proxmap_{A_{P_n} + \normalcone_X}(x)
    \in \statpoints_P(\epsilon)\right\}\right) \ge c \right) \to 1.
  \end{align*}
\end{corollary}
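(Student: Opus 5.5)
The plan is to specialize Theorem~\ref{theorem:lebesgue-measure-positive} and then enlarge the set whose measure is controlled. Fix $\epsilon > 0$, and let $b$ be the neighborhood size from Assumption~\ref{assumption:basic-vi-setting}.\ref{item:vi-moments}. Apply the theorem with $\lambda = 1$, $B = x^0 + b \cdot \ball$, and $B' = \R^d$. The set $\R^d$ is closed, convex and contains $B$, so it is an admissible choice. Since $X \cap B' = X$, we have $\normalcone_{X \cap B'} = \normalcone_X$, and the resolvent in the theorem is exactly $\proxmap_{A_{P_n} + \normalcone_X}$. The theorem then gives a constant $c = c(A, \epsilon, X, P, 1) > 0$, which depends only on the problem data and on $\epsilon$. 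In particular, $c$ does not depend on $r$. With this $c$,
\begin{equation*}
  \P\left(\lebesgue(E_n) \ge c\right) \to 1,
  ~~~
  E_n \defeq \left\{x \in \R^d \mid \ltwo{x - x^0} \le 4 \epsilon,
  ~ \proxmap_{A_{P_n} + \normalcone_X}(x) \in \statpoints_P(\epsilon) \cap B\right\}.
\end{equation*}

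Next, for any $r \ge 4\epsilon$, let $F_n(r)$ denote the set appearing in the corollary. We have $E_n \subset F_n(r)$ pointwise in the sample, for two reasons. First, $\ltwo{x - x^0} \le 4\epsilon$ implies $\ltwo{x - x^0} \le r$. Second, $\statpoints_P(\epsilon) \cap B \subset \statpoints_P(\epsilon)$. Monotonicity of Lebesgue measure then gives $\lebesgue(F_n(r)) \ge \lebesgue(E_n)$. Hence the event $\{\lebesgue(E_n) \ge c\}$ is contained in $\{\lebesgue(F_n(r)) \ge c\}$, and the probability of the latter also tends to one. The same $c$ works simultaneously for all $r \ge 4\epsilon$.

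The only point needing care is that $B' = \R^d$ is allowed in the theorem, which requires only a closed convex superset of $B$. If one preferred a bounded $B'$, the statement would instead involve $\normalcone_{X \cap B'}$, and the two resolvents could differ. With $B' = \R^d$ this issue disappears, so I expect no real obstacle: all the difficulty lies in Theorem~\ref{theorem:lebesgue-measure-positive} itself.
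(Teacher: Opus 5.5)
Your proposal is correct and matches the derivation the paper intends; the paper gives no separate proof. You take $\lambda = 1$ and $B' = \R^d$ in Theorem~\ref{theorem:lebesgue-measure-positive}, so that $\normalcone_{X \cap B'} = \normalcone_X$, then enlarge the set by monotonicity in the radius $r \ge 4\epsilon$ and by dropping the intersection with $B$, keeping $c$ independent of $r$. Your point of care about $B' = \R^d$ also checks out: the theorem's statement admits any closed convex $B' \supset B$, and although its proof speaks of ``a ball $B'$,'' nothing there uses boundedness of $B'$.
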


\subsection{Proof sketch}

The proof of Theorem~\ref{theorem:lebesgue-measure-positive}
reposes on a dimensionality analysis of the graph
of the monotone operator $A_P: \R^d \toto \R^d$
as a subset of $\R^d \times \R^d$.
As we shall see, for an appropriate notion of dimension, the graph has
dimension $d$, and by using the Minty parameterization of the graph, we may
divide the graph into pieces, at least one of which is $d$-dimensional.
Then via a careful analysis using geometric measure theory, we may prove
that the measure of any $d$-dimensional piece is appropriately preserved.

We provide the formal proof in
Section~\ref{sec:proof-lebesgue-measure-positive}, but include an overview
of the ideas here, along with the main technique we adopt from convex
analysis and monotone operator theory.
To begin, we let $A : \R^d \toto \R^d$ be a maximal monotone operator,
and let
\begin{align*}
  \gr A \defeq \left\{(x, v) \mid x \in \dom A, v \in A(x)\right\}
  \subset \R^d \times \R^d
\end{align*}
be its graph;
we will follow \citet[Chapter 12]{RockafellarWe98} to develop
the \emph{Minty parameterization} of this subset, which provides
a bi-Lipschitz homeomorphism between $\R^d$ and $\gr A$.
To that end, for $\lambda > 0$ define the proximal mapping, or resolvent,
and Yosida approximation of $A$ by
\begin{align*}
  \res{\lambda}{A} \defeq (I + \lambda A)^{-1}
  ~~~ \mbox{and} ~~~
  \yosida{\lambda}{A} \defeq (\lambda + A^{-1})^{-1}
  = \frac{1}{\lambda} (I - \res{\lambda}{A}).
\end{align*}
Then $\res{\lambda}{A}$ is $1$-Lipschitz, while $\yosida{\lambda}{A}$ is
$\sqrt{1 + \lambda^{-2}}$-Lipschitz continuous~\cite[Remark
  23.23]{BauschkeCo17}.
The \emph{Minty parameterization}~\cite[Thm.~12.15]{RockafellarWe98}
\begin{align}
  \label{eqn:minty-param}
  \minty{\lambda}{A}(x) \defeq
  \left(\res{\lambda}{A}(x), \yosida{\lambda}{A}(x)\right)
  ~~ \mbox{and} ~~
  \minty{\lambda}{A}^{-1}(y, v)
  \defeq y + \lambda v
\end{align}
parameterizes the graph $\gr A$ in that
\begin{align*}
  \gr A = \left\{\minty{\lambda}{A}(x) \mid x \in \R^d
  \right\}
\end{align*}
and
\begin{equation*}
  (y, v) \in \gr A
  ~~ \mbox{if and only if} ~~
  (y, v) = \minty{\lambda}{A}(x)
  ~ \mbox{for} ~
  x = y + \lambda v.
\end{equation*}
As $\minty{\lambda}{A}$ is $\sqrt{1 + \lambda^{-2}}$-Lipschitz and
$\minty{\lambda}{A}^{-1}$ is $\sqrt{1 + \lambda^2}$-Lipschitz,
this provides a bi-Lipschitz homeomorphism between
$\R^d$ and $\gr A$.

We divide
the proof of Theorem~\ref{theorem:lebesgue-measure-positive}
into four parts; the first two hold for any maximal monotone operator
$A$, while the last two specialize to the sampling case that we consider:
\begin{enumerate}[label=(S.\arabic*),leftmargin=*]
\item \label{item:minty-to-full-subset}
  Use the Minty parameterization to identify full-dimensional compact
  convex subsets
  $K$ of the
  pre-image $\res{\lambda}{A}^{-1}(\statpoints(\epsilon))$ of the
  $\epsilon$-stationary points,
  so $\res{\lambda}{A}(K) \subset \statpoints(\epsilon)$.
\item \label{item:full-dim-parameterization} Argue that the subset $G = \gr
  A \cap (K \times \R^d)$ of the graph of $A$ has a particular
  full-dimensional parameterization, in that there is a further (compact)
  $X_0 \subset K$ for which $X_0$ has dimension $(d - k)$ while $A(x)$
  contains a $k$-dimensional but ``large enough'' simplex for each $x \in
  X_0$.
\end{enumerate}

\noindent
Step~\ref{item:full-dim-parameterization} is the most technical part of the
argument and provides the key to convergence, as pieces of the graph
decompose into complementary subsets.
It also is the least quantitative, in that the measures of
$X_0$ and $A(x)$ are only guaranteed to be positive
and finite.
Research on singular sets of convex
functions~\citep{AlbertiAmCa92,AmbrosioCaSo93,Alberti94,AlbertiAm99}
investigates similar decompositions of $\gr A$ via the dimension of $A(x)$,
but the results do not quite apply in the ways we seek in
Step~\ref{item:full-dim-parameterization}.
We discuss and provide examples
in Section~\ref{sec:discuss-dim-parameters}
after the proof of the theorem, highlighting both the
challenges in providing quantitative guarantees
and relationship with prior work.

\begin{enumerate}[label=(S.\arabic*),leftmargin=*]
  \setcounter{enumi}{2}
\item \label{item:fiber-survival}
  Taking $A_P = \int A_z(\cdot) dP(z)$,
  we argue that most of the sets $A_P(x)$ from
  step~\ref{item:full-dim-parameterization} survive under sampling,
  that is, $A_{P_n}(x)$ still contains a large enough set.
\item \label{item:large-hausdorff} Argue that the $d$-dimensional Hausdorff
  measure, which we define formally in the sequel, of the empirical graph
  $G_n = \gr A_{P_n} \cap (\res{\lambda}{A_P}(K) \times \R^d)$ is positive
  because of part~\ref{item:fiber-survival}, and so the transformation
  $\minty{\lambda}{A_{P_n}}^{-1}(G_n) \subset \R^d$ has positive
  $d$-dimensional Lebesgue measure because the Minty parameterization is a
  bi-Lipschitz homeomorphism.
\end{enumerate}

Given steps \ref{item:minty-to-full-subset}--\ref{item:large-hausdorff}, we
can then argue that the empirical graph has good behavior:
by step~\ref{item:large-hausdorff}, the set
of points
\begin{align*}
  \minty{\lambda}{A_{P_n}}^{-1}(G_n)
  = \left\{x \mid \minty{\lambda}{A_{P_n}}(x) \in G_n \right\}
  \subset
  \res{\lambda}{A_{P_n} + \normalcone_X}^{-1}(\statpoints(\epsilon)).
\end{align*}
Because step~\ref{item:large-hausdorff}
demonstrates that $\lebesgue(\minty{\lambda}{A_{P_n}}^{-1}(G_n)) \ge c$
(for some constant $c > 0$) with high probability,
this then implies the result.


\section{Algorithms}

Our algorithms, which rely on
Theorem~\ref{theorem:lebesgue-measure-positive}, proceed in two phases: the
first uses a regularized empirical risk minimizer to find a point near the
minimum norm solution $x^0$ to the variational inequality~\eqref{eqn:vi}
(recall Assumption~\ref{assumption:basic-vi-setting}
part~\ref{item:vi-moments}).
Given such an initial point, we can then sample in a small neighborhood
around $\what{x}_n$, and Theorem~\ref{theorem:lebesgue-measure-positive}
shows that one of these points likely has small subdifferential, and we can
check this.

\newcommand{\regmult}{\alpha}  

To set the stage, observe that strong monotonicity implies there is a
unique solution $x^\regmult \in X$ to the problem
\begin{equation*}
  0 \in A_P(x) + \normalcone_X(x) + \regmult x.
\end{equation*}
A quick argument (see Appendix~\ref{sec:proof-minimum-norm-solution}) shows
that $x^\regmult \to x^0$ as $\regmult \downarrow 0$:
\begin{lemma}
  \label{lemma:minimum-norm-solution}
  Let $A+\normalcone_X$ be maximal monotone and $X\opt = (A +
  \normalcone_X)^{-1}(0)$ be non-empty, with $x^0 = \argmin_{x \in X\opt}
  \ltwo{x}$.
  Then $x^\regmult = (A + \regmult I + \normalcone_X)^{-1}(0)$ satisfies
  $x^\regmult \to x^0$ as $\regmult \downarrow 0$.
\end{lemma}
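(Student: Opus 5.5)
The argument is the Tikhonov regularization (Browder) argument for maximal monotone operators. Write $T = A + \normalcone_X$, which is maximal monotone, and let $x^\regmult$ be the unique point with $-\regmult x^\regmult \in T(x^\regmult)$. Existence and uniqueness come from strong monotonicity of $T + \regmult I$ together with Minty's theorem. We will first show that $\{x^\regmult\}$ is bounded by $\ltwo{x^0}$, and then that every cluster point lies in $X\opt$. Minimality of the norm then identifies the cluster point.

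\emph{Boundedness.} Fix any $\bar x \in X\opt$, so $0 \in T(\bar x)$. Monotonicity of $T$ applied to the pairs $(x^\regmult, -\regmult x^\regmult)$ and $(\bar x, 0)$ gives
\begin{equation*}
  \<-\regmult x^\regmult - 0, x^\regmult - \bar x\> \ge 0,
\end{equation*}
that is, $\ltwo{x^\regmult}^2 \le \<x^\regmult, \bar x\>$. By Cauchy--Schwarz this yields $\ltwo{x^\regmult} \le \ltwo{\bar x}$. Taking $\bar x = x^0$ gives $\ltwo{x^\regmult} \le \ltwo{x^0}$ for every $\regmult > 0$.

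\emph{Cluster points.} Take any sequence $\regmult_k \downarrow 0$. By boundedness, a subsequence satisfies $x^{\regmult_k} \to \tilde x$. Along it, $v_k \defeq -\regmult_k x^{\regmult_k} \in T(x^{\regmult_k})$ and $v_k \to 0$, since $\ltwo{v_k} \le \regmult_k \ltwo{x^0}$. The graph of a maximal monotone operator is closed, which is the one genuinely structural fact used here. Hence $(\tilde x, 0) \in \gr T$, so $\tilde x \in X\opt$.

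\emph{Identification.} Lower semicontinuity of the norm gives $\ltwo{\tilde x} \le \liminf_k \ltwo{x^{\regmult_k}} \le \ltwo{x^0}$. The set $X\opt = T^{-1}(0)$ is closed and convex, because the inverse of a maximal monotone operator is maximal monotone and so has closed convex values. It therefore has a unique minimum-norm element, and that element is $x^0$. Consequently $\tilde x = x^0$. Every sequence $\regmult_k \downarrow 0$ thus has a subsequence along which $x^{\regmult_k} \to x^0$, so the whole family converges: $x^\regmult \to x^0$ as $\regmult \downarrow 0$.

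The only potential obstacle is justifying the two standard facts about maximal monotone operators used above: closedness of $\gr T$, and convexity of $T^{-1}(0)$, which is what makes $x^0$ well defined. Both are textbook results (e.g.\ \cite{BauschkeCo17}, Prop.~20.36 and 23.39). No step requires compactness beyond the finite-dimensional Bolzano--Weierstrass theorem.
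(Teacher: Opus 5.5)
Your proof is correct and follows essentially the same route as the paper: pairing $(x^\alpha, -\alpha x^\alpha)$ with $(x^0, 0)$ via monotonicity gives $\ltwo{x^\alpha} \le \ltwo{x^0}$, and closedness of the graph of $A + \normalcone_X$ (the paper calls it outer semicontinuity) puts every cluster point in $X\opt$, where the norm bound forces it to equal $x^0$. Your version is slightly more careful than the paper's in two places: it notes explicitly that $X\opt$ is closed and convex, so the minimum-norm element is unique, and it passes explicitly from subsequential limits to convergence of the whole family.
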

\noindent
Applying a standard convergence argument, this lemma in turn implies that
the empirical regularized solution $\what{x}^{\regmult} \in X$ uniquely solving
\begin{align}
  \label{eqn:l2-empirical-vi}
  0 \in A_{P_n}(x) + \regmult x + \normalcone_X(x),
  ~~~ \mbox{i.e.} ~~~
  \what{x}^\regmult = \proxmap_{\regmult^{-1} A_{P_n} + \normalcone_X}(0)
\end{align}
also converges to $x^0$ when $\regmult \to 0$.
We have the following proposition, which holds in arbitrary Hilbert spaces,
whose proof we defer to Appendix~\ref{sec:proof-sampled-vi-convergence}.
\begin{proposition}
  \label{proposition:sampled-vi-convergence}
  Let Assumption~\ref{assumption:basic-vi-setting}
  hold,
  and assume that $\regmult = \regmult_n$ in problem~\eqref{eqn:l2-empirical-vi}
  satisfies
  $n^{-1/2} \ll \regmult_n \ll 1$.
  Then $\what{x}^{\regmult_n} \cp x^0$ as $n \to \infty$.
  If additionally $p > 2$ in Assumption
  \ref{assumption:basic-vi-setting}.\ref{item:vi-moments}, then
  $\what{x}^{\regmult_n} \to x^0$ with probability 1 whenever $\regmult_n
  \to 0$ and $\sum_{n \ge 1} (\regmult_n \sqrt{n})^{-p} < \infty$.
\end{proposition}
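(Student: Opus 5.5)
The plan is to compare the empirical regularized problem~\eqref{eqn:l2-empirical-vi} with its population analogue $x^{\regmult}$, and then invoke Lemma~\ref{lemma:minimum-norm-solution}. The triangle inequality gives
\begin{equation*}
  \ltwo{\what{x}^{\regmult_n} - x^0}
  \le \ltwo{\what{x}^{\regmult_n} - x^{\regmult_n}}
  + \ltwo{x^{\regmult_n} - x^0}.
\end{equation*}
The second term tends to zero deterministically by Lemma~\ref{lemma:minimum-norm-solution}, since $\regmult_n \downarrow 0$. It therefore remains to control the stochastic term using strong monotonicity.

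Fix a population certificate. Since $0 \in A_P(x^\regmult) + \normalcone_X(x^\regmult) + \regmult x^\regmult$, there are a measurable selection $a_z \in A_z(x^\regmult)$ and $w \in \normalcone_X(x^\regmult)$ with $\E_P[a_Z] + w + \regmult x^\regmult = 0$. For $\regmult$ small enough, the previous step puts $x^\regmult$ within $b$ of $x^0$, so $\ltwo{a_z} \le \lipconst(z)$. On the empirical side, $\what{x} = \what{x}^\regmult$ satisfies $0 = \hat a + \hat w + \regmult \what{x}$ for some $\hat a \in A_{P_n}(\what{x})$ and $\hat w \in \normalcone_X(\what{x})$. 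Here I use that $A_{P_n} = \frac1n\sum_i A_{Z_i}$ is monotone and that the empirical average $\frac1n \sum_i a_{Z_i} \in A_{P_n}(x^\regmult)$. Monotonicity of $A_{P_n} + \normalcone_X$, tested with the points $x^\regmult$ and $\what{x}$, gives
\begin{equation*}
  \regmult \ltwo{\what{x} - x^\regmult}^2
  \le \Big\langle \tfrac1n \textstyle\sum_i a_{Z_i} - \E_P[a_Z], \, x^\regmult - \what{x} \Big\rangle,
\end{equation*}
and hence $\ltwo{\what{x} - x^\regmult} \le \regmult^{-1} \ltwo{\frac1n\sum_i a_{Z_i} - \E a_Z}$. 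The right-hand side is an average of i.i.d.\ mean-zero vectors whose second moment is bounded by $\E\lipconst^2(Z)$. It is therefore $O_P(n^{-1/2})$ by Chebyshev, so the term is $O_P((\regmult_n\sqrt n)^{-1}) \to 0$ when $\regmult_n \gg n^{-1/2}$. All of this goes through verbatim in a Hilbert space.

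For the almost-sure claim with $p > 2$, I would use a Rosenthal or Marcinkiewicz--Zygmund moment inequality in Hilbert space, $\E\ltwo{\frac1n\sum_i (a_{Z_i} - \E a_Z)}^p \le C_p n^{-p/2}\E\lipconst^p(Z)$. Then Markov's inequality gives $\P(\ltwo{\what{x}^{\regmult_n} - x^{\regmult_n}} > \eta) \le C_p \eta^{-p} (\regmult_n\sqrt n)^{-p}$. This is summable by assumption, so Borel--Cantelli yields almost-sure convergence.

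The main obstacle I anticipate is measurability and the fact that the selection $a_z$ depends on $\regmult_n$, which changes with $n$. This is harmless: for each $n$ the $Z_i$ are i.i.d.\ with respect to a fixed selection, and the moment bounds are uniform because of the envelope $\lipconst$. The argument also needs $x^{\regmult_n}$ to lie in the ball of radius $b$ around $x^0$ for all large $n$, which Lemma~\ref{lemma:minimum-norm-solution} supplies.
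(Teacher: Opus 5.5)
Your proposal is correct and follows essentially the same route as the paper. It passes through $x^{\regmult_n}$ with Lemma~\ref{lemma:minimum-norm-solution}, uses monotonicity with the empirical average of a fixed population selection at $x^{\regmult}$ to get $\ltwo{\what{x}^{\regmult} - x^{\regmult}} \le \regmult^{-1} \ltwo{a_{P_n}(x^{\regmult}) - a_P(x^{\regmult})}$, and then uses a second-moment bound for convergence in probability and a $p$th-moment bound with Borel--Cantelli for the almost-sure claim. The only cosmetic difference is that the paper obtains the $n^{-p/2}$ moment bound via symmetrization and Khinchine--Kahane, while you cite Rosenthal/Marcinkiewicz--Zygmund, which gives the same rate.
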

\noindent
Proposition~\ref{proposition:sampled-vi-convergence} shows that if we have
access to a proximal point operator, then obtaining the ``classical''
guarantees of convergence, i.e., that $\what{x}_n$ is close to nearly
stationary point, is essentially immediate.

\subsection{The randomized sampling algorithm}

To actually guarantee stationarity, we arrive at a simple algorithm:

\algbox{
  \label{alg:resample}
  Randomized resampling procedure
}{
  \textbf{Input:} radius $b > 0$ for
  which Assumption~\ref{assumption:basic-vi-setting}\ref{item:vi-moments}
  holds at radius at least $2b$ \\
  \textbf{Sample:} Let $P_n$ and $P_n'$ be i.i.d.\ samples
  of size $n$
  from $P$ \\
  \textbf{Optimize:} Let $\what{x}^{\regmult_n}$ solve the empirical
  problem~\eqref{eqn:l2-empirical-vi} with $\regmult_n = n^{-1/4}$
  and define the random neighborhood
  \begin{equation*}
    \what{B} = \what{x}^{\regmult_n} + b \ball
  \end{equation*}
  \textbf{Resample:} for $i = 1, \ldots, \sqrt{n}$,
  \begin{enumerate}[label=\arabic*.,leftmargin=3em]
  \item draw $u^{(i)}$ uniformly from $\what{B}$
  \item set $x^{(i)} = \resolvent_{A_{P_n} + \normalcone_{X \cap \what{B}}}(u^{(i)})
    = (I + A_{P_n} + \normalcone_{X \cap \what{B}})^{-1}(u^{(i)})$
  \end{enumerate}
  \textbf{Return}
  $\what{x}_n = \argmin_{x^{(i)}} \dist(0, A_{P_n'}(x^{(i)}) +
    \normalcone_X(x^{(i)}))$
}

\begin{theorem}
  The output $\what{x}_n$ of Algorithm~\ref{alg:resample}
  satisfies
  \begin{align*}
    \dist\left(0, A_P(\what{x}_n) + \normalcone_X(\what{x}_n)\right)
    \cp 0.
  \end{align*}
\end{theorem}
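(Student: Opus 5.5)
Fix $\epsilon > 0$ and $\eta > 0$. We show that for all large $n$, $\P(\dist(0, A_P(\what{x}_n) + \normalcone_X(\what{x}_n)) > 2\epsilon) \le \eta$. The argument chains Proposition~\ref{proposition:sampled-vi-convergence}, Theorem~\ref{theorem:lebesgue-measure-positive}, and a validation step on the independent sample $P_n'$.

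\emph{Localization.} Since $\regmult_n = n^{-1/4}$ satisfies $n^{-1/2} \ll \regmult_n \ll 1$, Proposition~\ref{proposition:sampled-vi-convergence} gives $\what{x}^{\regmult_n} \cp x^0$. So with probability tending to one, $\ltwo{\what{x}^{\regmult_n} - x^0} \le \delta$ for any fixed small $\delta$. On this event $\what{B} = \what{x}^{\regmult_n} + b\ball \supset x^0 + (b - \delta)\ball$, and also $\what{B} \subset x^0 + 2b\ball$, which lies inside the region where the moment assumption holds.

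\emph{Positive mass.} We apply Theorem~\ref{theorem:lebesgue-measure-positive} with $\lambda = 1$ and with $B = x^0 + (b/2)\ball$, valid since the moment condition holds at that radius. The main obstacle is that the theorem fixes a deterministic $B' \supset B$, while the algorithm uses the random set $\what{B}$. I would handle this by noting that the conclusion of the theorem is really about the empirical graph restricted near $B$, and that the resolvents $\proxmap_{A_{P_n} + \normalcone_{X \cap B'}}$ and $\proxmap_{A_{P_n} + \normalcone_{X\cap \what{B}}}$ coincide at $x$ whenever the output lies in the interior of both $B'$ and $\what{B}$. Indeed, there the extra normal cones vanish, and $(I + \cdot)^{-1}$ is single-valued, so the common point solves both inclusions. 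I would therefore take $B' = x^0 + (3b/4)\ball$ and $\delta < b/4$, so that $B \subset \interior(B' \cap \what{B})$; shrinking $B$ slightly handles boundary points. If this identification fails, the fallback is to rerun the theorem's proof with $B'$ replaced by $\what{B}$, since the proof only uses that $B'$ contains a neighbourhood of $B$. Either way, with probability tending to one the set
\begin{equation*}
  G_n \defeq \left\{u \mid \ltwo{u - x^0} \le 4\epsilon,~ \proxmap_{A_{P_n} + \normalcone_{X\cap\what{B}}}(u) \in \statpoints_P(\epsilon)\right\}
\end{equation*}
has Lebesgue measure at least $c > 0$. Taking $\epsilon$ small enough that $4\epsilon < b - \delta$ gives $G_n \subset \what{B}$. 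Since $\lebesgue(\what{B}) = \lebesgue(b\ball)$, each $u^{(i)}$ lands in $G_n$ with conditional probability at least $q = c/\lebesgue(b\ball)$. The draws are independent given $P_n$, so the probability that none of the $\sqrt{n}$ points $x^{(i)}$ lies in $\statpoints_P(\epsilon)$ is at most $(1-q)^{\sqrt n} \to 0$.

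\emph{Selection via $P_n'$.} It remains to show that, uniformly over the $m = \sqrt{n}$ candidates, $\dist(0, A_{P_n'}(x) + \normalcone_X(x))$ is close to $\dist(0, A_P(x) + \normalcone_X(x))$. Conditional on $P_n$, the candidates $x^{(i)}$ are fixed points in $x^0 + 2b\ball$ and independent of $P_n'$. The pointwise result Lemma~\ref{lemma:empirical-subdifferential-pointwise}, quantified using the $p \ge 2$ moments, should give for each fixed $x$ a deviation bound of the form
\begin{equation*}
  \P\left(\dhaus(A_{P_n'}(x), A_P(x)) > \epsilon\right) \le \frac{C}{n\epsilon^2},
\end{equation*}
via Chebyshev on support functions, uniformly over $x$ in the ball. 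If only a qualitative statement is available, I would strengthen it through a covering of the unit sphere for the support functions together with the $\lipconst$ envelope. A union bound over $\sqrt n$ points then costs $C\sqrt n/(n\epsilon^2) \to 0$; this is exactly where $m \ll n$ is used. On the good event, some $x^{(i^\star)} \in \statpoints_P(\epsilon)$ has empirical residual at most $2\epsilon$. Hence the selected $\what{x}_n$ has empirical residual at most $2\epsilon$, and therefore population residual at most $3\epsilon$. Since $\epsilon$ is arbitrary, this gives convergence in probability.
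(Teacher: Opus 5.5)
Your proposal is correct and follows essentially the same route as the paper. It localizes via Proposition~\ref{proposition:sampled-vi-convergence}, transfers the positive Lebesgue mass from Theorem~\ref{theorem:lebesgue-measure-positive} to the random ball $\what{B}$, bounds the chance of missing the good set by $(1-q)^{\sqrt{n}}$, and applies Chebyshev plus a union bound over the $\sqrt{n}$ candidates on the independent sample $P_n'$, ending with the same $3\epsilon$ triangle inequality. The one harmless difference is the transfer step: you take $B' = x^0 + \frac{3b}{4}\ball \subset \what{B}$ and note that both constrained resolvents agree once the output lies in $B \subset \interior(B' \cap \what{B})$, whereas the paper takes $B' = x^0 + 2b\ball \supset \what{B}$ and uses Lemma~\ref{lemma:resolvent-preimage-containment} ($\normalcone_{C_1} \subset \normalcone_{C_0}$ for $C_0 \subset C_1$); your identification is valid, so the fallback is unnecessary.
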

\begin{proof}
  We begin with a containment lemma on pre-images of proximal mappings,
  demonstrating a useful containment.
  \begin{lemma}
    \label{lemma:resolvent-preimage-containment}
    Let $A$ be maximal monotone, $S$ a closed set,
    and $C_0 \subset C_1$ be closed convex sets, where
    $\dom A \supset (S \cap C_0)$.
    Then
    \begin{equation*}
      \resolvent_{A + \normalcone_{C_1}}^{-1}(S \cap C_0)
      \subset \resolvent_{A + \normalcone_{C_0}}^{-1}(S \cap C_0)
      \subset \resolvent_{A + \normalcone_{C_0}}^{-1}(S).
    \end{equation*}
  \end{lemma}
  \begin{proof}
    Let $x \in \resolvent_{A + \normalcone_{C_1}}^{-1}(S \cap C_0)$.
    Then $\resolvent_{A + \normalcone_{C_1}}(x) \in S \cap C_0$,
    that is, there
    exists $y \in S \cap C_0$ such that
    $x \in (I + A + \normalcone_{C_1})(y)$.
    Because $C_0 \subset C_1$, we have
    $\normalcone_{C_0}(y) \supset \normalcone_{C_1}(y)$
    for each $y \in C_0$.
    As $y \in S \cap C_0$,
    \begin{align*}
      x \in y + A(y) + \normalcone_{C_1}(y)
      \subset y + A(y) + \normalcone_{C_0}(y).
    \end{align*}
    Rewriting,
    $x \in (I + A + \normalcone_{C_0})(y)$ and
    $y = \resolvent_{A + \normalcone_{C_0}}(x)$,
    so $x \in \resolvent_{A + \normalcone_{C_0}}^{-1}(S \cap C_0)$.
  \end{proof}

  Proposition~\ref{proposition:sampled-vi-convergence} implies
  $\what{x}^{\regmult_n} \cp x^0$, so defining the sets $B_{1/2} = x^0 +
  \frac{b}{2} \ball$, $B_2 = x^0 + 2 b \ball$, and $\what{B} =
  \what{x}^{\regmult_n} + b \ball$,
  \begin{align*}
    \lim_{n \to \infty} \P(B_{1/2} \subset \what{B} \subset B_2) = 1.
  \end{align*}
  Let $0 < \epsilon \le b/4$ and recall the set $\statpoints(\epsilon) = \{x
  \in X \mid \dist(0, A_P(x) + \normalcone_X(x)) \le \epsilon\}$ of
  $\epsilon$-stationary points, which is closed by outer semicontinuity of
  $A_P$.
  Consider the random sets
  \begin{align*}
    X_{0,n}^- = \resolvent_{A_{P_n} + \normalcone_{X \cap B_{2}}}^{-1}
    \left(\statpoints(\epsilon)
    \cap B_{1/2}\right) \cap \what{B}
    ~~ \mbox{and} ~~
    X_{0,n} = \resolvent_{A_{P_n} + \normalcone_{X \cap \what{B}}}^{-1}
    \left(\statpoints(\epsilon)\right) \cap \what{B}.
  \end{align*}
  Then Lemma~\ref{lemma:resolvent-preimage-containment} implies that
  $X_{0,n}^- \subset X_{0,n}$, while applying
  Theorem~\ref{theorem:lebesgue-measure-positive} shows that because
  $\what{B} \supset B_{1/2}$ with probability tending to 1, there exists $c
  > 0$ such that
  \begin{align*}
    \P\left(\lebesgue(X_{0,n}^-) \ge c\right) \to 1,
    ~~~ \mbox{whence} ~~~
    \P\left(\lebesgue(X_{0,n}) \ge c\right) \to 1.
  \end{align*}
  Moreover, we see that on the event that $\lebesgue(X_{0,n}) \ge c$,
  we have
  \begin{align*}
    \P(u^{(i)} \not \in X_{0,n} \mid X_{0,n})
    = 1 - \frac{\lebesgue(X_{0,n})}{\lebesgue(X \cap \what{B})}
    \le 1 - \frac{c}{b^d \lebesgue(\ball)},
  \end{align*}
  and so for some $c' > 0$ independent of $n$, we have
  \begin{align}
    \label{eqn:random-sample-in-preimage}
    \P\left(u^{(i)} \not \in X_{0,n} ~ \mbox{for~each~} i =
    1, \ldots, \sqrt{n} \mid X_{0,n}\right)
    \le (1 - c')^{\sqrt{n}}.
  \end{align}
  Thus, defining the event
  \begin{equation*}
    \mc{E}_n
    \defeq
    \bigcup_{i = 1}^{\sqrt{n}} \left\{u^{(i)} \in X_{0,n}\right\}
  \end{equation*}
  that at least one of the sampled
  points $u^{(i)}$ belongs to the pre-image $X_{0,n} =
  \resolvent_{A_{P_n} + \normalcone_{X \cap \what{B}}}^{-1}(\statpoints(\epsilon))$,
  we have
  $\P(\mc{E}_n) \to 1$ as $n \to \infty$.

  We now work on the event $\mc{E}_n$.
  Cheating by looking ahead to use the pointwise convergence of sampled
  Hausdorff distances in
  Lemma~\ref{lemma:empirical-subdifferential-pointwise}, if
  $\E[\dhaus(\{0\}, A_z(x))^2] \le \lipconst^2$ for $x$ near $x^0$ then
  $\E[\dhaus^2(A_{P_n}(x), A_P(x))] \lesssim \frac{\lipconst^2 d}{n}$.
  Because $P_n'$ is an independent sample, an application of
  Chebyshev's inequality, this moment bound, and a
  union bound yields
  \begin{lemma}
    \label{lemma:worlds-dumbest-generalization-bound}
    Let $x^{(1)}, \ldots, x^{(m)} \in X$ be independent of
    $P_n'$
    and lie in the neighborhood of $x^0$
    Assumption~\ref{assumption:basic-vi-setting} specifies.
    Then for any $\epsilon > 0$,
    \begin{align*}
      \P\left(\max_{i \le m}
      \dhaus\left(A_P(x^{(i)}), A_{P_n'}(x^{(i)})\right) \ge \epsilon \right)
      \lesssim  \frac{\lipconst^2 d m}{n \epsilon^2}.
    \end{align*}
  \end{lemma}

  We use the lemma to complete the proof.
  By the triangle inequality,
  \begin{align*}
    \dist(0, A_P(\what{x}_n) + \normalcone_X(\what{x}_n))
    &
    \le \dhaus(A_P(\what{x}_n), A_{P_n'}(\what{x}_n))
    + \dist(0, A_{P_n'}(\what{x}_n) + \normalcone_X(\what{x}_n)) \\
    & \le \max_{i \le m} \dhaus(A_P(x^{(i)}), A_{P_n'}(x^{(i)}))
    + \min_{i \le m}
    \dist\left(0, A_{P_n'}(x^{(i)}) + \normalcone_X(x^{(i)})\right) \\
    & \le 2 \max_{i \le m} \dhaus\left(A_P(x^{(i)}), A_{P_n'}(x^{(i)})\right)
    + \min_{i \le m} \dist\left(0, A_P(x^{(i)}) + \normalcone_X(x^{(i)})\right).
  \end{align*}
  On the good event $\mc{E}_n$, we know that
  the final term is at most $\epsilon$,
  while Lemma~\ref{lemma:worlds-dumbest-generalization-bound}
  shows that so long as $\what{x}^{\regmult_n}$ is near $x^0$,
  $\dhaus(A_P(x^{(i)}), A_{P_n'}(x^{(i)})) \le \epsilon$
  for each $i$ with probability at least
  $1 - C \frac{1}{\sqrt{n} \epsilon^2} \to 1$.
  That is,
  \begin{align*}
    \P\left(\dist(0, A_P(\what{x}_n) + \normalcone_X(\what{x}_n)) \ge 3\epsilon
    \right) \to 0.
  \end{align*}
  Because $\epsilon > 0$ was arbitrary, this proves the theorem.
\end{proof}

\subsection{The convex minimization randomized sampling procedure}

In the convex case where $A_z(x)=\partial \loss_z(x)$ our 
Algorithm~\ref{alg:resample} becomes:

\algbox{\label{alg:proximal-minimization} Randomized resampling
  for finding approximate stationary points}{
  \textbf{Input:} radius $b > 0$ \\
  \textbf{Sample:} Let $P_n$ and $P_n'$ be i.i.d.\ samples of size
  $n$ from $P$
  \\
  \textbf{Optimize:} Let $\regmult_n = n^{-1/4}$ and set
  \begin{align*}
    \what{x}^{\regmult_n} = \argmin_{x \in X} \left\{ \poploss_{P_n}(x)
    + \frac{\regmult_n}{2} \ltwo{x}^2 \right\}
    ~~ \mbox{and} ~~
    \what{B} = \what{x}^{\regmult_n} + b \ball.
  \end{align*}
  \textbf{Resample:} for $i = 1, \ldots, \sqrt{n}$,
  \begin{enumerate}[label=\arabic*.,leftmargin=3em]
  \item draw $u^{(i)}$ uniformly from $\what{B}$
  \item set $x^{(i)} = \proxmap_{\poploss_{P_n}}(u^{(i)}) = \argmin_{x \in X
    \cap \what{B}} \{\poploss_{P_n}(x) + \half \ltwos{x - u^{(i)}}^2\}$
  \end{enumerate}
  \textbf{Return:}
  $\what{x}_n = \argmin_{x \in \{x^{(i)}\}} \dist(0, \partial \poploss_{P_n'}(x)
  + \normalcone_X(x))$
}

\begin{corollary}
  Let Assumption~\ref{assumption:basic-vi-setting} hold
  for $A_z = \partial \loss_z$.
  Then Algorithm~\ref{alg:proximal-minimization} satisfies
  \begin{align*}
    \dist\left(0, \partial \poploss_P(\what{x}_n) + \normalcone_X(\what{x}_n)
    \right)
    \cp 0.
  \end{align*}
\end{corollary}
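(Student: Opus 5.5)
This corollary is a direct specialization of the preceding theorem for Algorithm~\ref{alg:resample}. Set $A_z = \partial \loss_z$. Then Algorithm~\ref{alg:proximal-minimization} is Algorithm~\ref{alg:resample} for this choice of operators, and the proof amounts to checking that each step of the two algorithms produces the same iterates and the same selection criterion.

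The first step is the Optimize stage. By convexity, the empirical Aumann integral satisfies $A_{P_n}(x) = \frac{1}{n}\sum_i \partial \loss_{Z_i}(x) = \partial \poploss_{P_n}(x)$. The subdifferential sum rule applies here because $\relint \dom \loss_z \supset X$, so $X$ lies in the relative interior of every summand's domain. The minimizer $\what{x}^{\regmult_n}$ of $\poploss_{P_n}(x) + \frac{\regmult_n}{2}\ltwo{x}^2$ over $X$ is unique by strong convexity. By the first-order optimality conditions it is characterized by $0 \in \partial \poploss_{P_n}(x) + \regmult_n x + \normalcone_X(x)$, which is exactly problem~\eqref{eqn:l2-empirical-vi}. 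Again the sum rule with the indicator of $X$ is justified because $X$ has interior and lies in $\relint \dom \poploss_{P_n}$.

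The second step is the Resample stage. The point $x^{(i)} = \argmin_{x \in X \cap \what{B}}\{\poploss_{P_n}(x) + \half\ltwos{x - u^{(i)}}^2\}$ is characterized by $u^{(i)} - x^{(i)} \in \partial \poploss_{P_n}(x^{(i)}) + \normalcone_{X \cap \what{B}}(x^{(i)})$. Equivalently, $x^{(i)} = (I + A_{P_n} + \normalcone_{X\cap\what{B}})^{-1}(u^{(i)})$, which is the resolvent used in Algorithm~\ref{alg:resample}. The only care needed is in the identity $\partial(\poploss_{P_n} + \iota_{X \cap \what{B}}) = \partial \poploss_{P_n} + \normalcone_{X\cap \what{B}}$. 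This holds by the standard sum rule since $\poploss_{P_n}$ is finite on a neighborhood of $X \cap \what{B}$. On the intersection itself, $\normalcone_{X \cap \what{B}} = \normalcone_X + \normalcone_{\what{B}}$ holds whenever the two sets share an interior point. That fails only with vanishing probability, since $\what{x}^{\regmult_n} \cp x^0$ and $X$ has interior, so a ball of radius $b$ about a point near $x^0 \in X$ meets $\interior X$.

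The third step is the Return stage. Here $\partial \poploss_{P_n'}(x) = A_{P_n'}(x)$, so the selection rules coincide. It remains to check the hypotheses of the algorithmic theorem. Assumption~\ref{assumption:basic-vi-setting} is assumed for $A_z = \partial \loss_z$, and the radius requirement on $b$ is the input condition; I would state it explicitly as in Algorithm~\ref{alg:resample}, namely that the moments hold at radius $2b$. The conclusion then follows verbatim.

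The only real obstacle is bookkeeping. One must confirm the subdifferential calculus identities, in particular the Aumann integral identity $\partial \poploss_{P} = \E_P[\partial \loss_Z]$, which the paper already cites as essentially immediate from Bertsekas~\cite{Bertsekas73}, and the constraint qualification for the intersection $X \cap \what{B}$. I would handle the latter by working on the high-probability event $\{B_{1/2}\subset \what{B}\}$ already used in the theorem's proof.
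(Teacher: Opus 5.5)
Your proposal is correct and matches the paper's argument: the paper gives no separate proof beyond observing that Algorithm~\ref{alg:proximal-minimization} is Algorithm~\ref{alg:resample} with $A_z = \partial \loss_z$ and invoking the preceding theorem. Your subdifferential-calculus checks simply make that identification explicit, including $A_P = \partial \poploss_P$. You are also right that the input $b$ must satisfy the moment condition at radius $2b$, a requirement that Algorithm~\ref{alg:proximal-minimization} leaves out. The one unnecessary step is the decomposition $\normalcone_{X \cap \what{B}} = \normalcone_X + \normalcone_{\what{B}}$, which neither algorithm uses. Its constraint qualification also holds deterministically, not just with high probability: $\what{x}^{\alpha_n} \in X = \cl \interior X$, so $\what{B}$ always meets $\interior X$.
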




\section{Proof of Theorem~\ref{theorem:lebesgue-measure-positive}}
\label{sec:proof-lebesgue-measure-positive}

\newcommand{\inddim}{\mathop{\textup{ind}}}  
\newcommand{\Inddim}{\mathop{\textup{Ind}}}  
\newcommand{\dimhaus}{\textup{dim}_{\hausmeasure}}  
\newcommand{\boundconst}{r}  

This section provides the full proof of
Theorem~\ref{theorem:lebesgue-measure-positive}.
We follow the outline in
steps~\ref{item:minty-to-full-subset}--\ref{item:large-hausdorff} in
Sections~\ref{sec:proof-minty-to-full-subset} through
\ref{sec:proof-large-hausdorff}.
Before continuing, however, we provide some preliminary
background on convergence of empirical subdifferentials (and monotone
operators) to their population counterparts, as well as a brief
review of the dimension theory we will use.

\subsection{Convergence of empirical subdifferentials and monotone maps}

We begin with guarantees on the (pointwise) convergence of subdifferentials
of convex functions and, more generally, set-valued random functions.
Recall the Hausdorff distance
\begin{equation*}
  \dhaus(A, B) =
  \max\Big\{\sup_{a \in A} \dist(a, B), \sup_{b \in B} \dist(b, A)\Big\}
\end{equation*}
between
sets, where $\dist(x, A) = \inf_{a \in A} \ltwo{x - a}$.
Now, let $S_z \subset \R^d$ be compact-convex-valued sets, measurable
in $z$, and let $S_P = \E_P[S_Z]
= \int S_z dP(z)$ be the (Aumann) expectation of $S_Z$
for $Z \sim P$
and $S_{P_n} = \frac{1}{n} \sum_{i = 1}^n S_{Z_i}$ its empirical counterpart.
\begin{lemma}
  \label{lemma:empirical-subdifferential-pointwise}
  Let $2 \le p < \infty$
  and assume $\sup\{\ltwo{v} \mid v \in S_z\}
  \le \lipconst(z)$, where $\E[\lipconst^p(Z)] < \infty$.
  Then for a constant $C_p$ depending only on $p$,
  \begin{align*}
    \E_P\left[\dhaus^p(S_{P_n}, S_P)\right]
    \le C_p \cdot \frac{d^{p/2} \cdot \E[\lipconst^p(Z)]}{n^{p/2}}.
  \end{align*}
\end{lemma}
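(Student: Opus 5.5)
We would prove the bound through support functions, turning the Hausdorff distance between compact convex sets into a supremum of an empirical process over the unit sphere. For compact convex $S, T \subset \R^d$ with support functions $\sigma_S(u) = \sup_{v \in S} \<u, v\>$, the standard identity (H\"ormander) gives $\dhaus(S, T) = \sup_{\ltwo{u} \le 1} |\sigma_S(u) - \sigma_T(u)|$. Support functions commute with Minkowski averages and with Aumann expectations. So $\sigma_{S_{P_n}}(u) = \frac{1}{n}\sum_{i=1}^n \sigma_{S_{Z_i}}(u)$ and $\sigma_{S_P}(u) = \E[\sigma_{S_Z}(u)]$. The second identity holds because $S_z$ is compact convex and measurable, so a measurable selection attaining $\<u, \cdot\>$ exists and the supremum passes inside the integral. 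Hence
\begin{equation*}
  \dhaus(S_{P_n}, S_P) = \sup_{u \in \ball} \left|\frac{1}{n} \sum_{i=1}^n \left(\sigma_{S_{Z_i}}(u) - \E[\sigma_{S_Z}(u)]\right)\right|,
\end{equation*}
which is the supremum of a centered empirical process indexed by $u$. Each summand is $\lipconst(Z_i)$-Lipschitz in $u$ and bounded by $\lipconst(Z_i)$ on the ball.

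Next we would symmetrize. The moment form of the symmetrization inequality for $p \ge 1$ gives $\E[\sup_u |\cdots|^p] \le 2^p \E[\sup_u |\frac{1}{n}\sum_i \varepsilon_i \sigma_{S_{Z_i}}(u)|^p]$, with Rademacher signs $\varepsilon_i$. We then condition on $Z_1, \ldots, Z_n$. The process $u \mapsto \frac{1}{\sqrt n}\sum_i \varepsilon_i \sigma_{S_{Z_i}}(u)$ is sub-Gaussian with respect to the metric $\rho(u, u') = \lipconst_n \ltwo{u - u'}$, where $\lipconst_n^2 = \frac{1}{n}\sum_i \lipconst^2(Z_i)$. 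This uses the Lipschitz property of support functions: $|\sigma_S(u) - \sigma_S(u')| \le \sup_{v \in S}\ltwo{v}\,\ltwo{u - u'}$.

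Dudley's entropy integral over the unit ball, whose covering numbers are $N(\delta) \le (3/\delta)^d$, then yields conditional $L^p$ bounds of order $\sqrt{p}\,\lipconst_n \sqrt{d}$ on the supremum. One adds the value at a single point $u = 0$, where the process vanishes, and uses the $L^p$ (generic chaining) version of Dudley's bound. The conditional $p$-th moment is therefore at most $C_p d^{p/2} \lipconst_n^p / n^{p/2}$. Taking expectations, Jensen's inequality gives $\E[\lipconst_n^p] = \E[(\frac{1}{n}\sum_i \lipconst^2(Z_i))^{p/2}] \le \E[\lipconst^p(Z)]$, since $p/2 \ge 1$ and $t \mapsto t^{p/2}$ is convex. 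This gives the claimed bound.

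The main obstacle is obtaining the $d^{p/2}$ dependence with a constant depending only on $p$, rather than extra logarithmic factors. We would handle it by applying Dudley's integral in its moment form, $\|\sup_u X_u\|_{L^p} \lesssim \sqrt{p}\int_0^{\mathrm{diam}} \sqrt{\log N(\delta)}\, d\delta$. The $\sqrt p$ factor is absorbed into $C_p$, and the integral $\int_0^{2} \sqrt{d\log(3/\delta)}\,d\delta \lesssim \sqrt d$ gives the dimension factor.

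A cruder fallback also suffices for the $d$-dependence. Take a $1/2$-net $N$ of the sphere of size $5^d$. For convex sets, $\dhaus \le 2\max_{u \in N}|\sigma_{S_{P_n}}(u) - \sigma_{S_P}(u)|$ fails in general, so instead we would bound the supremum by twice the net maximum plus half the supremum, using a Lipschitz bound of the difference in terms of $\dhaus$ itself. We then combine this with Rosenthal's inequality and a union bound, $\E\max_{u\in N}|\cdot|^p \le |N|^{?}$. However, this loses powers of $d$ through $\log|N|$ only if one uses Orlicz-type tails, which are unavailable under mere $p$ moments. We therefore prefer the chaining route with the conditional sub-Gaussianity from the Rademacher signs. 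Measurability of the supremum is harmless, since it equals a supremum over a countable dense set of $u$ by continuity.
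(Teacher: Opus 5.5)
Your chaining argument is correct and is essentially the paper's proof: the support-function identity for $\dhaus$, symmetrization, conditioning on the sample, an entropy integral over the unit ball that produces the $\sqrt{d}$, and Jensen applied to $(\frac{1}{n}\sum_{i} \lipconst^2(Z_i))^{p/2}$. The only difference is that the paper controls the conditional $p$-th moment by combining convex (Talagrand-type) concentration for $\varepsilon \mapsto \sup_{v \in \ball} |\sum_i \varepsilon_i \sigma_{S_{Z_i}}(v)|$, which is a convex $\sqrt{\sum_i \lipconst^2(Z_i)}$-Lipschitz function of the signs, with an entropy-integral bound on its conditional mean, whereas you get mean and fluctuations at once from the $L^p$ (tail) form of Dudley's bound; your discarded net fallback would indeed fail as sketched, since the Lipschitz constant of $\sigma_{S_{P_n}} - \sigma_{S_P}$ is not controlled by $\dhaus(S_{P_n}, S_P)$, but the main argument does not rely on it.
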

\noindent
See Appendix~\ref{sec:proof-empirical-subdifferential-pointwise} for a
proof.
The lemma also has the converse that there exist $1$-Lipschitz convex loss
functions for which the empirical subdifferential has expected Hausdorff
distance from the population subdifferential at least $\min\{1,
\sqrt{d/n}\}$, though we omit this result.

Lemma~\ref{lemma:empirical-subdifferential-pointwise} extends to the
monotone operator cases we consider: if $A_z : \R^d \toto \R^d$
satisfy Assumption~\ref{assumption:basic-vi-setting},
then we immediately obtain the following result.
\begin{lemma}
  \label{lemma:empirical-subdifferential-pointwise-normal}
  Let Assumption~\ref{assumption:basic-vi-setting} hold.
  Then for $x$ near $x^0$,
  \begin{align*}
    \E\left[\dhaus^2(A_{P_n}(x) + \normalcone_X(x),
      A_P(x) + \normalcone_X(x))\right]
    \le O(1) \cdot \frac{d \cdot \E[\lipconst^2(Z)]}{n}.
  \end{align*}
\end{lemma}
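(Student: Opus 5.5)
The plan is to reduce the lemma to Lemma~\ref{lemma:empirical-subdifferential-pointwise} with $p = 2$, using the fact that adding the same set to both arguments cannot increase the Hausdorff distance.

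First, the set-level inequality. For sets $S, T, N \subset \R^d$ we have $\dhaus(S + N, T + N) \le \dhaus(S, T)$. Indeed, take any $s + w$ with $s \in S$ and $w \in N$. Then
\begin{equation*}
  \dist(s + w, T + N) \le \inf_{t \in T} \ltwo{s + w - (t + w)} = \dist(s, T) \le \dhaus(S, T),
\end{equation*}
and the argument is symmetric in $S$ and $T$. This holds even though $N = \normalcone_X(x)$ may be unbounded; at most the Hausdorff distance can be $+\infty$ on the left, and here it is controlled by the right side. Applying it with $N = \normalcone_X(x)$, $S = A_{P_n}(x)$ and $T = A_P(x)$ gives
\begin{equation*}
  \dhaus\bigl(A_{P_n}(x) + \normalcone_X(x), A_P(x) + \normalcone_X(x)\bigr) \le \dhaus\bigl(A_{P_n}(x), A_P(x)\bigr).
\end{equation*}

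Second, I check the hypotheses of Lemma~\ref{lemma:empirical-subdifferential-pointwise} for $S_z = A_z(x)$ when $\ltwo{x - x^0} \le b$.
\begin{enumerate}[label=(\roman*),leftmargin=*]
\item Assumption~\ref{assumption:basic-vi-setting}\ref{item:vi-moments} says every selection has norm at most $\lipconst(z)$, so $\sup\{\ltwo{v} \mid v \in A_z(x)\} \le \lipconst(z)$ with $\E[\lipconst^2(Z)] < \infty$.
\item Each $A_z(x)$ is convex and closed, since a maximal monotone operator has closed convex values. Being bounded, it is therefore compact.
\item The empirical operator is $A_{P_n}(x) = \frac{1}{n}\sum_i A_{Z_i}(x)$, which matches $S_{P_n}$. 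The Aumann integral defining $A_P(x)$ matches $S_P$.
\end{enumerate}

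With these in place, Lemma~\ref{lemma:empirical-subdifferential-pointwise} at $p = 2$ gives $\E[\dhaus^2(A_{P_n}(x), A_P(x))] \le C_2\, d\, \E[\lipconst^2(Z)]/n$. Squaring the set-level inequality and taking expectations then yields the claim with $O(1) = C_2$.

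The only delicate point is the bookkeeping in the second step. One must confirm that $A_P(x)$, defined as an Aumann integral, coincides with the $S_P$ of Lemma~\ref{lemma:empirical-subdifferential-pointwise}. One must also confirm that $x$ lies in the $b$-neighborhood of $x^0$ where the envelope bound holds; this is what "near $x^0$" means in the statement. Neither is substantive, since the lemma is stated directly for Aumann expectations of compact-convex-valued maps.
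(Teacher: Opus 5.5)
Your proposal is correct and matches the paper's proof: the paper likewise notes that $\dist(x+y, C_1+K) \le \dist(x, C_1)$ for $x \in C_0$, $y \in K$, deduces $\dhaus(C_0+K, C_1+K) \le \dhaus(C_0, C_1)$, and then applies Lemma~\ref{lemma:empirical-subdifferential-pointwise} with $p = 2$. Your explicit check of that lemma's hypotheses (compact convex values, envelope bound within the $b$-neighborhood, Aumann integral matching $S_P$) is bookkeeping the paper leaves implicit.
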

\begin{proof}
  If $C_0, C_1, K \subset \R^d$, for
  any
  $x \in C_0$ and $y \in K$,
  we have $\dist(x + y, C_1 + K)
  \le \dist(x, C_1)$.
  Thus
  $\dhaus(C_0 + K, C_1 + K) \le \dhaus(C_0, C_1)$.
  Apply Lemma~\ref{lemma:empirical-subdifferential-pointwise}.
\end{proof}

In some cases, we will truncate the random monotone operators
we consider.
Accordingly, we have the following result,
whose proof we defer to Appendix~\ref{sec:proof-truncating-hausdorff}.
\begin{lemma}
  \label{lemma:truncating-hausdorff}
  Let $C$ and $D$ be closed convex sets,
  and let $c > 0$ be large enough that the ball
  $B = c \cdot \ball$ satisfies
  $D \cap B \neq \emptyset$.
  Then
  \begin{align*}
    \dhaus(C, D) \le \frac{c}{2}
    ~~ \mbox{implies} ~~
    \dhaus(C \cap 2B, D \cap 2B)
    \le 6 \cdot \dhaus(C, D).
  \end{align*}
\end{lemma}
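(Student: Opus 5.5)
The plan is a direct construction. Write $\delta = \dhaus(C, D) \le c/2$ and fix a slack $\eta > 0$, to be sent to $0$ at the end. Because $\dhaus$ is symmetric but the hypothesis $D \cap B \neq \emptyset$ is not, the two one-sided distances are bounded separately. In each case I take a point of one set inside $2B$, move it to a nearby point of the other set, and then pull that point back into $2B$ by a convex combination with an anchor point of the other set that lies well inside $2B$. Convexity keeps the combination in the set, and the hypothesis $\delta \le c/2$ guarantees that the pull-back costs only $O(\delta)$.

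\emph{Direction 1: $\sup_{x \in C \cap 2B} \dist(x, D \cap 2B)$.} Let $x \in C \cap 2B$ and choose $y \in D$ with $\ltwo{x - y} \le \delta + \eta$. Then $\ltwo{y} \le 2c + \delta + \eta$. The anchor is some $d_0 \in D \cap B$, so $\ltwo{d_0} \le c$. Set $y_t = (1-t) y + t d_0 \in D$, which satisfies
\[
\ltwo{y_t} \le 2c + (\delta + \eta) - t(c + \delta + \eta).
\]
Taking $t = (\delta + \eta)/(c + \delta + \eta) \in [0,1]$ gives $y_t \in 2B$. The displacement is
\[
\ltwo{y_t - y} = t \ltwo{y - d_0} \le t\,(3c + \delta + \eta) \le 3(\delta + \eta).
\]
Therefore $\dist(x, D \cap 2B) \le 4(\delta + \eta)$.

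\emph{Direction 2: $\sup_{x \in D \cap 2B} \dist(x, C \cap 2B)$.} Here $C$ need not meet $B$, so I first build an anchor in $C$. Pick $c_0 \in C$ within $\delta + \eta$ of $d_0$, so that $\ltwo{c_0} \le c + \delta + \eta$. For $x \in D \cap 2B$, pick $y \in C$ with $\ltwo{x - y} \le \delta + \eta$, and set $y_t = (1-t) y + t c_0 \in C$. Then
\[
\ltwo{y_t} \le (1-t)(2c + \delta + \eta) + t(c + \delta + \eta) = 2c + \delta + \eta - tc.
\]
The choice $t = (\delta + \eta)/c$ puts $y_t \in 2B$, and $t \le 1$ holds for small $\eta$ because $\delta \le c/2$. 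Its cost is
\[
\ltwo{y_t - y} \le t\,(3c + 2\delta + 2\eta) \le 4(\delta + \eta) + O(\eta),
\]
again using $\delta \le c/2$. This yields $\dist(x, C \cap 2B) \le 5\delta + O(\eta)$.

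Combining the two directions and letting $\eta \downarrow 0$ gives $\dhaus(C \cap 2B, D \cap 2B) \le 5\delta \le 6\delta$. The only mildly delicate point is Direction 2: the anchor $c_0$ sits up to $\delta$ outside $B$, so the step size $t$ and the resulting constant must be tracked carefully. This is exactly where the hypothesis $\delta \le c/2$ is needed, both to guarantee $t \le 1$ and to keep the constant bounded. Everything else is the triangle inequality.
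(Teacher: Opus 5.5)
Your proposal is correct and follows essentially the same route as the paper. In both, a point of $C$ shows $C \cap 2B \neq \emptyset$: the paper's $y_0$, your $c_0$, taken within $\delta$ of the anchor in $D \cap B$. Each near point is then pulled back into $2B$ by a convex combination with that anchor (or its partner in $C$) with step size of order $\delta/c$. Your Directions 1 and 2 match the paper's steps 3 and 2. The differences are cosmetic: you use an $\eta$-slack where the paper uses nearest points (which exist for closed sets in $\R^d$), and you keep the constant at $5$ where the paper rounds up to $6$.
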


For any $t > 0$, in the setting of the lemma, $\dhaus(C \cap 2B, D \cap 2B)
\ge t$ therefore implies that either $\dhaus(C, D) \ge \frac{c}{2}$ or
$\dhaus(C, D) \ge t/6$.
Thus, if $S_n$ is a random sequence of convex sets
and $B = c \cdot \ball$ satisfies $S \cap B \neq \emptyset$,
we obtain for any $p < \infty$ and $k \ge 0$ that
\begin{align*}
  \E\left[\dhaus^p(S_n \cap 2B, S \cap 2B) \wedge k \right]
  & = \int_0^{k^{1/p}} \P(\dhaus(S_n \cap 2B, S \cap 2B) \ge t^{1/p})
  dt \\
  & \le \int_0^{k^{1/p}}
  \left(\P(\dhaus(S_n, S) \ge t^{1/p} / 6)
  + \P(\dhaus(S_n, S) \ge c/2)\right) dt \\
  & \le \E\left[6^p \dhaus^p(S_n, S) \wedge k\right]
  + k^{1/p} \cdot \P(\dhaus(S_n, S) \ge c/2).
\end{align*}
The following lemma is then immediate.
\begin{lemma}
  \label{lemma:convergence-after-truncation}
  Let $S_n$ and $S \subset \R^d$ be convex sets satisfying $\dhaus(S_n, S)
  \cp 0$, and let $c > 0$ be large enough that the ball $B = c \cdot \ball$
  satisfies $S \cap B \neq \emptyset$.
  Then
  \begin{align*}
    \dhaus(S_n \cap 2 B, S \cap 2 B) \cp 0.
  \end{align*}
  If $\E[\dhaus^p(S_n, S)] \to 0$ for some $p > 0$, then $\E[\dhaus^p(S_n
    \cap 2B, S \cap 2B) \wedge k] \to 0$ for any $k < \infty$.
\end{lemma}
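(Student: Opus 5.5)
The first claim is an immediate consequence of the preceding display, so my plan is to turn that display into a statement about probabilities. Write $D_n = \dhaus(S_n, S)$ and $D_n' = \dhaus(S_n \cap 2B, S \cap 2B)$, and fix $t > 0$. Lemma~\ref{lemma:truncating-hausdorff} applies with $C = S_n$ and $D = S$, since $S \cap B \neq \emptyset$. It gives the event inclusion
\begin{equation*}
  \{D_n' \ge t\} \subset \{D_n \ge t/6\} \cup \{D_n \ge c/2\}.
\end{equation*}
Taking probabilities,
\begin{equation*}
  \P(D_n' \ge t) \le \P(D_n \ge t/6) + \P(D_n \ge c/2).
\end{equation*}
Both terms on the right tend to zero because $D_n \cp 0$. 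As $t > 0$ was arbitrary, $D_n' \cp 0$.

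For the second claim I start from the tail-integral bound derived just before the lemma:
\begin{equation*}
  \E[D_n'^p \wedge k] \le \E[6^p D_n^p \wedge k] + k^{1/p}\, \P(D_n \ge c/2).
\end{equation*}
The first term is at most $6^p \E[D_n^p]$, which tends to zero by hypothesis. For the second term, Markov's inequality gives $\P(D_n \ge c/2) \le (2/c)^p \E[D_n^p] \to 0$, and $k$ is fixed. Hence $\E[D_n'^p \wedge k] \to 0$.

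The only point needing care is the tail-integral identity itself. The correct form is
\begin{equation*}
  \E[Y \wedge k] = \int_0^k \P(Y \ge s)\, ds \qquad\text{with } Y = D_n'^p,
\end{equation*}
and the event $\{Y \ge s\}$ is $\{D_n' \ge s^{1/p}\}$. This changes only the constant multiplying $\P(D_n \ge c/2)$, which becomes $k$ instead of $k^{1/p}$, and the conclusion is unaffected. I do not expect any further obstacle. Measurability of $D_n'$ is taken as given, following the paper's standing convention on measurability.
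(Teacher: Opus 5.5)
Your proposal is correct and follows the paper's own argument: the event inclusion from Lemma~\ref{lemma:truncating-hausdorff} gives the convergence in probability, and the tail-integral bound together with $\P(\dhaus(S_n, S) \ge c/2) \to 0$ gives the truncated moment claim. Your correction to the tail integral is also right. Since $\E[Y \wedge k] = \int_0^k \P(Y \ge s)\,ds$, the paper's upper limit $k^{1/p}$ should be $k$, so the second term is $k \cdot \P(\dhaus(S_n, S) \ge c/2)$, and this does not affect the conclusion.
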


\subsection{A brief review of dimension theory}

We provide definitions relating to dimension and Hausdorff measure
that play a role in our arguments.
For background, see the books on dimension theory
by~\citet{HurewiczWa41} and \citet{Engelking78}, and
geometric measure theory by \citet{Federer69} and \citet{Maggi12}.

Defining the dimensional constant $\omega_s \defeq
\frac{\pi^{s/2}}{\Gamma(\frac{s}{2} + 1)}$ for $s \ge 0$, the $s$-Hausdorff
measure of a set $X$ at diameter $\delta$ is
\begin{align*}
  \hausmeasure^s_\delta(X)
  \defeq \inf\left\{\sum_{i = 1}^\infty \omega_s
  \left(\frac{\diam(U_i)}{2} \right)^s
  \mid \diam(U_i) < \delta,
  \bigcup_i U_i \supset X
  \right\}
\end{align*}
the infimum taken over covers of $X$ with diameters less than $\delta$.
Because the infima are over smaller collections,
$\hausmeasure^s_\delta$ is non-decreasing as $\delta \downarrow 0$,
so one defines
\begin{align*}
  \hausmeasure^s(X)
  = \sup_{\delta > 0} \hausmeasure_\delta^s(X) = \lim_{\delta \downarrow 0}
  \hausmeasure_\delta^s(X).
\end{align*}
The $0$-dimensional Hausdorff measure $\hausmeasure^0(X) = \card(X)$.
%
For Borel $X \subset \R^d$, we always have $\hausmeasure^d(X) =
\lebesgue(X)$, where $\lebesgue$ denotes Lebesgue measure.
The
\emph{Hausdorff dimension} of a set is
\begin{align*}
  \dimhaus(X) \defeq \inf\left\{s \ge 0 \mid \hausmeasure^s(X) = 0 \right\}.
\end{align*}
For convex sets $X$, this coincides with the usual affine
dimension of the set~\cite[Thm.~6.2]{Rockafellar70},
and if $X \subset \R^d$ has interior, then
$\dimhaus(X) = d$~\cite[Ch.~3.1]{Maggi12}.

We also use the \emph{little inductive dimension} $\inddim$, where
recalling~\cite{HurewiczWa41, Engelking78}, $\inddim X$ has the inductive
definition that
\begin{enumerate}[leftmargin=*,label=(\roman*)]
\item $\inddim \emptyset = -1$
\item $\inddim X \le d$ if for all points $x \in X$
  and all neighborhoods $V \ni x$, there exists
  a neighborhood $U$ of $x$ with $x \in \cl U \subset V$ and
  $\inddim \bd U \le d - 1$, where
  $\bd U = \cl U \setminus U$.
\item $\inddim X = d$ if $\inddim X \le d$ and $\inddim X > d - 1$.
\item $\inddim X = +\infty$ if there is no $d$ for which $\inddim X \le d$.
\end{enumerate}
Standard results~\cite{HurewiczWa41, Engelking78} imply that $\inddim \R^d =
d$, and we always have $\inddim X \le \dimhaus X$ for all sets
$X$~\cite[Ch.~VII]{HurewiczWa41}.
The inductive dimension coincides with the topological dimension
(the Lebesgue covering dimension) when $X$ is a separable metric
space~\cite[Thm.~1.7.7]{Engelking78}.
As we work exclusively with separable metric spaces (subsets of $\R^d$ or
$\R^{2d}$ inheriting the Euclidean metric structure), these notions of
dimension are equivalent.

\subsection{Step~\ref{item:minty-to-full-subset}: the pre-image of stationary
  points}
\label{sec:proof-minty-to-full-subset}

We first consider a generic maximal monotone operator $A$ with
stationary points, where the Minty parameterization demonstrates
that subsets of the graph $\gr A$ have Hausdorff and inductive dimension
$d$.
Let $A : \R^d \toto \R^d$ have a solution $x_0$ with $0 \in A(x_0)$,
and for $\epsilon > 0$ let
\begin{align*}
  \statpoints(\epsilon) \defeq \left\{x \mid \dist(0, A(x)) \le \epsilon
  \right\}.
\end{align*}
To state the result, recall the Minty
parameterization~\eqref{eqn:minty-param} of the graph $\gr A$.

\begin{lemma}
  \label{lemma:minty-to-full-subset}
  Let $0 < \alpha < \epsilon$,
  $\lambda > 0$ and define the compact
  $K = x_0 + \lambda \alpha \ball$,
  where $0 \in A(x_0)$.
  Then the following hold.
  \begin{enumerate}[leftmargin=*,label=(\roman*)]
  \item \label{item:resolvent-near-stationary} The resolvent mapping
    satisfies $\res{\lambda}{A}(K) \subset \statpoints(\epsilon) \cap K$.
  \item \label{item:graph-dimension}
    For any $\epsilon \le \boundconst < \infty$,
    the graph piece
    \begin{align*}
      G \defeq \gr A \cap (\res{\lambda}{A}(K)
      \times \boundconst \cdot \ball) \supset
      \minty{\lambda}{A}(K)
    \end{align*}
    has Hausdorff, topological, and inductive dimension $d$.
  \item \label{item:coord-projection-stationary}
    Define the coordinate projection
    $\pi : \R^d \times \R^d \to \R^d$ by $\pi(x, z) = x$.
    Then
    \begin{align*}
      \pi(G) \subset S(\epsilon).
    \end{align*}
  \end{enumerate}
\end{lemma}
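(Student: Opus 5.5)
The plan is to argue each part directly from firm nonexpansiveness of the resolvent and from the Minty parameterization~\eqref{eqn:minty-param}.

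\emph{Part~\ref{item:resolvent-near-stationary}.}
Take $x \in K$ and set $y = \res{\lambda}{A}(x)$, so that $v = \frac{1}{\lambda}(x - y) \in A(y)$. Since $0 \in A(x_0)$, we have $x_0 = \res{\lambda}{A}(x_0)$.

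\begin{itemize}
\item Because the resolvent is $1$-Lipschitz, $\ltwo{y - x_0} \le \ltwo{x - x_0} \le \lambda\alpha$, so $y \in K$.
\item For the stationarity bound I would use the monotonicity computation from Section~\ref{sec:related-work}, or equivalently firm nonexpansiveness of $\res{\lambda}{A}$. This gives that $I - \res{\lambda}{A}$ is $1$-Lipschitz, and that map vanishes at $x_0$. Hence $\ltwo{x - y} \le \ltwo{x - x_0} \le \lambda\alpha$.
\item Therefore $\ltwo{v} \le \alpha < \epsilon$, so $y \in \statpoints(\epsilon)$.
\end{itemize}

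\emph{Part~\ref{item:coord-projection-stationary}.}
This is immediate from the definition: $\pi(G) \subset \res{\lambda}{A}(K) \subset \statpoints(\epsilon)$ by part~\ref{item:resolvent-near-stationary}.

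\emph{Part~\ref{item:graph-dimension}.}
First the containment $\minty{\lambda}{A}(K) \subset G$. For $x \in K$, the point $\minty{\lambda}{A}(x) = (y, v)$ lies in $\gr A$ with $y \in \res{\lambda}{A}(K)$. The bound from part~\ref{item:resolvent-near-stationary} gives $\ltwo{v} \le \alpha < \epsilon \le \boundconst$, so $v \in \boundconst\ball$.

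Next the dimension. I would obtain a lower bound and an upper bound and then sandwich the three notions of dimension.

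\begin{itemize}
\item \textbf{Lower bound.} $\minty{\lambda}{A}$ restricted to $K$ is a homeomorphism onto its image; it is bi-Lipschitz with inverse $(y, v) \mapsto y + \lambda v$. Inductive dimension is a topological invariant, so $\inddim \minty{\lambda}{A}(K) = \inddim K = d$, because $K$ is a closed ball with interior in $\R^d$. Monotonicity of $\inddim$ under subsets of separable metric spaces then gives $\inddim G \ge d$.
\item \textbf{Upper bound.} $G \subset \gr A = \minty{\lambda}{A}(\R^d)$, and a Lipschitz map cannot increase Hausdorff dimension. So $\dimhaus G \le \dimhaus \R^d = d$. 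More concretely, $G = \minty{\lambda}{A}(K')$ with $K' = \minty{\lambda}{A}^{-1}(G)$ bounded, and $\hausmeasure^s(\minty{\lambda}{A}(K')) \le (1 + \lambda^{-2})^{s/2}\hausmeasure^s(K') = 0$ for $s > d$.
\item \textbf{Sandwich.} Using $\inddim \le \dimhaus$ from the review, we get $d \le \inddim G \le \dimhaus G \le d$. Equality of inductive and topological dimension for separable metric spaces then handles the topological dimension.
\end{itemize}

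The main obstacle I expect is not computational. It is making sure the cited dimension-theoretic facts apply cleanly to the possibly non-closed set $G$: monotonicity of $\inddim$ under arbitrary subsets, and topological invariance. This is where I would lean on the separable metric space results from \citet{HurewiczWa41} and \citet{Engelking78} rather than reprove anything.
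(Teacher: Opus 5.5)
Your proposal is correct and follows the paper's route: nonexpansiveness of the resolvent for part (i), where the paper derives $\|y - x_0\|_2 \le \|x - x_0\|_2$ from the same monotonicity inequality; part (iii) as an immediate consequence of part (i); and the bi-Lipschitz Minty parameterization for the dimension claim in part (ii). Your appeal to firm nonexpansiveness (so that $I - (I + \lambda A)^{-1}$ is $1$-Lipschitz and vanishes at $x_0$) justifies the bound $\|x - y\|_2 \le \lambda\alpha$ that the paper states without derivation, and your lower/upper dimension sandwich spells out what the paper compresses into ``bi-Lipschitz maps preserve dimension.''
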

\begin{proof}
  We first prove part~\ref{item:resolvent-near-stationary}.
  Let $K = x_0 + \lambda \alpha \ball$
  and $x \in K$.
  Then because $\dom \res{\lambda}{A} = \R^d$~\cite[Thm.~12.1]{RockafellarWe98},
  there exist $(y, v) = \minty{\lambda}{A}(x) \in \gr A$ satisfying
  \begin{align*}
    y = (I + \lambda A)^{-1} (x)
    ~~ \mbox{and} ~~
    \lambda v = x - (I + \lambda A)^{-1}(x).
  \end{align*}
  Noting that $x_0 = (I + \lambda A)^{-1} (x_0)$, let $0 = u_0 \in A(x_0)$
  and $u \in A(y)$ satisfy $x_0 = x_0 + \lambda u_0$ and $x = y + \lambda
  u$, then use that $A$ is maximal monotone to obtain that
  \begin{align*}
    \frac{1}{\lambda} \ltwo{x_0 - y}^2
    & = \frac{1}{\lambda} \<I x_0 - I y, x_0 - y\> \\
    & \le \frac{1}{\lambda} \< I x_0 + \lambda u_0 - I y - \lambda u,
    x_0 - y\>
    = \frac{1}{\lambda} \<x_0 - x, x_0 - y\>.
  \end{align*}
  Applying
  the Cauchy-Schwarz inequality
  yields $\ltwo{y - x_0} \le \ltwo{x - x_0} \le \lambda \alpha$.
  Moreover,
  \begin{align*}
    \ltwo{v} = \ltwo{\yosida{\lambda}{A}(x)}
    = \frac{1}{\lambda} \ltwo{x - y} \le \alpha,
  \end{align*}
  so that $\ltwo{v} \le \alpha < \epsilon$.
  By definition of the resolvent, $v = Y_{\lambda A}(x) \in A(y)$,
  so $y \in S(\epsilon)$.

  For statements~\ref{item:graph-dimension}
  and~\ref{item:coord-projection-stationary} of the lemma, recall that
  $\minty{\lambda}{A}$ is a bi-Lipschitz homeomorphism and so preserves
  Hausdorff dimension, inductive dimension, and thus topological dimension
  as well.
  Then use the first part of the lemma and
  that $\dim K = d$.
\end{proof}

\subsection{Step~\ref{item:full-dim-parameterization}: full
  dimensional graph subsets}

The second step of the proof demonstrates that if $A$ is maximal monotone, we
may decompose the graph $\gr A$ into pieces that maintain
full-dimensionality.
Let $0 < \boundconst < \infty$ be a (constant) radius of interest, $X
\subset \R^d$ be a compact subset of $\dom A$, and define the
\emph{graph piece}
\begin{align*}
  G(X, \boundconst) \defeq \gr A \cap \left(X \times \boundconst \ball\right).
\end{align*}
Because $A$ is maximal monotone, $\gr A$ is closed,
so $G(X, \boundconst)$ is a compact set.
Define the projection $\pi : G \to \R^d$, $\pi(x, v) \defeq x$, onto the
first $d$ coordinates, and the slices (fibers)
\begin{align*}
  G_x \defeq A(x) \cap \boundconst \ball
  ~~ \mbox{and} ~~
  \pi^{-1}(x) = \{x\} \times G_x,
\end{align*}
where $G_x$ is compact convex because $A$ is maximal monotone and $\boundconst <
\infty$.
The following lemma shows that
we can decompose pieces of $G \subset \R^d \times \R^d$ into parts
of the form $(x, G_x)_{x \in X_k}$, where
each slice $G_x$ is $k$-dimensional,
and for at least one $k$, the base set $X_k$ is $(d - k)$-dimensional.

\begin{lemma}
  \label{lemma:splitting-hausdorff-slices}
  Let $A$ be maximal monotone, $X \subset \dom A$ be compact,
  $G \defeq G(X, \boundconst)$, and $G_x = A(x)
  \cap \boundconst \cdot \ball$ be non-empty for $x \in X$.
  Assume $G$ has inductive dimension $\inddim G = d$ (and hence Hausdorff
  measure $0 < \hausmeasure^d(G) < \infty$).
  Then there is a partition of $G$ into sets
  $K_i$ with following properties.
  \begin{enumerate}[label=(\roman*),leftmargin=*]
  \item \label{item:partition-by-hausdorff}
    For each $i = 0, 1, 2, \ldots, d$,
    \begin{align*}
      K_i = \bigcup_{x \in X} \left\{(x, v) \in G
      \mid 0 < \hausmeasure^i(G_x) < \infty
      \right\}
      = \bigcup_{x \in X} \left\{(x, v) \in G
      \mid \dimhaus G_x = i \right\}.
    \end{align*}
  \item \label{item:hausdorff-partition-measure}
    $\sum_{i = 0}^d \hausmeasure^d(K_i) = \hausmeasure^d(G)$.
  \item \label{item:base-graph-set}
    There is a $k \in \{0, 1, \ldots, d\}$ and $\epsilon_0 > 0$ such that
    for all $0 < \epsilon \le \epsilon_0$,
    the set
    \begin{align*}
      X_{k, \epsilon} \defeq \left\{
      x \in X \mid G_x ~ \mbox{contains~a~} k\mbox{-simplex}~
      \simplex_k ~ \mbox{with~} \epsilon \le \hausmeasure^k(\simplex_k)
      \right\}
    \end{align*}
    is compact, satisfies $\inddim X_{k,\epsilon}
    = \dimhaus X_{k,\epsilon} = d - k$, and $0 <
    \hausmeasure^{d-k}(X_{k,\epsilon}) < \infty$.
  \item \label{item:special-base-graph}
    Let $\pi : G \to \R^d$ be the coordinate projection $\pi(x, v) = x$.
    For the $k$ in part~\ref{item:base-graph-set},
    the set
    \begin{align*}
      X_k
      \defeq \left\{x \in X \mid 0 < \hausmeasure^k(G_x) < \infty\right\}
      = \pi(K_k)
    \end{align*}
    satisfies $0 < \lim_{\epsilon \downarrow 0}
    \hausmeasure^{d-k}(X_{k,\epsilon}) = \hausmeasure^{d-k}(X_{k})$
    and $\dimhaus X_k = d - k$.
  \end{enumerate}
\end{lemma}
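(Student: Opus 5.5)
Parts (i) and (ii) are essentially bookkeeping, so the plan puts the effort into (iii)--(iv). For (i), note that each fiber $G_x = A(x) \cap \boundconst\ball$ is a nonempty compact convex set. Its Hausdorff dimension therefore equals its affine dimension $i \in \{0,\dots,d\}$, and $0 < \hausmeasure^i(G_x) < \infty$ exactly when $\dimhaus G_x = i$ (cite \cite[Thm.~6.2]{Rockafellar70}). So the sets $K_i$ are well defined and disjoint, and they cover $G$. To see they are Borel, write $K_i = \pi^{-1}(\pi(K_i))$. The map $x \mapsto G_x$ is outer semicontinuous on the compact set $X$, since $\gr A$ is closed. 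Affine dimension of compact convex sets is lower semicontinuous under Hausdorff convergence, so it is upper semicontinuous along this map. Hence $\{x \mid \dim G_x \ge i\}$ is Borel (indeed $F_\sigma$, e.g.\ via inscribed $i$-simplices of volume $\ge 1/m$, which is a closed condition). Part (ii) then follows from countable additivity of $\hausmeasure^d$ on Borel sets. The finiteness $\hausmeasure^d(G) < \infty$ is used here. It comes from $G \subset \minty{\lambda}{A}(\text{compact})$: indeed $G \subset \minty{\lambda}{A}(X + \lambda\boundconst\ball)$ with $\minty{\lambda}{A}$ Lipschitz, so $\hausmeasure^d(G) \le \mathrm{Lip}^d\,\lebesgue(X + \lambda \boundconst \ball)$. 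Positivity comes from $\inddim G = d \le \dimhaus G$.

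For (iii), first choose $k$ with $\hausmeasure^d(K_k) > 0$, which exists by (ii). Next write $K_k = \bigcup_m K_{k,1/m}$, where $K_{k,\epsilon} = \pi^{-1}(X_{k,\epsilon}) \cap K_k$. The sets $X_{k,\epsilon}$ are compact: if $x_j \to x$ with simplices $\simplex^{(j)} \subset G_{x_j}$ of $k$-volume at least $\epsilon$, pass to a subsequence on the vertices. The limit simplex lies in $G_x$ by closedness of $\gr A$, and its volume is at least $\epsilon$ by continuity of $k$-volume in the vertices. Monotone continuity then gives $\hausmeasure^d(\pi^{-1}(X_{k,\epsilon}) \cap K_k) > 0$ for all small $\epsilon \le \epsilon_0$.

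The dimension count is the main step. I would get the upper bound by a slicing/coarea-type (Eilenberg) inequality applied to the Lipschitz map $\pi$ restricted to $K_k \cap \pi^{-1}(X_{k,\epsilon})$:
\begin{equation*}
  \int^* \hausmeasure^{k}\big(\pi^{-1}(x)\big)\, d\hausmeasure^{d-k}(x) \le C\, \hausmeasure^{d}\big(\pi^{-1}(X_{k,\epsilon})\big).
\end{equation*}
For $x \in X_{k,\epsilon}$ the fibers have $\hausmeasure^k \ge \epsilon$, so $\hausmeasure^{d-k}(X_{k,\epsilon}) \le C \hausmeasure^d(G)/\epsilon < \infty$. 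This yields $\dimhaus X_{k,\epsilon} \le d-k$.

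For the lower bound I would run the reverse product estimate. Using the Minty bi-Lipschitz structure, cover $X_{k,\epsilon}$ efficiently by sets $U_j$ and cover each fiber set $\pi^{-1}(U_j)\cap G$ by roughly $(\boundconst/\delta)^k$ balls of radius comparable to $\delta$; this is possible because the fibers are $k$-dimensional convex sets in $\boundconst\ball$, and nearby fibers lie in a small neighborhood by outer semicontinuity on the compact set. If $\hausmeasure^{d-k}(X_{k,\epsilon}) = 0$, this gives $\hausmeasure^d(\pi^{-1}(X_{k,\epsilon})\cap K_k) = 0$, a contradiction. Here I expect the obstacle: a fiber could be nearly full $d$-dimensional at nearby points, so the fibers over $U_j$ need not be uniformly thin. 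I would handle it by restricting to $K_k$, where every fiber has affine dimension exactly $k$, and by using a Federer-type inequality $\hausmeasure^{d}(E) \le C\int^* \hausmeasure^k(E\cap\pi^{-1}(x))\,d\hausmeasure^{d-k}$ valid for rectifiable-like sets. I would also take $G$ to be a bi-Lipschitz image of a $d$-dimensional set. If that fails, I would fall back on the inductive dimension: $\inddim$ of a product-like union cannot exceed $\inddim$ of the base plus $k$. The equality $\inddim = \dimhaus = d-k$ then combines $\inddim \le \dimhaus$ with this lower bound.

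Part (iv) follows from monotone convergence. Since $X_{k,\epsilon}$ increases to $\bigcup_\epsilon X_{k,\epsilon} = X_k$ as $\epsilon\downarrow 0$ (each $k$-dimensional compact convex set contains a $k$-simplex of positive volume), continuity from below of the outer measure $\hausmeasure^{d-k}$ gives the limit. Positivity is inherited from (iii), and $\dimhaus X_k = d-k$ follows from the countable union of sets of dimension $d-k$.
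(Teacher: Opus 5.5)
Your parts (i) and (ii), the compactness of $X_{k,\epsilon}$, and the upper bound $\mathcal{H}^{d-k}(X_{k,\epsilon}) \le C\,\mathcal{H}^d(G)/\epsilon$ from Federer's slicing inequality match the paper. Apply that inequality to $G \cap \pi^{-1}(X_{k,\epsilon})$, not to $K_k$: points of $X_{k,\epsilon}$ may carry fibers of dimension larger than $k$, and over those $K_k$ is empty.

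The gap is the lower bound in (iii), which is the heart of the lemma. It is not a technicality, because both your choice of $k$ and the reverse slicing inequality you want are false. Take $d = 1$ and $A = \partial f$, where $f'$ is the Cantor function on $[0,1]$ and jumps at $5$ so that $\partial f(5) = [1,2]$. Let $X = C \cup \{5\}$ for the Cantor set $C$, and $r = 3$. Then $G$ is the graph of the Cantor function over $C$ together with the segment $\{5\}\times[1,2]$, so $\mathrm{ind}\, G = 1$. Also $\mathcal{H}^1(K_0) \ge 1$, because $K_0$ projects onto $[0,1]$ in the second coordinate. Yet $X_{0,\epsilon} = X$ for $\epsilon \le 1$, and $\mathcal{H}^1(X) = 0$.

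So ``choose $k$ with $\mathcal{H}^d(K_k) > 0$'' can select a $k$ for which (iii) is false; here only $k = 1$ works. Likewise $\mathcal{H}^d(E) \le C\int^* \mathcal{H}^k(E\cap\pi^{-1}(x))\,d\mathcal{H}^{d-k}(x)$ fails for $E = K_0$. This happens even though every fiber has dimension exactly $0$, $K_0$ is rectifiable, and $K_0$ is the Minty image of a compact set of positive length. The coarea Jacobian of $\pi$ simply vanishes a.e.\ on $K_0$. With $X = C$ alone one still has $\mathcal{H}^1(G) \ge 1$, but no $k$ satisfies (iii). Hence no covering or measure argument that uses only $\mathcal{H}^d(G) > 0$ can work: the topological hypothesis $\mathrm{ind}\, G = d$ must enter essentially.

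The paper brings it in through Va\u{\i}n\v{s}te\u{\i}n's theorem for the closed, finite-dimensional map $\pi : G \to X$:
\[
d = \mathrm{ind}\, G \le \max\{\mathrm{ind}\, X, \max_{i \ge 1}(\mathrm{ind}\, D_i + i)\}, \qquad D_i = \{x \mid \dim G_x \ge i\}.
\]
This inequality is what selects $k$: either $\mathrm{ind}\, X = d$ and $k = 0$, or some $k \ge 1$ has $\mathrm{ind}\, D_k \ge d-k$. Since $D_k$ is the union of the compact sets $X_{k,\epsilon}$ over rational $\epsilon > 0$, the countable sum theorem gives $\mathrm{ind}\, X_{k,\epsilon} \ge d-k$ for all small $\epsilon$. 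The Hurewicz--Wallman inequality ($\mathrm{ind}\, Y \ge m$ implies $\mathcal{H}^m(Y) > 0$) then converts this into $\mathcal{H}^{d-k}(X_{k,\epsilon}) > 0$. Together with your upper bound and $\mathrm{ind} \le \dim_{\mathcal{H}}$, the equalities follow.

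Your fallback (``$\mathrm{ind}$ of a product-like union is at most that of the base plus $k$'') is Hurewicz's theorem and has the right flavor. But applied to $K_k$ it needs $\mathrm{ind}\, K_k = d$, which fails in the example ($\mathrm{ind}\, K_0 = 0$). The varying fiber dimensions over all of $G$ are exactly what forces the refinement through the sets $D_i$. The same Hurewicz--Wallman inequality, not $\mathrm{ind} \le \dim_{\mathcal{H}}$, is also what gives $\mathcal{H}^d(G) > 0$ in your (ii).

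Finally, in (iv), $\bigcup_\epsilon X_{k,\epsilon} = D_k$, not $X_k$, since higher-dimensional fibers also contain $k$-simplices. Before the limit and positivity transfer from $X_{k,\epsilon}$ to $X_k$, you need to show that $D_k \setminus X_k$ is $\mathcal{H}^{d-k}$-null. This follows from the Federer bound, because such fibers have $\mathcal{H}^k(G_x) = \infty$.
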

\begin{proof}
  For each $x \in X$, the image $G_x = A(x) \cap \boundconst \ball \subset
  \R^d$ is a compact convex set, and so its topological dimension, affine
  dimension, and Hausdorff dimension
  coincide~\cite[Thm.~6.2]{Rockafellar70}.
  Thus for each $x \in X$, there exists some $i \in \{0,
  \ldots, d\}$ for which $0 < \hausmeasure^i(G_x) < \infty$,
  which coincides with the
  dimension of $G_x$ (for any of our notions of dimension).
  As $\hausmeasure^{i - \epsilon}(B) = +\infty$ while $\hausmeasure^{i +
    \epsilon}(B) = 0$ whenever $0 < \hausmeasure^i(B) < \infty$ for any $B
  \subset \R^d$~\cite[Ch.~3.1]{Maggi12},
  we see that
  the $K_i$ in part~\ref{item:partition-by-hausdorff} indeed
  partition $G$, and
  the dimensional equivalence is immediate.

  The second part~\ref{item:hausdorff-partition-measure} follows
  because the condition $\dimhaus G_x \ge k$ is equivalent to the existence
  of $k + 1$ affinely independent points in $G_x$, which yields Borel
  sets.
  Thus the Hausdorff measure is additive~\cite[Ch.~2]{Maggi12}.

  For the claim~\ref{item:base-graph-set},
  for $\epsilon > 0$ and $k \in \{0, 1, \ldots, d\}$,
  define the sets
  \begin{equation*}
    K_{k,\epsilon} = \left\{
    (x, v) \in G \mid ~ \mbox{for~some~}k~\mbox{simplex}~\simplex_k,
    G_x \supset \simplex_k ~ \mbox{and} ~
    \hausmeasure^k(\simplex_k) \ge \epsilon
    \right\}.
  \end{equation*}

  We make a simple observations about the compactness of
  the sets $K_{k,\epsilon}$.
  \begin{observation}
    \label{obs:Kk-compact}
    Each of the sets $K_{k,\epsilon}$ is compact.
  \end{observation}
  \begin{proof}
    Let
    $(x_i, v_i) \in K_{k,\epsilon}$ satisfy $(x_i, v_i) \to (x, v)$.
    Then by definition of $K_{k,\epsilon}$, we see that there exists a
    simplex $\simplex_k^{(i)} = \conv\{u_{0,i}, \ldots, u_{k,i}\} \subset
    G_{x_i}$ with $\hausmeasure^k(\simplex_k^{(i)}) \ge \epsilon$, and
    passing to a subsequence if necessary, there exist $u_j$ for which
    $\lim_i u_{j,i} = u_j \in G_x$ by outer semicontinuity of the mapping $x
    \mapsto A(x)$, and this outer semi-continuity implies that $v \in A(x)
    \cap \boundconst \ball$.
    The continuity of $\hausmeasure^k(\conv\{u_0, \ldots, u_k\})$ with
    respect to the vertices $u_j$ implies that $\hausmeasure^k(\conv\{u_0,
    \ldots, u_k\}) \ge \epsilon$, that is, $(x, v) \in K_{k,\epsilon}$.
    So $K_{k,\epsilon} \subset G$ is closed, hence ($G$ is compact because
    monotone operators have closed graphs) is compact.
  \end{proof}

  
  Fix any $k \in \{0, \ldots, d\}$ and $\epsilon > 0$, and let $K =
  K_{k,\epsilon}$ for this $\epsilon$.
  Recall the projection $\pi : G \to \R^d$ with $\pi(x, v) = x$ and
  $\pi^{-1}(x) = \{x\} \times G_x$.
  Then the Fubini-like result~\cite[Theorem~2.10.25 and
    Corollary~2.10.27]{Federer69} implies
  \begin{align*}
    \int_{\pi(K)}^* \hausmeasure^k(G \cap \pi^{-1}(x)) d\hausmeasure^{d-k}(x)
    \le C_{k, d-k} \hausmeasure^d(K)
  \end{align*}
  for a constant $C_{k,d-k}$ depending only on $d$ and $k$.
  Because $\hausmeasure^k(G \cap \pi^{-1}(x)) \ge \epsilon$
  by construction for each $x \in \pi(K)$,
  we thus obtain
  \begin{align}
    \epsilon \hausmeasure^{d-k}(\pi(K))
    \le \int_{\pi(K)}^* \hausmeasure^k(G \cap \pi^{-1}(x))
    d\hausmeasure^{d-k}(x)
    \le C_{k, d-k} \hausmeasure^d(K).
    \label{eqn:projected-K-finite}
  \end{align}
  As $\pi$ is Lipschitz continuous and $K$ is compact, $X_{k,\epsilon}
  \defeq \pi(K)$ is thus compact,
  and
  \begin{align*}
    \hausmeasure^{d-k}(X_{k,\epsilon})
    \le C_{k,d-k} \hausmeasure^{d}(K) / \epsilon
    < \infty.
  \end{align*}
  This yields the compactness and upper
  bound $\hausmeasure^{d-k}(X_{k,\epsilon}) < \infty$
  of part~\ref{item:base-graph-set} of the lemma.
  
  Now we leverage results on dimension lowering and raising mappings; see
  \citet[Ch.~4.3]{Engelking78} (though we use little-inductive instead of
  big-inductive dimension, because we work in separable metric spaces so
  that they are equivalent).
  We require additional notation.
  A mapping $h : \mc{Y} \to \mc{X}$ between separable metric spaces $\mc{Y}$
  and $\mc{X}$ is \emph{finite-dimensional} if
  \begin{align*}
    \Inddim h
    \defeq \inf\Big\{k \mid \sup_{x \in \mc{X}} \inddim h^{-1}(x) \le k\Big\}
    < \infty.
  \end{align*}
  Define the subsets $D_i(h) \defeq \{x \in \mc{X} \mid \inddim h^{-1}(x)
  \ge i\}$, and let
  \begin{align*}
    B(h) \defeq \begin{cases}
      \max\{ \inddim D_i(h) + i \mid 1 \le i \le \Inddim h
      \} & \mbox{if~} \Inddim h \ge 1 \\ -1 & \mbox{otherwise}
    \end{cases}
  \end{align*}
  be the biggest ``corrected'' dimension.
  The following result~\cite[Thm.~4.3.9]{Engelking78} holds, with a minor
  notational change:
  \begin{lemma}[Va\u{\i}n\v{s}te\u{\i}n's first theorem]
    \label{lemma:dimension-lowering}
    Let $\mc{Y}$ and $\mc{X}$ be separable metric spaces and
    $h : \mc{Y} \to \mc{X}$ be a finite-dimensional closed mapping.
    Then $\inddim \mc{Y} \le \max\{\inddim \mc{X}, B(h)\}$.
  \end{lemma}

  For $\pi : G \to \R^d$ the projection mapping
  and $\epsilon \ge 0$,
  define $X_{i,\epsilon} = \pi(K_{i,\epsilon})$.
  Because $G$ is compact and $\pi$ is continuous, it is closed, and
  for each $x \in X$
  we have $\pi^{-1}(x) = \{x\} \times G_x = \{x\}
  \times A(x) \cap \boundconst \cdot \ball$, which (by definition)
  has $\inddim \pi^{-1}(x) = \dimhaus \pi^{-1}(x) \le d < \infty$.
  So $\pi$ is finite dimensional, and
  by Lemma~\ref{lemma:dimension-lowering} (take $h = \pi$,
  $\mc{Y} = G$ and $\mc{X} = X$),
  we observe that
  \begin{align}
    \label{eqn:inductive-dimension-lower-bound-pieces}
    d = \inddim G \le \max\left\{\inddim X,
    B(\pi)\right\}.
  \end{align}

  We consider two cases in
  inequality~\eqref{eqn:inductive-dimension-lower-bound-pieces}: whether
  $\hausmeasure^d(X) > 0$ or $\hausmeasure^d(X) = 0$.
  If $\hausmeasure^d(X) > 0$, then evidently $\lebesgue^d(X) =
  \hausmeasure^d(X) > 0$, and so $\dom A$ has interior; it is therefore
  single-valued almost everywhere (see~\cite[Thm.~21.27]{BauschkeCo17} or
  \cite[Corollary 2]{Kenderov76}), and
  $X_{0,1} = X$
  satisfies the conditions of step~\ref{item:base-graph-set}.
  Otherwise, if $\hausmeasure^d(X) = 0$, then the volume lower
  bound~\cite[Thm.~VII.2]{HurewiczWa41} that $\inddim X \ge k$ implies
  $\hausmeasure^k(X) > 0$ for any $k \in \N$ guarantees that  
  $\inddim X \le d - 1$.
  To apply inequality~\eqref{eqn:inductive-dimension-lower-bound-pieces}
  recall the sets
  \begin{align*}
    D_k(\pi) \defeq \left\{x \in X \mid \inddim \pi^{-1}(x) \ge k \right\}
    = \left\{x \in X \mid \inddim G_x \ge k \right\}
    = \left\{x \in X \mid \hausmeasure^k(G_x) > 0 \right\},
  \end{align*}
  where the last equality follows because $G_x$ is convex compact, so that
  the dimensional definitions coincide~\cite[Thm.~6.2]{Rockafellar70}.
  Inequality~\eqref{eqn:inductive-dimension-lower-bound-pieces}
  thus implies that for some $k \in \{1, \ldots, d\}$,
  \begin{align*}
    d \le \inddim D_k(\pi) + k
    = \inddim \left\{x \in X \mid \hausmeasure^k(G_x) > 0 \right\} + k,
  \end{align*}
  i.e.,
  $\inddim D_k(\pi) \ge d - k$.
  The sum theorem~\cite[Thm.~1.5.3]{Engelking78} states that if $F$ is a
  separable metric space, $\{F_m\}$ is a countable collection of closed
  sets, and $F = \cup_{m \in \N} F_m$, then $\inddim F \le \sup_m \inddim
  F_m$.
  We then observe that
  \begin{align*}
    D_k(\pi) = \{x \in X \mid \hausmeasure^k(G_x) > 0\}
    = \bigcup_{\epsilon \in \Q_{> 0}} \pi(K_{k,\epsilon}),
  \end{align*}
  and the sets on the right are all closed
  (Observation~\ref{obs:Kk-compact}).
  So $\inddim \pi(K_{k,\epsilon}) \ge d - k$ for some $\epsilon > 0$
  and, in fact, for all small enough $\epsilon > 0$.
  Take any such $\epsilon$ and set $X_{k,\epsilon} = \pi(K_{k,\epsilon})$;
  again use the volume lower bound~\cite[Thm.~VII.2]{HurewiczWa41} to see
  that $\hausmeasure^{d-k}(X_{k,\epsilon}) > 0$.
  %
  The dimensional equality follows because
  $\inddim \le \dimhaus$.

  For the final claim~\ref{item:special-base-graph}, recall that that for
  $k, \gamma > 0$ with $k - \gamma \ge 0$, $\hausmeasure^{k - \gamma}(S) =
  \infty$ whenever $\hausmeasure^k(S) > 0$ and $\hausmeasure^{k + \gamma}(S)
  = 0$ if $\hausmeasure^k(S) < \infty$.
  To show that $\dimhaus X_k = d - k$, it is thus sufficient to
  demonstrate that $X_k = \pi(K_k)$ satisfies $\hausmeasure^{d - k}(X_k) >
  0$ and $\hausmeasure^{d - k + \gamma}(X_k) = 0$ for $\gamma > 0$.
  By~\cite[Thm.~2.10.25]{Federer69}, for all $\epsilon > 0$ we have
  $\int_{X_{k,\epsilon}}^* \hausmeasure^k(G_x) d \hausmeasure^{d-k}(x) \le
  C_{k,d-k} \hausmeasure^d(K_{k,\epsilon}) \le C_{k,d-k} \hausmeasure^d(G) <
  \infty$, so $\hausmeasure^{d-k}(\{x \in X_{k,\epsilon} \mid
  \hausmeasure^k(G_x) = +\infty\}) = 0$.
  By continuity of measure,
  \begin{align*}
    \hausmeasure^{d-k}(\{x \in X_k \mid \hausmeasure^k(G_x) = +\infty\}) =
    \lim_{\epsilon \downarrow 0} \hausmeasure^{d-k}(\{x \in X_{k,\epsilon}
    \mid \hausmeasure^k(G_x) = +\infty\}) = 0.
  \end{align*}
  We therefore have
  \begin{align*}
    \hausmeasure^{d - k}(X_k)
    = \hausmeasure^{d - k}(X_k \setminus X_{k,\epsilon})
    + \hausmeasure^{d - k}(X_{k, \epsilon} \cap X_k)
    = \hausmeasure^{d - k}(X_k \setminus X_{k,\epsilon})
    + \hausmeasure^{d - k}(X_{k, \epsilon}).
  \end{align*}
  To demonstrate $\hausmeasure^{d-k}(X_k) = \lim_{\epsilon \downarrow 0}
  \hausmeasure^{d-k}(X_{k,\epsilon}) > 0$, observe that if $\hausmeasure^{d
    - k}(X_k) < \infty$, continuity of measure from above
  gives the result.
  Otherwise $\cup_{\epsilon \in \Q_{> 0}} X_{k, \epsilon} = \{x \in X \mid
  \hausmeasure^k(G_x) > 0\} \supset X_k$, and continuity of measure from
  below applies.
  Lastly, let $\gamma > 0$.
  Then $\hausmeasure^{d - k +
    \gamma}(X_{k,\epsilon}) = 0$ because $\hausmeasure^{d -
    k}(X_{k,\epsilon}) < \infty$, and
  $X_k \setminus X_{k,\epsilon} = \{x \in X \mid 0 < \hausmeasure^k(G_x)
  < \epsilon\}$ satisfies
  \begin{align*}
    \int^*_{X_k \setminus X_{k,\epsilon}}
    \hausmeasure^{k - \gamma}(G_x)
    d\hausmeasure^{d - k + \gamma}(x)
    \le C_{k-\gamma, d - k + \gamma} \hausmeasure^d(G) < \infty.
  \end{align*}
  Because $\hausmeasure^{k - \gamma}(G_x) = +\infty$ for
  $x \in X_k \setminus X_{k,\epsilon}$,
  we have $\hausmeasure^{d - k + \gamma}(X_k \setminus X_{k,\epsilon}) = 0$.
\end{proof}

\subsection{Step~\ref{item:fiber-survival}: the empirical
  slices survive under sampling}
\label{sec:proof-fiber-survival}

\renewcommand{\gset}{X_0}
\renewcommand{\empgset}{X_{0,n}}

We specialize to the randomized sampling setting.
Let $x^0$ be the minimum norm solution
to $0 \in A_P(x) + \normalcone_X(x)$,
so $x^0$ also solves $0 \in A_P(x) + \normalcone_{X \cap C}(x)$ for
any closed convex set $C$ containing $x^0$.
Without loss of generality, we may thus take $X = X \cap B'$ for a ball $B'
\supset B = \{x^0 + b \ball\}$, where $b$ is the value in the statement of
Theorem~\ref{theorem:lebesgue-measure-positive} (i.e.,
Assumption~\ref{assumption:basic-vi-setting}.\ref{item:vi-moments}); we
slightly abuse notation to write $X$ for the remainder of the proof.
We also tacitly and without loss of generality assume $\lambda\epsilon \le
b$ (otherwise replace $\epsilon$ with $\epsilon' = b / \lambda$; the
argument applies, as this can only shrink
$\statpoints_P(\epsilon)$).

Lemma~\ref{lemma:minty-to-full-subset} shows that $K = x^0 + \lambda
\epsilon \ball$ satisfies $\res{\lambda}{A}(K) \subset
\statpoints_P(\epsilon) \cap K$, $\res{\lambda}{A}(K) \subset K$ is compact,
and that for $\epsilon \le \boundconst / 2 < \infty$, the piece $\gr (A_P +
\normalcone_X) \cap (\res{\lambda}{A}(K) \times (\boundconst/2) \ball)$ is
compact with Hausdorff and inductive dimension $d$, as is
\begin{align*}  
  G_P \defeq \gr (A_P + \normalcone_X) \cap \left(\res{\lambda}{A}(K)
  \times \boundconst \ball\right).
\end{align*}
Consider the empirical analogue
\begin{align}
  \label{eqn:empirical-graph}
  A_n \defeq A_{P_n} + \normalcone_X
  ~~ \mbox{and} ~~
  G_n \defeq \gr A_n \cap (\res{\lambda}{A}(K) \times \boundconst \ball)
\end{align}
of $G_P$, where we let $G_n(x) = A_n(x) \cap \boundconst \ball$ (we use
$\epsilon \le \boundconst/2$ to be able to apply
Lemma~\ref{lemma:convergence-after-truncation}).
Lemma~\ref{lemma:splitting-hausdorff-slices}
guarantees the existence of $0 \le k \le d$,
$\epsilon > 0$, and a compact
\begin{align*}
  \gset \defeq \left\{x \in \res{\lambda}{A}(K) \mid
  G_P(x)
  ~ \mbox{contains~a~}k\mbox{-simplex~} \simplex_k
  ~ \mbox{with} ~
  \epsilon \le \hausmeasure^k(\simplex_k) \right\}
\end{align*}
with $0 < \hausmeasure^{d - k}(X_0) < \infty$.
We then define its empirical analogue
\begin{align}
  \label{eqn:construct-empirical-goodset}
  \empgset \defeq \left\{x \in \gset \mid G_n(x)
  ~ \mbox{contains a} ~ k\mbox{-simplex~}\simplex_k \mbox{~with~}
  \hausmeasure^k(\Delta_k) \ge \frac{\epsilon}{2}\right\}.
\end{align}
Then with high probability, the set $\empgset$ has large $d-k$-dimensional
Hausdorff measure:
\begin{lemma}
  \label{lemma:empgset-is-large-whp}
  For all $\gamma > 0$,
  the random subset $\empgset$
  satisfies
  \begin{equation*}
    \P\left(\hausmeasure^{d-k}(\empgset)
    \geq (1 - \gamma) \cdot \hausmeasure^{d-k}(\gset)\right) \to 1.
  \end{equation*}
\end{lemma}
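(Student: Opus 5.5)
The approach is to reduce the claim to pointwise convergence of the truncated empirical slices, and then integrate over $\gset$ with respect to $\mu \defeq \hausmeasure^{d-k}$ restricted to $\gset$. Since $0 < \mu(\gset) < \infty$, $\mu$ is a finite Borel measure. The key observation is that simplices survive small Hausdorff perturbations of compact convex sets. If $\simplex_k = \conv\{u_0,\ldots,u_k\} \subset G_P(x)$ with $\hausmeasure^k(\simplex_k) \ge \epsilon$, and $\dhaus(G_n(x), G_P(x)) \le \eta$, then for each vertex $u_j$ there is $u_j' \in G_n(x)$ with $\ltwo{u_j - u_j'} \le \eta$. Convexity gives $\conv\{u_j'\} \subset G_n(x)$. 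Because $\hausmeasure^k(\conv\{u_0,\ldots,u_k\})$ is continuous in the vertices, and $G_P(x) \subset \boundconst\ball$ is bounded, there is a uniform $\eta_0 = \eta_0(\epsilon,\boundconst,k) > 0$ such that $\eta \le \eta_0$ implies $\hausmeasure^k(\conv\{u_j'\}) \ge \epsilon/2$. This uniformity should follow from a compactness argument over vertex tuples in $(\boundconst\ball)^{k+1}$ whose simplices have $k$-volume at least $\epsilon$. Hence
\begin{equation*}
  \gset \setminus \empgset \subset \left\{x \in \gset \mid \dhaus(G_n(x), G_P(x)) > \eta_0\right\}.
\end{equation*}

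Next I control $\dhaus(G_n(x), G_P(x))$ pointwise. For each fixed $x \in \gset \subset \res{\lambda}{A}(K) \subset K \subset x^0 + b\ball$ (using $\lambda\epsilon \le b$), Lemma~\ref{lemma:empirical-subdifferential-pointwise-normal} gives $\E[\dhaus^2(A_n(x), A_P(x) + \normalcone_X(x))] \lesssim d\,\E[\lipconst^2(Z)]/n$. The set $A_P(x) + \normalcone_X(x)$ meets $(\boundconst/2)\ball$ because $x \in \statpoints_P(\epsilon)$ and $\epsilon \le \boundconst/2$. Applying Lemma~\ref{lemma:convergence-after-truncation} with $c = \boundconst/2$ then yields $\E[\dhaus^2(G_n(x), G_P(x)) \wedge 1] \to 0$ for each $x$. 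The bound is in fact uniform in $x$, since the constant in Lemma~\ref{lemma:empirical-subdifferential-pointwise-normal} does not depend on $x$ in the neighborhood, and the reduction preceding Lemma~\ref{lemma:convergence-after-truncation} uses only Chebyshev. So I get $p_n \defeq \sup_{x \in \gset} \P(\dhaus(G_n(x), G_P(x)) > \eta_0) \to 0$.

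Finally, Fubini (Tonelli) on $\Omega \times \gset$ gives
\begin{equation*}
  \E\left[\mu(\gset \setminus \empgset)\right] \le \int_{\gset} \P\left(\dhaus(G_n(x), G_P(x)) > \eta_0\right) d\mu(x) \le p_n \, \mu(\gset) \to 0.
\end{equation*}
Markov's inequality then gives $\P(\mu(\gset\setminus\empgset) > \gamma\,\mu(\gset)) \le p_n/\gamma \to 0$, which is the claim.

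The main obstacle I expect is measurability: the map $(\omega, x) \mapsto \indic{x \in \empgset}$ must be jointly measurable for Fubini to apply. I would handle it by showing that $\empgset$ is closed for each sample, using the same outer-semicontinuity argument as in Observation~\ref{obs:Kk-compact}, now applied to the maximal monotone $A_n$. I would also show that $\dhaus(G_n(x), G_P(x))$ is measurable in the sample, by expressing it through support functions evaluated at countably many directions. If joint measurability proves delicate, a fallback is to work with the outer measure and its upper integral, which still bounds $\E^*[\mu(\gset\setminus\empgset)]$ via a measurable superset.
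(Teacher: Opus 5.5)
Your proposal is correct and follows essentially the same route as the paper. Uniform continuity of simplex volume gives $X_{0,n} \supset \{x \in X_0 \mid \dhaus(G_n(x), G_P(x)) \le \delta\}$, and the pointwise truncated Hausdorff convergence from Lemmas~\ref{lemma:empirical-subdifferential-pointwise-normal} and~\ref{lemma:convergence-after-truncation} is then integrated against $\hausmeasure^{d-k}$ on $X_0$ via Fubini and Markov. The differences are minor: you get a tail bound uniform in $x$ from Chebyshev and Lemma~\ref{lemma:truncating-hausdorff}, where the paper uses dominated convergence on $\min\{1, \dhaus\}$, and you explicitly raise the joint-measurability issue that the paper passes over.
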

\begin{proof}
  For $x\in \gset$, the fiber $G_P(x) = A(x) \cap \boundconst \ball$
  contains a $k$-simplex $\simplex = \conv\{u_i\}_{i=0}^k$ with
  $k$-dimensional Hausdorff measure $\hausmeasure^k(\simplex) \ge \epsilon$
  whose vertices $u_i$ satisfy $\ltwo{u_i}\leq \boundconst$.
  The volume of the simplex is continuous with respect to its vertices,
  so it is uniformly continuous for vertices in the ball $\boundconst\ball$.
  Accordingly, there is
  $\delta > 0$ such that if
  $\wt{u}_i$ satisfy $\ltwos{\wt{u}_i - u_i} \le \delta$,
  the perturbed simplex $\wt{\simplex} = \conv\{\wt{u}_i\}_{i = 0}^{k}$
  has $k$-dimensional Hausdorff measure
  $\hausmeasure^k(\wt{\simplex}) \ge \epsilon / 2$.
  Therefore, for any $x \in \gset$,
  if $\dhaus(G_P(x), G_{n}(x))\leq \delta$,
  then $G_{n}(x)$ contains a $k$-simplex with $k$-dimensional
  Hausdorff measure at least $\frac{\epsilon}{2}$, as it is also convex.
  Equivalently,
  \begin{equation*}
    \empgset \supset \{x\in \gset \mid \dhaus(G_n(x), G_P(x))\leq \delta\}.
  \end{equation*}
  Note that $\empgset$ is compact by the argument for
  Observation~\ref{obs:Kk-compact}.

  Define the probability measure
  $\mu(B)=\frac{\hausmeasure^{d-k}(B\cap\gset)}{\hausmeasure^{d-k}(\gset)}$
  on $\gset$.
  Lemma~\ref{lemma:empirical-subdifferential-pointwise-normal} shows that
  $\E[\dhaus^2(A_P(x) + \normalcone_X(x), A_{P_n}(x) + \normalcone_X(x))]
  \to 0$ under Assumption~\ref{assumption:basic-vi-setting}, so the
  pointwise convergence in Lemma~\ref{lemma:convergence-after-truncation}
  applied to the truncated images $G_P(x)$ and $G_{n}(x)$ (recall that
  $\boundconst/2 \ge \epsilon$, so $G_P(x) \cap (r/2) \ball \neq \emptyset$)
  yields
  \begin{equation*}
    \E\left[
      \int_{\gset} \min\left\{1, \dhaus(G_P(x), G_{n}(x))\right\}d\mu(x)
      \right]
    \to 0,
  \end{equation*}
  by Fubini's theorem and dominated convergence.
  Therefore
  \begin{align*}
    \E[\mu(\gset\setminus \empgset)]
    &\leq
    \E[\mu(\{x \in X_0 ~ \mbox{s.t.} ~ \dhaus(G_P(x),G_{n}(x))>\delta \})]\\
    &\leq
    \frac{1}{\delta}
    \E\left[
      \int_{\gset} \min\left\{1, \dhaus(G_P(x), G_{n}(x))\right\}d\mu(x)
      \right]
    \to 0,
  \end{align*}
  so $\E[\mu(\empgset)]\to 1$ and
  the lemma follows.
\end{proof}

\subsection{Step~\ref{item:large-hausdorff}: the measure
  of graph pieces}
\label{sec:proof-large-hausdorff}

We complete the proof of
Theorem~\ref{theorem:lebesgue-measure-positive},
continuing in the setting of the previous section.
We first demonstrate that the empirical graph
piece~\eqref{eqn:empirical-graph}, $G_n = \gr(A_{P_n} + \normalcone_X) \cap
(\res{\lambda}{A}(K) \times \boundconst \ball)$,
has reasonable $d$-dimensional Hausdorff measure
for $K = x^0 + \lambda \epsilon \ball$.
Recall the projection mapping
$\pi(x, v) = x$, which is $1$-Lipschitz and whose
inverse relative to $G_n$,
$\pi|_{G_n}^{-1}(x) = \{x\} \times G_n(x)$, is bounded.
Then as in
the proof of
Lemma~\ref{lemma:splitting-hausdorff-slices},
\citet[Thm.~2.10.25]{Federer69}, 
implies
\begin{align*}
  \int_{X_{0,n}}^* \hausmeasure^k\left(G_n \cap \pi|_{G_n}^{-1}(x)\right)
  d \hausmeasure^{d - k}(x)
  \le C_{k,d-k} \cdot \hausmeasure^d(G_n).
\end{align*}
By the construction~\eqref{eqn:construct-empirical-goodset}
of $X_{0,n}$,
$\hausmeasure^k(G_n \cap \pi_{G_n}^{-1}(x)) \ge \epsilon/2$ for
each $x \in X_{0,n}$, and so
\begin{align*}
  \frac{\epsilon}{2} \hausmeasure^{d-k}(X_{0,n})
  \le C_{k,d-k} \hausmeasure^d(G_n).
\end{align*}

We may now combine the pieces: Lemma~\ref{lemma:empgset-is-large-whp}
shows that as $n \to \infty$, for any $\gamma > 0$ we have
$\hausmeasure^{d - k}(X_{0,n}) \ge (1 - \gamma) \hausmeasure^{d - k}(X_0)$
with high probability,
and the application of
Lemma~\ref{lemma:splitting-hausdorff-slices}
in step~\ref{item:fiber-survival}
guarantees $0  <\hausmeasure^{d - k}(X_0) < \infty$.
Summarizing, we have shown that
there exists a constant $c > 0$ such that
\begin{align}
  \label{eqn:hausdorff-of-empirical-graph}
  \P\left(\hausmeasure^d(G_n) \ge c \right) \to 1,
\end{align}
where $G_n$ is the empirical graph piece~\eqref{eqn:empirical-graph},
which for $K = \{x^0 + \lambda \epsilon \ball\}$ satisfies
\begin{align*}
  G_n \subset \gr A_n \cap
  \left(\statpoints_P(\epsilon) \cap K \times \boundconst \ball\right).
\end{align*}

As the final step, we use the Minty
parameterization~\eqref{eqn:minty-param}: we observe that
\begin{align*}
  \minty{\lambda}{A_n}^{-1}(G_n)
  = \{x + \lambda v \mid (x, v) \in G_n\}
  \subset \left\{x \mid \res{\lambda}{A_n}(x) \in \statpoints_P(\epsilon)
  \cap K \right\}
  \cap \{\res{\lambda}{A}(K) + \lambda \boundconst \ball\}
\end{align*}
by the construction~\eqref{eqn:empirical-graph}
as $(x, v) \in G_n$ implies $\ltwo{v} \le \boundconst$.
Lemma~\ref{lemma:minty-to-full-subset}
part~\ref{item:resolvent-near-stationary} shows that
$\res{\lambda}{A}(K) \subset K = x^0 + \lambda \epsilon \ball$, and so
we obtain
\begin{align*}
  \minty{\lambda}{A_n}^{-1}(G_n)
  \subset \res{\lambda}{A_n}^{-1}(\statpoints_P(\epsilon) \cap K)
  \cap \{x^0 + 2 \lambda \boundconst \ball\}.
\end{align*}
Thus
\begin{align*}
  \hausmeasure^d\left(\minty{\lambda}{A_n}^{-1}(G_n)\right)
  = \lebesgue\left(\minty{\lambda}{A_n}^{-1}(G_n)\right)
  \le \lebesgue\left(\res{\lambda}{A_n}^{-1}(\statpoints_P(\epsilon) \cap K)
  \cap \{x^0 + 2 \lambda \boundconst \ball\}
  \right)
\end{align*}
as $\hausmeasure^d = \lebesgue$ on $\R^d$.
Because the Minty parameterization is a bi-Lipschitz homeomorphism,
\citet[Prop.~3.5]{Maggi12} then shows that
\begin{align*}
  \hausmeasure^d(G_n) \le \left(1 + \lambda^{-2}\right)^{d/2}
  \hausmeasure^d\left(\minty{\lambda}{A_n}^{-1}(G_n)\right).
\end{align*}
The choice $A_n = A_{P_n} + \normalcone_{X \cap B'}$ in the proof of
step~\ref{item:fiber-survival} in Section~\ref{sec:proof-fiber-survival}
means that inequality~\eqref{eqn:hausdorff-of-empirical-graph} implies the
theorem, because $\boundconst \ge 2 \epsilon$ is otherwise arbitrary.

\subsection{Discussion of step~\ref{item:full-dim-parameterization}
  and graph decomposition}
\label{sec:discuss-dim-parameters}

The decomposition of $\gr \partial f$ into pieces $K_i = \{(x, g) \mid g \in
\partial f(x), \dim \partial f(x) = i\}$, which sampling approximately
preserves, constitutes the central tool in our results, and it guarantees
$\proxmap_{\poploss_{P_n}}^{-1}(\statpoints_P(\epsilon))$ has positive
Lebesgue measure for $\epsilon > 0$.
%
%
The lack of quantitative guarantees
makes it challenging to use as an ingredient to obtain rates of convergence.
Part of this is that the ``bases'' $\pi(K_i) = \{x \mid \dim \partial f(x) =
i\}$ take many forms, as they can be countable, non-convex, and disconnected,
making the application of convex-analytic tools non-obvious.
We thus provide examples illustrating the
decomposition in Lemma~\ref{lemma:splitting-hausdorff-slices},
after which we discuss the challenges
of making the result more quantitative.

\begin{example}[The differentiable case]
  For differentiable convex $f$
  and $G = \gr \nabla f$,
  we have $G_x = \{\nabla f(x)\}$.
  For compact and full-dimensional $X \subset \R^d$,
  we have $K_0 = X$ and $K_i = \{x \mid \dim \partial f(x) = i\}
  = \emptyset$ for each $i \ge 1$,
  because $\dim \partial f(x) = 0$ for each $x$.
\end{example}

\begin{example}[A simple non-differentiable case]
  \label{example:l1-parts}
  For $f(x) = |x|$ and $X = [-1, 1]$, the
  graph $\gr \partial f$ has inductive and Hausdorff dimension 1.
  The decomposition in Lemma~\ref{lemma:splitting-hausdorff-slices} becomes
  \begin{align*}
    K_0 = \left\{(x, \sign(x)) \mid 0 < |x| \le 1\right\}
    ~~ \mbox{and} ~~
    K_1 = \left\{(0, g) \mid |g| \le 1\right\},
  \end{align*}
  where $\hausmeasure^1(K_0) = 2$ and
  $\hausmeasure^1(K_1) = 2$.
  In part~\ref{item:special-base-graph} of the lemma, we may take $X_0 =
  \openright{-1}{0} \cup \openleft{0}{1}$, as $\hausmeasure^0(\{1\}) =
  \hausmeasure^0(\{-1\}) = 1$, or $X_1 = \{0\}$, as $\hausmeasure^1([-1,1])
  = 2$, and $G_0 = [-1,1] = \conv\{-1, 1\}$ is a $1$-simplex.
  See Figure~\ref{fig:example-graph-figs}(a) for an illustration.
\end{example}

\begin{example}[Disconnected base sets]
  \label{example:disconnected-base}
  Let $\{q_i\}_{i \in \N}$ index the rationals
  $0 < q < 1$
  and define the non-decreasing step function
  $g(x) = \sum_{i : q_i \le x} \frac{1}{2^i}$,
  which satisfies $g(0) = 0$  and $g(1) = 1$.
  Then $f(x) = \int_0^x g(t) dt$ is convex
  on $[0, 1]$, and
  \begin{align*}
    \partial f(x) & = \left[\sum_{i : q_i < x} \frac{1}{2^i},
      \sum_{i : q_i \le x} \frac{1}{2^i}\right]
    ~~ \mbox{for~} x \in X_1 \defeq \Q \cap [0, 1] \\
    \partial f(x) & = \{\nabla f(x)\} = \left\{\sum_{i : q_i < x} \frac{1}{2^i}
    \right\}
    ~~ \mbox{for~} x \in X_0 \defeq [0, 1] \setminus \Q.
  \end{align*}
  This partitions the graph $\gr \partial f \subset [0, 1] \times [0, 1]$ into
  the two sets
  \begin{align*}
    K_0 = \bigcup_{x \in [0, 1] \setminus \Q}
    \{x\} \times \{\nabla f(x)\}
    ~~~ \mbox{and} ~~~
    K_1 = \bigcup_{x \in \Q \cap [0, 1]}
    \{x\} \times \partial f(x).
  \end{align*}
  For the projection $\pi(x, v) = x$, the set $X_0 = \pi(K_0) = [0, 1]
  \setminus \Q$ is 1-dimensional with $\dimhaus \partial f(x) = \inddim
  \partial f(x) = 0$ for each $x \in X_0$, while $\partial f(x)$ is a
  $1$-dimensional interval for $x \in X_1 = \pi(K_1) = \Q \cap [0,1]$.
  Both $X_0$ and $X_1$ are disconnected.
\end{example}

\begin{example}[Curved base sets]
  \label{example:curved-base}
  For $f_0(x) = \ltwo{x - e_1}^2$ and $f_1(x) = \half \ltwo{x + e_1}^2$,
  define $f(x) = \max\{f_0(x), f_1(x)\}$,
  which 
  $x^0 = (3 - \sqrt{8}) e_1$ minimizes.
  Then $f_0 = f_1$ on
  \begin{align*}
    S \defeq \{x \mid \ltwo{x}^2 = 6
    x_1 - 1\},
  \end{align*}
  and the
  subdifferential $\partial f(x) = \conv\{2 (x - e_1), (x
  + e_1)\}$ is $1$-dimensional
  on $S$.
  Otherwise $f$ is differentiable with $\nabla f(x) = \nabla f_i(x)$ if
  $f_i(x) > f_{1 - i}(x)$, $i = 0, 1$.
  This partitions $\gr \partial f$ into
  \begin{align*}
    K_0 = \left\{(x, \nabla f(x)) \mid x \not\in S
    \right\}
    ~~ \mbox{and} ~~
    K_1 = \left\{\{x\} \times \conv\{2(x - e_1),
    (x + e_1)\} \mid x \in S\right\}
  \end{align*}
  and yields base sets $X_1 = S$ and
  $X_0 = S^c$, with dimensions $d - 1$ and $d$.
  We have
  $\hausmeasure^1(\partial f(x)) \gtrsim 1$ for $x \in X_1$,
  while $\hausmeasure^0(\partial f(x)) = 1$ for $x \in X_0$.
  Figure~\ref{fig:example-graph-figs}(b) illustrates this example.
\end{example}

\begin{figure}[ht]
  \begin{center}
    \begin{tabular}{cc}
      \hspace{-.5cm}
      \begin{overpic}[width=.52\columnwidth]{
          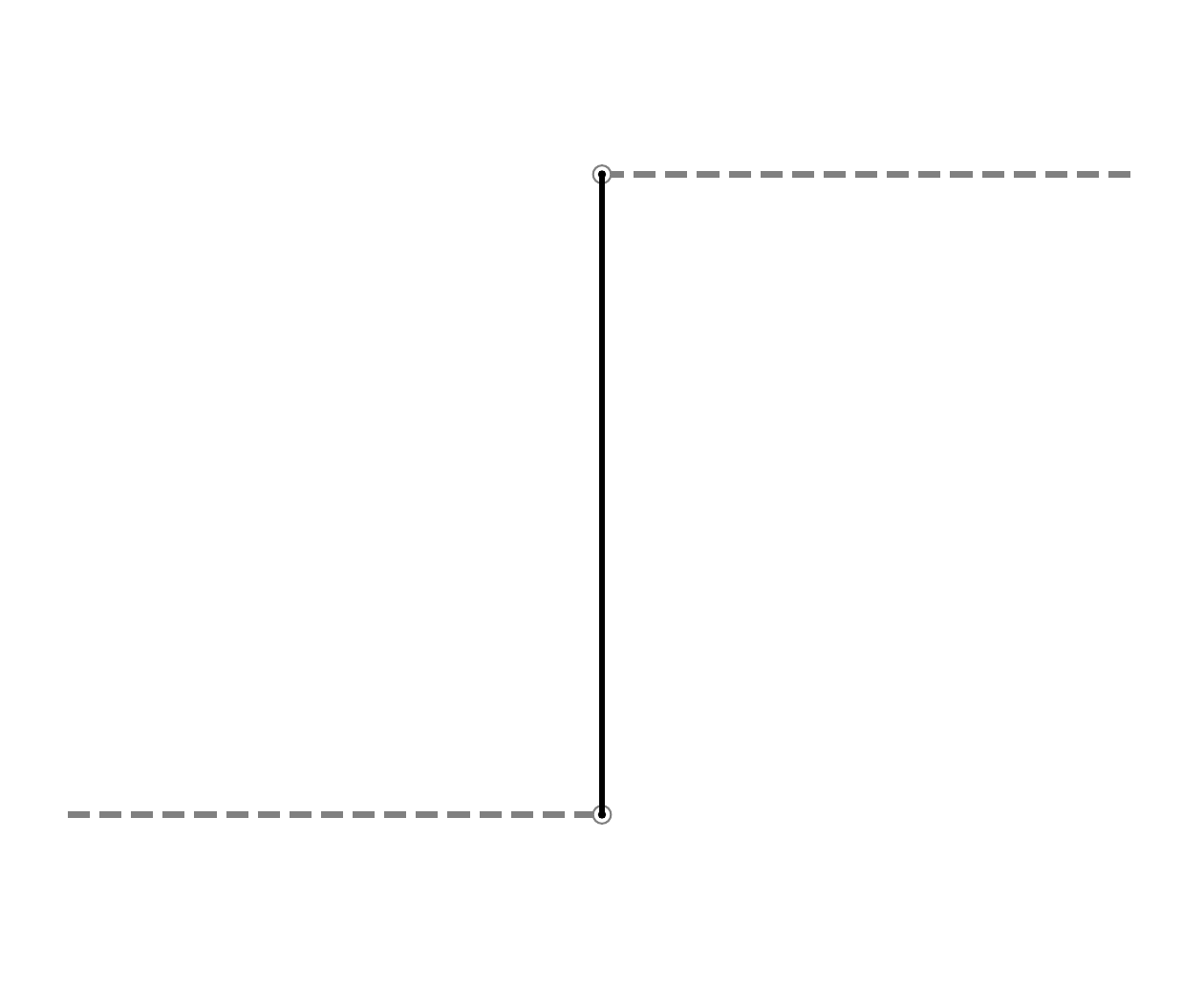}
        \put(25,51){$\gr \partial |\cdot|$}
        \put(51,41){$K_1 = \{0\} \times [-1, 1]$}
        \put(3,17){$K_0 \cap \{(x, g) \mid x < 0\}$}
        \put(55,70){$K_0 \cap \{(x, g) \mid x > 0\}$}
      \end{overpic}
      &
      \hspace{-.5cm}
      \begin{overpic}[width=.52\columnwidth]{
          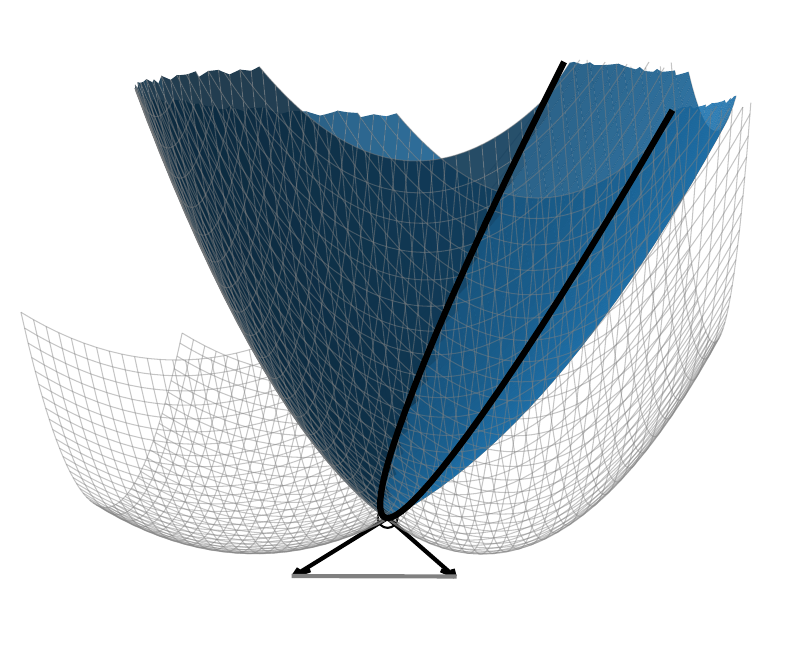}
        \put(10,41){$f_0$}
        \put(90,35){$f_1$}
        \put(12,76){$f = \max\{f_0, f_1\}$}
        \put(41,4){$\partial f(x^0)$}
        \put(66,76){$\{(x, f(x))\}_{x \in S}$}
      \end{overpic} \\
      (a) & (b)
    \end{tabular}
    \caption{\label{fig:example-graph-figs}
      Illustrations of partitions of $\gr \partial f$.
      (a) The graph $\gr \partial f$ of $f(x) = |x|$ as in
      Example~\ref{example:l1-parts}.
      The set $K_1 = \{0\} \times [-1,1] = \{0\} \times \partial f(0)$
      corresponds to the vertical segment, while $K_0 = \R_{> 0} \times
      \{1\} \cup \R_{< 0} \times \{-1\}$ consists of the two partially open horizontal
      segments.
      (b) The function $f(x) = \max\{f_0(x), f_1(x)\}$ for
      $f_0(x) = \ltwo{x - e_1}^2$ and $f_1(x) = \half \ltwo{x + e_1}^2$
      in Example~\ref{example:curved-base}.
      On the illustrated $1$-dimensional
      sphere $S = \{x \mid \ltwo{x}^2 = 6x_1 - 1\}$
      (dark black line) where $f_0 = f_1$,
      the subdifferentials have $\dim \partial f(x) = 1$,
      as the subdifferential at $x^0 = \argmin_x f(x)$ shows. 
    }
  \end{center}
\end{figure}

With these examples, we discuss
points~\ref{item:base-graph-set}
and~\ref{item:special-base-graph} in
Lemma~\ref{lemma:splitting-hausdorff-slices},
which provide a non-quantitative guarantee that
$\hausmeasure^{d-k}(X_k) > 0$ and that the
images $\partial f(x)$ for $x \in X_k$ have positive
$k$-dimensional Hausdorff measure $\hausmeasure^k(\partial f(x))$.
%
The challenging part of this result is to demonstrate that the base set
$X_k = \pi(K)$ for the projection $\pi(x, v) = x$ has positive Hausdorff
measure $\hausmeasure^{d-k}(\pi(K)) > 0$.
To do so, we use inductive dimension (leveraging the dimension of mappings),
to (essentially) derive two-sided bounds on the Fubini-type
integral~\eqref{eqn:projected-K-finite} relating $\hausmeasure^d(K_i)$ and
$\hausmeasure^{d - i}(\pi(K_i))$ for the partitioning sets $K_i$ in
part~\ref{item:partition-by-hausdorff} of the lemma.
%

Alberti, Ambrosio, Cannarsa, and Soner~\citep{AlbertiAmCa92, AmbrosioCaSo93,
  Alberti94, AlbertiAm99} study similar structural properties of the
dimensionality of singular sets $\Sigma^k(f) \defeq \{x \in \R^d \mid \dim
\partial f(x) \ge k\}$ of (weakly) convex functions $f$.
They show~\citep{AlbertiAmCa92, Alberti94} that $\Sigma^k(f)$ is countably
$(d - k)$ rectifiable, implying that $\dimhaus \Sigma^k(f) \le d - k$,
corresponding to the upper bounds $\hausmeasure^{d-k}(X_{k,\epsilon}) <
\infty$ in part~\ref{item:base-graph-set} of
Lemma~\ref{lemma:splitting-hausdorff-slices}.
The lower bounds are more challenging; \citet[Thm.~2.3]{AmbrosioCaSo93} show
that in the neighborhood of a point $x$ with $\dim \partial f(x) = k$, there
is an $m = m(f, x)$ with $0 \le m \le k$ for which
$\hausmeasure^{d-k}(\Sigma^m(f) \cap \{x + \rho \ball\}) \ge (1 + o(1))
\omega_{d-k} \rho^{d-k}$ as $\rho \downarrow 0$.
They~\cite[Sec.~1]{AmbrosioCaSo93} define $m$ to be the
Carath\'{e}odory number of the set $\Gamma_x \defeq \limsup_{y \to x} \nabla
f(y)$ minus 1, noting that the set of limiting gradients satisfies
$\conv(\Gamma_x) = \partial
f(x)$~\cite[e.g.][Thm.~VI.6.3.1]{HiriartUrrutyLe93}.
Unfortunately, there exist convex $f$ with unique minimizers $x^0$ for which
$m = m(f, x^0) = 0$ while $d - k$ is
arbitrary~\cite[Ex.~2.1]{AmbrosioCaSo93}, and so we cannot generally
leverage these results in Lemma~\ref{lemma:splitting-hausdorff-slices}.


\providecommand{\jacobian}{\mathsf{J}}

The co-area formula~\cite[Thm.~3.2.22]{Federer69} analogizes Fubini's theorem
for Hausdorff measures, and states that for sufficiently regular sets $K
\subset \R^N$ and $u : K \to X \subset \R^m$ with $m
\le n$,
\begin{equation}
  \label{eqn:co-area}
  \int_{K} \jacobian_m^K u(y) d\hausmeasure^n(y)
  = \int_{X} \hausmeasure^{n-m}\left(u^{-1}(x)\right)
  d\hausmeasure^m(x),
\end{equation}
where $\jacobian_m^K u$ denotes the $m$-dimensional Jacobian of $u$ along the
tangents to $K$.
In the integral~\eqref{eqn:projected-K-finite}, we have $u = \pi$, the
(orthogonal) projection onto the first $d$ components,
$N = 2d$, $n = d$, and $m = d - k$.
The projection satisfies $0 \le \jacobian_{d-k}^K \pi \le 1$, and
so (with regularity)
\begin{align*}
  \int_K \jacobian_{d-k}^K \pi(y) d\hausmeasure^d(y)
  = \int_{\pi(K)} \hausmeasure^{k}(G \cap \pi^{-1}(x)) d\hausmeasure^{d-k}(x).
\end{align*}
One might hope to guarantee a lower bound $\jacobian_{d-k}^K \pi \ge c >
0$ to develop a two-sided inequality
\begin{align*}
  c \cdot \hausmeasure^d(K)
  \le \int_{\pi(K)} \hausmeasure^k(G \cap \pi^{-1}(x))
  d\hausmeasure^{d-k}(x) \le
  \hausmeasure^d(K),
\end{align*}
which would allow quantitative bounds on $\hausmeasure^{d-k}(\pi(K))$ in
(e.g.) Lemma~\ref{lemma:empgset-is-large-whp}.

Even for $1$-Lipschitz $\mc{C}^\infty$ convex functions, such
a quantitative lower bound cannot generally hold.
For example, for $\epsilon > 0$, consider $f_\epsilon(x) = \epsilon
\log(e^{x/\epsilon} + e^{-x/\epsilon})$,
which is $1$-Lipschitz with $f_\epsilon'(x) =
(e^{2x/\epsilon} - 1) / (e^{2x/\epsilon} + 1)$ and second derivative
$f_\epsilon''(0) = \epsilon^{-1}$.
Because the $\mc{C}^\infty$ map $x \mapsto (x, f_\epsilon'(x))$
parametrizes the $1$-dimensional set $K_0 = \{(x,
f_\epsilon'(x))\}_{-1 \le x \le 1}$, this has $1$-dimensional Jacobian
$\jacobian_1^{K_0} \pi(0) = 1 / \sqrt{1 + f''_\epsilon(0)^2} = (1 + o(1))
\cdot \epsilon$ as $\epsilon \to 0$.
This somewhat trivial example shows that, at least following our
proof approach, new insights may be necessary to provide
quantitative guarantees on the characteristics of the set
$\proxmap_{\poploss_{P_n}}^{-1}(\statpoints(\epsilon))$.


%


\section{Discussion}

Under local moment and maximal monotonicity assumptions, the randomized
resampling estimator $\what{x}_n$ that Algorithm~\ref{alg:resample} defines
satisfies
\begin{equation*}
  \dist\left(0, \popop(\what{x}_n) + \normalcone_X(\what{x}_n)
  \right) \to 0
  ~~ \mbox{in probability}.
\end{equation*}
For the subdifferential $\partial \poploss_P$, this gives the convergence
$\dist(0, \partial \poploss_P(\what{x}_n) + \normalcone_X(\what{x}_n)) \cp
0$.
Our guarantee does not require the operator to be single-valued or Lipschitz
continuous, and does not rely on a smoothed surrogate as a proxy for
stationarity.
This distinction is central to our contribution: we control the stationary
residual $\dist\left(0, \popop(\what{x}_n) +
\normalcone_X(\what{x}_n)\right)$ directly.
Many statistical and optimization questions remain open.

A natural strengthening of our results would show that the random resampling
step in Algorithm~\ref{alg:resample} is unnecessary,
and that the empirical proximal mapping maps a neighborhood of $x^0$
onto the $\epsilon$-stationary set.
No such neighborhood exists in general: an extension of the construction in
Section~\ref{sec:baby-challenges}, where we use $\loss_z(x) = \sup_j z_j
\hinge{\<x, u_j\> - \alpha_j}$ for specially chosen $u_j \in \sphere^1$ and
$\alpha_j > 0$ that we provide in Appendix~\ref{sec:proof-prox-challenges},
shows this must fail.
\begin{proposition}
  \label{proposition:prox-challenges}
  Let $0 < \delta < 1$ and $Z \in \{0, 1\}^\N$ have i.i.d.\ $\bernoulli(1 -
  \delta)$ coordinates.
  The point $x^0 = 0$ minimizes each of the $1$-Lipschitz
  convex functions $\loss_z : \R^2 \to \R_+$.
  Define the event
  $\mc{B}_n$ that for all neighborhoods $O$ of $x^0$,
  there is an open set $U \subset O$ such that
  \begin{enumerate}[leftmargin=*,label=(\roman*)]
  \item $\proxmap_{\lambda \poploss_{P_n}}(x) = x$ for each
    $x \in U$ and $\lambda > 0$, and
  \item 
    $\poploss_P$ is differentiable on $U$,
    and $\norm{\nabla \poploss_P(x)} \ge 1 - \delta$ for $x \in U$.
  \end{enumerate}
  Then $\P(\mc{B}_n) = 1$ for all $n$.
\end{proposition}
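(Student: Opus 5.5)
Our plan is to extend the construction of Proposition~\ref{proposition:bad-losses} to $\R^2$. We use a countable family of hyperplane pieces whose activation thresholds $\alpha_j$ shrink to zero, so every neighborhood of $x^0 = 0$ contains a region where a single inactive piece dominates the population loss.

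\emph{Construction.} Choose distinct directions $u_j \in \sphere^1$ accumulating only at one direction, say $u_j = (\cos\theta_j, \sin\theta_j)$ with $\theta_j$ distinct in $(0, \pi/2)$. Choose thresholds $\alpha_j \downarrow 0$ quickly enough that the following holds: for each $j$ there is a point $y_j = t_j u_j$ with $\alpha_j < t_j$ and $\<y_j, u_i\> < \alpha_i$ for all $i \neq j$. Then set
\begin{equation*}
  \loss_z(x) = \sup_j z_j \hinge{\<x, u_j\> - \alpha_j}.
\end{equation*}
This function is convex, nonnegative, and $1$-Lipschitz, and $\loss_z(0) = 0$, so $x^0 = 0$ minimizes every $\loss_z$.

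To obtain the separation we use the polygon $Q = \bigcap_j \{x : \<x, u_j\> \le \alpha_j\}$ and choose the $\alpha_j$ so that every constraint is facet-defining. For example, take the $\alpha_j$ to be support values of a strictly convex curve through the origin's vicinity, e.g.\ tangent lines to a small circle-like arc. With this choice each face $F_j = Q \cap \{\<x,u_j\> = \alpha_j\}$ is a nondegenerate segment. Points just outside the relative interior of $F_j$ violate only constraint $j$. By scaling the $\alpha_j \to 0$ (for instance, tangents to circles of radii shrinking geometrically, arranged so the faces accumulate at $0$), every neighborhood $O$ of $0$ contains some face $F_j$ together with an open set $U_j \subset O$ just beyond $F_j$. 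On $U_j$ we have $\<x, u_j\> > \alpha_j$ and $\<x, u_i\> < \alpha_i$ for $i \ne j$, strictly, by continuity and compactness of the relevant constraints. Arranging this strictness uniformly over the infinitely many $i$ is the main obstacle. I would handle it by keeping the $u_i$ bounded away from $u_j$ except for finitely many neighbors, in the spirit of the non-alignment condition~\eqref{eqn:non-alignment}, and by checking that $\sup_{i \ne j}(\<x,u_i\> - \alpha_i) < 0$ on a small ball.

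\emph{Probability-one event.} For fixed $n$, the events $E_j = \{Z_{i,j} = 0 \text{ for all } i \le n\}$ are independent across $j$ and each has probability $\delta^n > 0$. By Borel--Cantelli, infinitely many $E_j$ occur almost surely. Since every neighborhood $O$ contains $U_j$ for all large $j$, almost surely every $O$ contains some $U_j$ with $E_j$ holding.

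\emph{Verifying (i) and (ii).} On $U_j$ with $E_j$ holding, every sample loss satisfies
\begin{equation*}
  \loss_{Z_i}(x) = \sup_{l \neq j} Z_{i,l} \hinge{\<x, u_l\> - \alpha_l} = 0.
\end{equation*}
So $\poploss_{P_n} \equiv 0$ on $U_j$, and since $\poploss_{P_n} \ge 0$ everywhere, each $x \in U_j$ is a global minimizer. The strongly convex objective $\poploss_{P_n}(y) + \frac{1}{2\lambda}\ltwo{y-x}^2$ is then uniquely minimized at $y = x$, which gives $\proxmap_{\lambda\poploss_{P_n}}(x) = x$ for all $\lambda > 0$. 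For (ii), on $U_j$ only piece $j$ can be positive, so
\begin{equation*}
  \loss_z(x) = z_j(\<x,u_j\> - \alpha_j) \quad \text{and} \quad \poploss_P(x) = (1-\delta)(\<x,u_j\> - \alpha_j).
\end{equation*}
This function is affine, hence differentiable, with $\ltwo{\nabla \poploss_P(x)} = 1-\delta$.
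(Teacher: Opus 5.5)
Your probabilistic and analytic steps match the paper's. These are the second Borel--Cantelli lemma over coordinates $j$, the fact that $\poploss_{P_n}\equiv 0=\min\poploss_{P_n}$ on $U_j$ (so $\proxmap_{\lambda\poploss_{P_n}}(x)=x$), and $\nabla\poploss_P=(1-\delta)u_j$ on $U_j$. The gap is the geometric construction, which is the whole content of the proposition: you never supply it, and the version you state explicitly cannot exist. Suppose $u_i\to u_\infty$ and $\alpha_i\to 0$. Letting $i\to\infty$, any $x$ with $\langle x,u_i\rangle\le\alpha_i$ for all $i\ne j$ must satisfy $\langle x,u_\infty\rangle\le 0$. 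But $y_j=t_ju_j$ has $\langle y_j,u_\infty\rangle=t_j\langle u_j,u_\infty\rangle>0$. This holds for every $j$ when all angles lie in $(0,\pi/2)$, and for all large $j$ in general, since $u_j\to u_\infty$. So no point on the ray $\R_+u_j$ can serve as the witness, and making the $\alpha_j$ decay ``quickly enough'' only makes this worse. The same limit shows why your remedy for the uniformity obstacle is aimed at the wrong thing. Since $\langle x,u_i\rangle-\alpha_i\to\langle x,u_\infty\rangle$, the margin $\sup_{i\ne j}(\langle x,u_i\rangle-\alpha_i)<0$ depends on where $x$ sits relative to the limiting half-plane $\{\langle x,u_\infty\rangle\le 0\}$. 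It does not depend on angular separation of the $u_i$ from $u_j$ in the sense of the non-alignment condition.

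What works is to tie the thresholds to the directions: take the $\alpha_j$ as support values along tangent lines of a single strictly convex curve that passes through $0$ and is tangent there to $\{\langle x,u_\infty\rangle=0\}$. Your remark about tangent lines to a strictly convex arc points this way, but ``circles of geometrically shrinking radii'' does not by itself give it, and none of it is verified. The paper carries this out with the parabola $x_1=-x_2^2/4$. Take $t_j\downarrow 0$ strictly, $u_j=(1,t_j)/\sqrt{1+t_j^2}$, and $\alpha_j=t_j^2/\sqrt{1+t_j^2}$, with tangency points $v_j=(-t_j^2,2t_j)\to 0$. These satisfy the one-line identity $\langle v_j,u_i\rangle-\alpha_i=-(t_i-t_j)^2/\sqrt{1+t_i^2}$. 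Because the $t_i$ accumulate only at $0$ and $t_j>0$, the quantity $\eta_j=\inf_{i\ne j}(t_i-t_j)^2/(2\sqrt{1+t_i^2})$ is strictly positive. On the open ball of radius $\eta_j$ about $x_j=v_j+\eta_ju_j$ one then has $\langle x,u_j\rangle>\alpha_j$ and $\langle x,u_i\rangle<\alpha_i$ for every $i\ne j$. With this explicit margin in place of your facet/compactness sketch, the rest of your argument goes through as written.
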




Written differently, for $\epsilon < 1 - \delta$ and stationary set
$\statpoints_P(\epsilon) = \{x \mid \dist(0, \partial
\poploss_P(x)) \le \epsilon\}$,
\begin{align*}
  \P\left(\mbox{there~exists~a~neighborhood~}U~\mbox{of~}x^0
  ~ \mbox{and}~\lambda > 0
  ~ \mbox{s.t.}~
  \proxmap_{\lambda \poploss_{P_n}}(U) \subset \statpoints_P(\epsilon)
  \right) = 0
\end{align*}
for all sample sizes $n$.
That all neighborhoods of the minimum norm minimizer $x^0$ of $\poploss_P$
contain points for which the empirical proximal mapping $\proxmap_{\lambda
  \poploss_{P_n}}$ keeps points quite non-stationary, justifies some of the
complexity in our analyses.
For example, it is not obvious how to correct an empirical regularized
estimator to be (approximately) stationary.

Nonetheless, we might hope that the empirical proximal mapping
$\proxmap_{\poploss_{P_n}}$ maps ``most'' points into the stationary set.
We record this as our first open question:
\begin{question}
  Let Assumption~\ref{assumption:basic-vi-setting} hold.
  For each $\epsilon, \lambda > 0$, does there exist a
  neighborhood $U$ of $x^0$ such that $\lebesgue(U \setminus
  \resolvent_{\lambda A_{P_n} + \normalcone_X}^{-1}
  (\statpoints_P(\epsilon))) \to 0$ with high probability?
\end{question}

Special cases allow simpler procedures: when the population objective
$\poploss_P$ is differentiable at the minimum norm minimizer $x^0$ and when
$\partial \poploss_P(x^0)$ contains a neighborhood of $0$ (i.e.,
$\poploss_P$ has a sharp minimizer).
In each of these, regularized minimizers~\eqref{eqn:l2-empirical-vi} exhibit
asymptotic stationarity.
For simplicity, we consider cases when
Assumption~\ref{assumption:basic-vi-setting}.\ref{item:vi-moments}
holds with $p$ moments near $x^0$, meaning that $\loss_z$ is
$\lipconst(z)$-Lipschitz at $x^0$ with $\E[\lipconst^p(Z)] < \infty$.

\begin{observation}
  Assume that
  $0 \in \interior (\partial \poploss_P(x^0) + \normalcone_X(x^0))$
  and that $\partial \loss_z$ has $p > 2$ moments near $x^0$.
  Then for any sequence $\regmult_n \downarrow 0$,
  the regularized minimizer~\eqref{eqn:l2-empirical-vi} satisfies
  \begin{equation*}
    \P\left(\what{x}^{\regmult_n} = x^0
    ~ \mbox{eventually}\right) = 1.
  \end{equation*}
\end{observation}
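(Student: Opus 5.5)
The plan is a standard sharp-minimum argument: show that eventually the empirical operator still has $0$ strictly inside its value at $x^0$, uniformly enough to absorb the regularization term $\regmult_n x^0$. Then $x^0$ itself solves the empirical regularized problem~\eqref{eqn:l2-empirical-vi}, and uniqueness (from strong monotonicity) forces $\what{x}^{\regmult_n} = x^0$.

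\emph{Step 1 (margin at the population).} By assumption there is $\rho > 0$ with $\rho \ball \subset \partial \poploss_P(x^0) + \normalcone_X(x^0)$.

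\emph{Step 2 (almost sure Hausdorff convergence at the single point $x^0$).} Apply Lemma~\ref{lemma:empirical-subdifferential-pointwise} with $S_z = \partial \loss_z(x^0)$, which is bounded by $\lipconst(z)$, and exponent $p > 2$. This gives $\E[\dhaus^p(\partial \poploss_{P_n}(x^0), \partial \poploss_P(x^0))] \le C n^{-p/2}$. By Markov's inequality,
\[
\P\bigl(\dhaus(\partial \poploss_{P_n}(x^0), \partial \poploss_P(x^0)) \ge \rho/4\bigr) \lesssim n^{-p/2},
\]
and since $p > 2$ this sequence is summable in $n$. Borel--Cantelli then gives almost surely $\dhaus \le \rho/4$ eventually. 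By the argument of Lemma~\ref{lemma:empirical-subdifferential-pointwise-normal}, the same bound holds after adding $\normalcone_X(x^0)$ to both sets.

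\emph{Step 3 (interior persists).} This is the main point to check. We need that if a convex set $C$ (here the population set) contains $\rho \ball$ and $\dhaus(C, D) \le \rho/4$ for a convex set $D$, then $D \supset (\rho/2)\ball$. Argue by separation. If some $w$ with $\ltwo{w} \le \rho/2$ lay outside the closure of $D$, there would be a unit $u$ with $\sigma_D(u) \le \<u, w\> \le \rho/2$, where $\sigma$ denotes the support function. But $\sigma_C(u) \ge \rho$, and Hausdorff distance bounds the difference of support functions, so $\sigma_D(u) \ge \rho - \rho/4 = 3\rho/4$, a contradiction. Closedness is not an issue: $\partial \poploss_{P_n}(x^0)$ is a finite sum of compact sets, and its sum with a closed cone is closed here. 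If closedness needs care, it suffices to use the slightly smaller ball $(\rho/3)\ball$.

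\emph{Step 4 (conclusion).} Since $\regmult_n \to 0$, eventually $\regmult_n \ltwo{x^0} \le \rho/2$. Then $-\regmult_n x^0 \in \partial \poploss_{P_n}(x^0) + \normalcone_X(x^0)$, that is,
\[
0 \in \partial \poploss_{P_n}(x^0) + \regmult_n x^0 + \normalcone_X(x^0).
\]
Since $\poploss_{P_n} + \frac{\regmult_n}{2}\ltwo{\cdot}^2$ is strongly convex, its minimizer over $X$ is unique, so $\what{x}^{\regmult_n} = x^0$ eventually, with probability one.
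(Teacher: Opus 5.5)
Your proposal is correct and follows essentially the same route as the paper. The paper also uses a margin ball in $\partial \poploss_P(x^0) + \normalcone_X(x^0)$, applies Lemma~\ref{lemma:empirical-subdifferential-pointwise} with Borel--Cantelli (summable because $p > 2$) to get eventual Hausdorff closeness at $x^0$, shows a smaller ball persists, absorbs $\regmult_n x^0$, and concludes by uniqueness of the strongly convex regularized minimizer. Your support-function argument in Step~3 supplies a detail the paper asserts without proof, and you rightly skip the paper's sharpness remark ($\ltwo{g}$ bounded below away from $x^0$), which the conclusion does not need.
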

\begin{proof}
  For convex $h$, if $0 \in \interior \partial h(x^0)$, there exists
  $\epsilon > 0$ such that
  $\interior \partial h(x^0) \supset \epsilon \ball$, and
  $\ltwo{g} \ge \epsilon$ for all $g \in \partial h(x)$ and
  $x \neq x^0$.
  Indeed, monotonicity of subgradients implies that
  $v = (x - x^0) / \ltwos{x - x^0}$ satisfies
  $\inf\{\<g, v\> \mid g \in \partial h(x)\} \ge h'(x^0; v) \ge \epsilon$.
  With this guarantee, take $h = \poploss_P + \convexindic{X}$ and assume
  $\partial \poploss_P(x^0) + \normalcone_X(x^0) \supset c \ball$ for some
  $c > 0$.
  Then Lemma~\ref{lemma:empirical-subdifferential-pointwise} and an
  application of the Borel-Cantelli lemma imply that with probability $1$,
  eventually $\dhaus(\partial \poploss_P(x^0), \partial \poploss_{P_n}(x^0))
  \le \frac{c}{2}$.
  On this event, we have
  $\partial \poploss_{P_n}(x^0) + \normalcone_X(x^0) \supset \frac{c}{2} \ball$,
  so once $\regmult_n$ is small enough that
  $\regmult_n \ltwos{x^0} < \frac{c}{2}$, we obtain
  \begin{equation*}
    0 \in \interior \left(\partial \Big(\poploss_{P_n}(x)
    + \frac{\regmult_n}{2} \ltwo{x}^2\Big)\Big|_{x = x^0}
    + \normalcone_X(x^0)\right).
  \end{equation*}
  When this occurs, $x^0$ uniquely minimizes
  $\poploss_{P_n}(x) + \frac{\lambda}{2} \ltwo{x}^2$ over $x \in X$.
\end{proof}

\begin{observation}
  Assume that $\poploss_P$ is differentiable at $x^0 \in \interior X$, and
  that $\partial \loss_z$ has $p \ge 2$-moments near $x^0$.
  Then if $\regmult_n \to 0$ while $\regmult_n \gg 1/\sqrt{n}$,
  \begin{equation*}
    \dist\left(0, \partial \poploss_P(\what{x}^{\regmult_n})
    + \normalcone_X(\what{x}^{\regmult_n})
    \right) \cp 0.
  \end{equation*}
\end{observation}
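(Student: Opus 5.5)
Since $x^0 \in \interior X$, we have $\normalcone_X(x) = \{0\}$ for $x$ in a neighborhood $x^0 + b_0 \ball \subset X$. Differentiability of $\poploss_P$ at $x^0$ gives $\partial \poploss_P(x^0) = \{\nabla \poploss_P(x^0)\} = \{0\}$. The proof has two steps. First, show $\what{x}^{\regmult_n} \cp x^0$ by Proposition~\ref{proposition:sampled-vi-convergence}, whose hypothesis $n^{-1/2} \ll \regmult_n \ll 1$ is exactly what is assumed. Second, transfer the empirical optimality condition at $\what{x}^{\regmult_n}$ to the population subdifferential. The tool for the transfer is the classical fact that a convex function differentiable at $x^0$ has subdifferential outer semicontinuous there in the strong, uniform sense:
\begin{equation*}
  \sup\left\{\ltwo{g - \nabla \poploss_P(x^0)} \mid g \in \partial \poploss_P(x), \ \ltwo{x - x^0} \le \rho\right\} \to 0
  \quad \mbox{as } \rho \downarrow 0.
\end{equation*}
This follows from Hiriart-Urruty--Lemar\'echal, or by a short argument: local boundedness plus closedness of $\gr \partial \poploss_P$ force every cluster point of such $g$ to lie in $\partial \poploss_P(x^0) = \{0\}$.

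With that fact, the argument is direct. Fix $\eta > 0$ and choose $\rho \le b_0$ so that every $g \in \partial \poploss_P(x)$ with $\ltwo{x - x^0} \le \rho$ has $\ltwo{g} \le \eta$. On the event $\{\ltwos{\what{x}^{\regmult_n} - x^0} \le \rho\}$, whose probability tends to $1$, the point $\what{x}^{\regmult_n}$ is interior to $X$. Hence
\begin{equation*}
  \dist\left(0, \partial \poploss_P(\what{x}^{\regmult_n}) + \normalcone_X(\what{x}^{\regmult_n})\right)
  = \dist\left(0, \partial \poploss_P(\what{x}^{\regmult_n})\right) \le \eta.
\end{equation*}
The subdifferential is nonempty there because $x^0 \in \interior \dom \poploss_P$, which the moment assumption supplies through finiteness of $\poploss_P$ near $x^0$. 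Because $\eta$ is arbitrary, the residual converges to $0$ in probability.

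The empirical side, including the uniform-convergence failure the paper stresses, never enters. All sampling difficulty is absorbed into Proposition~\ref{proposition:sampled-vi-convergence}, and differentiability turns "close to $x^0$" into "nearly stationary". The only real obstacle is justifying the outer semicontinuity claim for $\partial \poploss_P$. For that I need $\poploss_P$ finite and Lipschitz on a neighborhood of $x^0$. This holds because $\loss_z$ is $\lipconst(z)$-Lipschitz near $x^0$ with $\E[\lipconst(Z)] < \infty$, so $\poploss_P$ is $\E[\lipconst(Z)]$-Lipschitz there, and the subgradients are locally bounded. I must also check that $\partial \poploss_P = \E[\partial \loss_Z]$ is the genuine convex subdifferential, as the paper notes after Assumption~\ref{assumption:basic-vi-setting}.
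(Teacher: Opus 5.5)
Your proposal is correct and follows the paper's proof. Proposition~\ref{proposition:sampled-vi-convergence} gives $\what{x}^{\regmult_n} \cp x^0$, and outer semicontinuity of $\partial \poploss_P$ at the differentiability point $x^0$ (where $\partial \poploss_P(x^0) = \{0\}$) makes the stationarity residual continuous at $x^0$; the paper then invokes the continuous mapping theorem, while you unpack it as an explicit $\eta$--$\rho$ argument, which is the same thing. One trivial nit: you never need $\what{x}^{\regmult_n}$ to be strictly interior, since $0 \in \normalcone_X(x)$ for every $x \in X$ already gives $\dist(0, \partial \poploss_P(x) + \normalcone_X(x)) \le \dist(0, \partial \poploss_P(x))$.
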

\begin{proof}
  By Proposition~\ref{proposition:sampled-vi-convergence},
  $\what{x}^{\regmult_n} \cp x^0$.
  The outer semi-continuity~\cite[Thm.~6.2.4]{HiriartUrrutyLe93} of
  the subdifferentials of a convex function then implies that
  $x \mapsto \dist(0, \partial \poploss_P(x) + \normalcone_X(x))$ is
  continuous at $x^0$, and the continuous mapping theorem gives the
  result.
\end{proof}

One additional question regards uniformity of convergence: our results are
pointwise in the distribution $P$.
This contrasts with classical optimization.
To make the contrast apparent, suppose that the functions $\{\loss_z\}_{z
  \in \mc{Z}}$ are $\lipconst$-Lipschitz and the domain $X$ is bounded.
Then stochastic gradient methods achieve (worst-case optimal) convergence
rate $\E[\poploss_P(x_k)] - \poploss_P\opt \lesssim \lipconst \cdot
\diam(X) / \sqrt{k}$, independent of $P$ and
dimension~\cite{NemirovskiJuLaSh09, AgarwalBaRaWa12}, and letting $\mc{P}$
be the collection of distributions on $\mc{Z}$, the distribution-free
guarantee $\inf_{\what{x}_n} \sup_{P \in \mc{P}}
\E_{P^n}[\poploss_P(\what{x}_n) - \poploss_P\opt] \to 0$ holds if and only
if the losses $\{\loss_z\}_{z \in \mc{Z}}$ are uniformly Lipschitz on each
compact $X_0 \subset \relint X$~\cite{ArecesDu25}.
In a companion paper we are preparing, we show that in distinction,
procedures achieving stationarity~\eqref{eqn:stationary-point-goal} must
exhibit some dependence on the underlying dimension.


We nonetheless have a natural open question:
\begin{question}
  \label{question:uniform}
  Let $d < \infty$ and $X \subset \R^d$ be compact convex.
  Assume that $\{\loss_z\}_{z \in \mc{Z}}$ are $\lipconst$-Lipschitz
  and convex on $\R^d$.
  Is there an estimator sequence $\what{x}_n = \what{x}_n(Z_1, \ldots, Z_n)$
  such that
  \begin{align*}
    \lim_n
    \sup_{P \in \mc{P}}
    \E_{P^n}\left[
      \dist\left(0, \partial \poploss_P(\what{x}_n) + \normalcone_X(\what{x}_n)
      \right)\right]
    = 0?
  \end{align*}
\end{question}
\noindent
That is, we ask whether it is possible to derive a convergence rate to
stationarity for $\lipconst$-Lipschitz convex objectives.
%
%
One avenue would be to adapt \citeauthor{ShalevShSrSr09}'s stability-based
arguments~\citep{ShalevShSrSr09}, which show that stochastic optimization is
possible if and only if estimators exist that are appropriately stable to
replacing an individual example $Z_i$.
%
Thus far, the natural instability of subdifferential mappings has made it
difficult for us to generalize replace-one stability to Hausdorff or other
set distances, leaving Question~\ref{question:uniform} quite open.

\appendix

\section{Miscellaneous deferred proofs}

\subsection{Proof of Observation~\ref{observation:conditional-coverage}}
\label{sec:proof-conditional-coverage}

Writing the subdifferential
of $\loss_{x,z}(\theta) = \alpha \hinge{\theta^\top \phi(x) - z}
+ (1 - \alpha) \hinge{z - \theta^\top \phi(x)}$
and using the shorthand $\tau_\theta(x) = \theta^\top \phi(x)$,
we obtain
\begin{align*}
  \lefteqn{\partial \poploss_P(\theta)} \\
  & = \E_P\big[\alpha \phi(X) \indic{Z < \tau_\theta(X)}
    - (1 - \alpha) \phi(X) \indic{Z > \tau_\theta(X)}
    + [-(1 - \alpha), \alpha] \cdot \phi(X) \indic{Z = \tau_\theta(X)}
    \big] \\
  & = -\E\left[\phi(X) \indic{Z > \tau_\theta(X)}\right]
  + \alpha \E[\phi(X)]
  + \E\left[[-1, 0] \cdot \phi(X) \indic{Z = \theta^\top \phi(X)}\right].
\end{align*}
So if there is a vector $u \in \partial \poploss_P(\theta)$,
there is a functional $0 \le s(x, z) \le 1$ for which
\begin{align*}
  u = \alpha \E[\phi(X)] - \E\left[\phi(X) \indic{Z > \tau_\theta(X)}\right]
  - \E\left[s(X, Z) \cdot \phi(X) \indic{Z = \tau_\theta(X)}\right].
\end{align*}
Rewriting this by conditioning  on $X$,
we have
\begin{align*}
  u = \E\left[\phi(X) \cdot (\alpha - P(Z > \tau_\theta(X) \mid X))\right]
  - \E\left[s(X, Z) \cdot \phi(X) \indic{Z = \tau_\theta(X)}\right].
\end{align*}
In particular, for all vectors $v \in \R^d$ and nonnegative
weighting functions of the form $w(x) = v^\top \phi(x)$,
we have
\begin{align*}
  u^\top v
  \le \E\left[w(X) \cdot \left(\alpha - P(Z > \tau_\theta(X) \mid X)\right)
    \right]
\end{align*}
or, renormalizing,
$P_w(Z > \tau_\theta(X)) \le \alpha + \epsilon \ltwo{v} / \E[w(X)]$.

\subsection{Constructing a separated subset of the sphere}
\label{sec:construct-non-alignment}

For $\epsilon > 0$ and $x\in \sphere^{d-1}$,
let $C_\epsilon = \{y \in \sphere^{d-1} \mid \ltwo{y - x} \le \epsilon\}$ 
be the spherical cap centered at $x$.
Then for $\epsilon$ small
the surface area $\textup{SA}_d(C_\epsilon) \le c_d \epsilon^{d-1}$ while
$\textup{SA}_d(\sphere^{d-1}) = \frac{2 \pi^{d/2}}{\Gamma(d/2)} = C_d$,
where $c_d$ and $C_d$ are dimension dependent constants.
Let $\varepsilon_i > 0$ be a non-increasing sequence
satisfying $\frac{c_d}{C_d} \sum_{i = 1}^\infty  \varepsilon_i^{d-1} \le 1$,
and recursively define a set of points as follows.
Take $U_1 = \sphere^{d-1}$, $u_1 \in U_1$, and construct
\begin{align*}
  U_{i + 1} = U_i \setminus \{u_i + \varepsilon_i \ball_2^d\},
\end{align*}
taking $u_i \in U_i$.
At each step $i$, any point $u_j$ for $j < i$ satisfies
$\ltwo{u_i - u_j} \ge \varepsilon_j$,
while each point $j > i$ satisfies $\ltwo{u_i - u_j} \ge \varepsilon_i$;
we can never have $U_i = \emptyset$ by a volume argument
because $\frac{c_d}{C_d} \sum_i \varepsilon_i^{d-1} \le 1$.
So $\ltwo{u_i - u_j} \ge \varepsilon_i$ for each $i \neq j$,
and thus we have inequality~\eqref{eqn:non-alignment}:
\begin{align*}
  \sup_{j \neq i} \<u_i, u_j\>
  = \sup_{j \neq i} \left\{1 - \half \ltwo{u_i - u_j}^2 \right\}
  \le 1 - \half \varepsilon_i^2 < 1.
\end{align*}

\subsection{Proof of Lemma~\ref{lemma:minimum-norm-solution}}
\label{sec:proof-minimum-norm-solution}

First, we claim that $\ltwos{x^0} \ge \ltwos{x^\lambda}$ for all $\lambda
> 0$.
By definition of $x^0$ and $x^\lambda$ as solutions
to their respective V.I.s, we have
\begin{align*}
  0 \in A(x^0) + \normalcone_X(x^0)
  ~~ \mbox{and} ~~
  -\lambda x^\lambda \in A(x^\lambda) + \normalcone_X(x^\lambda).
\end{align*}
Then by monotonicity, for the elements
$a^0 \in A(x^0)$ and $w^0 \in \normalcone_X(x^0)$
and $a^\lambda \in A(x^\lambda)$ and $w^\lambda \in \normalcone_X(x^\lambda)$
realizing $0 = a^0 + w^0$ and $a^\lambda + w^\lambda + \lambda x^\lambda = 0$,
\begin{align*}
  \<0 +\lambda x^\lambda, x^0
  - x^\lambda\>
  = \<a^0 + w^0 - a^\lambda - w^\lambda, x^0 - x^\lambda\> \ge 0.
\end{align*}
Rearranging and applying Cauchy-Schwarz,
\begin{align*}
  \lambda \ltwos{x^\lambda}^2
  & \le \lambda \<x^0, x^\lambda\>
  \le \lambda \ltwos{x^0} \ltwos{x^\lambda},
\end{align*}
and dividing by $\ltwos{x^\lambda}$ gives the claim (except if
$\ltwos{x^\lambda} = 0$, in which case it is trivial).

We now argue that $x^\lambda \to x^0$.
Consider any potential limit of a (subsequence)
of $x^\lambda$ as $\lambda \downarrow 0$, i.e.,
$x^\lambda \to x$.
We claim $x$ solves~\eqref{eqn:vi}.
The outer semicontinuity of $A+\normalcone_X$ yields
$\limsup_{\lambda \to 0} A(x^\lambda)+\normalcone_X(x^\lambda) 
\subset A(x)+\normalcone_X(x)$;
let $u^\lambda \in A(x^\lambda)+\normalcone_X(x^\lambda)$ 
be the vector satisfying $u^\lambda+\lambda x^\lambda=0$.
Then as $\lambda x^\lambda \to 0$, we obtain
that $\lim_{\lambda \to 0} u^\lambda = 0 
\in A(x)+\normalcone_X(x)$.
Hence, since $\ltwos{x^0} \ge \ltwos{x^\lambda}$, $x$ 
must be the minimum norm element of $X\opt$.

\subsection{Proof of Proposition~\ref{proposition:sampled-vi-convergence}}
\label{sec:proof-sampled-vi-convergence}

We use a typical stochastic convergence argument.
\begin{lemma}
  Let $\what{x}^{\regmult} \in X$ solve the empirical
  problem~\eqref{eqn:l2-empirical-vi}.
  Then there exists an element
  $a_z(x) \in A_z(x)$ with $a_P(x) = \E_P[a_Z(x)]$ for which
  \begin{align*}
    \ltwobig{\what{x}^{\regmult} - x^\regmult} \le \frac{1}{\regmult}
    \ltwo{a_{P_n}(x^\regmult)
      - a_P(x^\regmult)}.
  \end{align*}
\end{lemma}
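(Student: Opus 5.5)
We compare the two variational inclusions through strong monotonicity. By definition of the empirical solution there exist $\hat a \in A_{P_n}(\what{x}^\regmult)$ and $\hat w \in \normalcone_X(\what{x}^\regmult)$ with $\hat a + \hat w + \regmult \what{x}^\regmult = 0$. Similarly, there exist $a_P(x^\regmult) \in A_P(x^\regmult)$ and $w \in \normalcone_X(x^\regmult)$ with $a_P(x^\regmult) + w + \regmult x^\regmult = 0$. By the definition of the Aumann integral, $a_P(x^\regmult) = \E_P[a_Z(x^\regmult)]$ for some measurable selection $a_z(x^\regmult) \in A_z(x^\regmult)$. This fixes the selection in the statement. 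Because $A_{P_n}$ is a finite sum, we then have $a_{P_n}(x^\regmult) = \frac{1}{n}\sum_i a_{Z_i}(x^\regmult) \in A_{P_n}(x^\regmult)$, so $a_{P_n}(x^\regmult) + w$ is an element of $(A_{P_n} + \normalcone_X)(x^\regmult)$.

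Next, we apply monotonicity of $A_{P_n} + \normalcone_X$, which follows because each $A_{Z_i}$ and the normal cone are monotone, at the pair of points $\what{x}^\regmult$ and $x^\regmult$:
\begin{equation*}
  \<\hat a + \hat w - a_{P_n}(x^\regmult) - w, \what{x}^\regmult - x^\regmult\> \ge 0.
\end{equation*}
We substitute $\hat a + \hat w = -\regmult \what{x}^\regmult$ and $w = -a_P(x^\regmult) - \regmult x^\regmult$, which gives
\begin{equation*}
  \<-\regmult(\what{x}^\regmult - x^\regmult) + a_P(x^\regmult) - a_{P_n}(x^\regmult), \what{x}^\regmult - x^\regmult\> \ge 0.
\end{equation*}
Rearranging yields $\regmult \ltwos{\what{x}^\regmult - x^\regmult}^2 \le \<a_P(x^\regmult) - a_{P_n}(x^\regmult), \what{x}^\regmult - x^\regmult\>$. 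Applying Cauchy--Schwarz and dividing by $\ltwos{\what{x}^\regmult - x^\regmult}$ gives the claim; the case where this norm is zero is trivial.

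The only point that needs care is that the selection must be chosen \emph{before} sampling, so that it depends only on the population solution $x^\regmult$. This matters later: it makes $a_{P_n}(x^\regmult)$ an i.i.d.\ average with mean $a_P(x^\regmult)$, which is what the proof of Proposition~\ref{proposition:sampled-vi-convergence} will need. Measurability of the selection is the mild obstacle here; we take it from the standing Aumann-integral conventions.
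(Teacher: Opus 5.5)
Your proof is correct and follows essentially the same argument as the paper. You fix the population selection $a_z(x^\regmult)$ realizing $0 = a_P(x^\regmult) + \regmult x^\regmult + w$, apply monotonicity of $A_{P_n} + \normalcone_X$ at $\what{x}^\regmult$ and $x^\regmult$ with the sampled average $a_{P_n}(x^\regmult)$, substitute the two inclusions, and finish with Cauchy--Schwarz.
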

\begin{proof}
  By definition of $x^\regmult$, there
  exists $w^\regmult \in \normalcone_X(x^\regmult)$ and
  a selection $a_z(x) \in A_z(x)$ for which
  $a_P(x^\regmult)
  = \int a_z(x^\regmult) dP(z)$
  satisfies $0 = a_P(x^\regmult) + \regmult x^\regmult + w^\regmult$.
  Let $\wb{a}_n \in A_{P_n}(\what{x})$ and
  $w_n \in \normalcone_X(\what{x})$ be the
  empirical element of $A_{P_n}(\what{x}^{\regmult_n})$ and
  $\normalcone_X(\what{x})$ satisfying
  $0 = \wb{a}_n + w_n + \regmult \what{x}$,
  and let $\wb{a}^\regmult_n
  = \frac{1}{n} \sum_{i = 1}^n a_{Z_i}(x^\regmult)$.
  Then by monotonicity
  \begin{align*}
    \<\wb{a}_n - \wb{a}_n^\regmult + w_n - w^\regmult
    + \regmult (\what{x} - x^\regmult), \what{x} - x^\regmult\>
    & = \<\wb{a}_n - \wb{a}_n^\regmult + w_n - w^\regmult,
    \what{x} - x^\regmult\> + \regmult \ltwobig{\what{x} - x^\regmult}^2 \\
    & \ge \regmult \ltwobig{\what{x} - x^\regmult}^2.
  \end{align*}
  Using that $0 = \wb{a}_n + w_n + \regmult \what{x}$ implies
  \begin{align*}
    -\<\wb{a}_n^\regmult + w^\regmult + \regmult x^\regmult, \what{x} - x^\regmult\>
    \ge \regmult \ltwobig{\what{x} - x^\regmult}^2.
  \end{align*}
  Adding
  $\<a_P(x^\regmult)+\regmult x^\regmult+w^\regmult, \what{x} - x^\regmult\>=0$
  to the left and applying Cauchy-Schwarz yields
  \begin{align*}
    \ltwo{a_P(x^\regmult) - \wb{a}_n^\regmult}
    \ltwobig{\what{x} - x^\regmult}
    \ge \regmult \ltwobig{\what{x} - x^\regmult}^2,
  \end{align*}
  which is the desired result.
\end{proof}

The final step uses Assumption~\ref{assumption:basic-vi-setting}.
We see that for small enough $\regmult$, $x^\regmult$ belongs to the
neighborhood of $x^0$ Assumption~\ref{assumption:basic-vi-setting}
specifies, and so
\begin{align*}
  \ltwobig{\what{x}^{\regmult} - x^0}
  \le \ltwobig{x^\regmult - x^0}
  + \frac{1}{\regmult} \ltwobig{a_{P_n}(x^\regmult) - a_P(x^\regmult)}.
\end{align*}
Taking an expectation yields that $\E[\ltwos{a_{P_n}(x^\regmult) -
    a_P(x^\regmult)}^2] \lesssim \frac{1}{n}$, and so long as $n \regmult_n^2
\to \infty$ while $\regmult_n \to 0$ we have the first result.
For the almost-sure convergence, observe by the vector Khinchine-Kahane
inequalities and a symmetrization argument~\cite[Ch.~1.3]{delaPenaGi99} that
for any mean zero vectors $W_i \in \R^d$, we have for $p > 2$ that
\begin{align*}
  \E\left[\ltwobigg{\sum_{i = 1}^n W_i}^p\right]
  \le 2^{p} \E\left[\ltwobigg{\sum_{i = 1}^n \varepsilon_i W_i}^p\right]
  & \lesssim
  \E\left[\E\left[\ltwobigg{\sum_{i = 1}^n \varepsilon_i W_i}^2 \mid W_1^n
      \right]^{p/2}\right] \\
  & = \E\left[\bigg(\sum_{i = 1}^n \ltwo{W_i}^2\bigg)^{p/2}\right]
  \le n^{p/2 - 1} \sum_{i = 1}^n \E[\ltwo{W_i}^p],
\end{align*}
where $\varepsilon \simiid \uniform\{-1, 1\}$ are an independent Rademacher
sequence.
Accordingly,
given $p$th moment, we obtain
\begin{align*}
  \E\left[\ltwobig{a_{P_n}(x^\regmult) - a_P(x^\regmult)}^p\right]
  \lesssim n^{-p/2} \E\left[\ltwobig{a_Z(x^\regmult)}^p\right],
\end{align*}
so that
\begin{align*}
  \sum_{n = 1}^\infty \P\left(\frac{1}{\regmult_n} \ltwobig{a_{P_n}(x^{\regmult_n})
    - a_P(x^{\regmult_n})} > \epsilon\right)
  \lesssim \frac{1}{\epsilon^p} \sum_{n = 1}^\infty
  \frac{1}{\regmult_n^p \cdot n^{p/2}}
  \cdot \E\left[\ltwobig{a_Z(x^\regmult)}^p\right].
\end{align*}
So long as $\sum_n \frac{1}{\regmult_n^p n^{p/2}} < \infty$,
the Borel-Cantelli lemma implies almost sure convergence.

\subsection{Proof of Proposition~\ref{proposition:prox-challenges}}
\label{sec:proof-prox-challenges}

As in Section~\ref{sec:baby-challenges}, we construct piecewise linear
functions, but now construct them so that their points of
non-differentiability approach $x^0 = 0$ along a curve.
From this, we demonstrate that there is always an open set $U$
arbitrarily close to $x^0$ for which the proximal mapping
$\proxmap_{\poploss_{P_n}}$ is the identity on $U$, but over which
$\poploss_P$ has large gradient.

Let $0 < t_j \le 1, j \in \N$ be a strictly decreasing positive sequence
with $t_j \downarrow 0$.
For $z \in \{0, 1\}^\N$, define the unit vectors and losses
\begin{align}
  \label{eqn:loss-counterexample}
  u_j = \left[\begin{matrix} 1 \\ t_j \end{matrix}\right] /
  \sqrt{1 + t_j^2}
  ~~~ \mbox{and} ~~~
  \loss_z(x) \defeq \sup_j z_j \hinge{\<x, u_j\> - \alpha_j}
\end{align}
for scalars $\alpha_j = t_j^2 / (1 + t_j^2)^{1/2} > 0$,
which we choose for convenience in the following
lemma.
\begin{lemma}
  \label{lemma:loss-counterexample}
  Let the loss $\loss_z$ have the structure~\eqref{eqn:loss-counterexample}.
  Then there exists a sequence of points $x_j \to 0$ and an open
  neighborhood $U_j$ of $x_j$ for which
  \begin{equation*}
    \loss_z(x) =
    \begin{cases} 0 & \mbox{if}~z_j = 0 \\
      \<x, u_j\> - \alpha_j > 0
      & \mbox{if}~ z_j = 1
    \end{cases}
    ~~~ \mbox{for~all~} x \in U_j.
  \end{equation*}
\end{lemma}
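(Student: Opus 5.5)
The plan is to pick, for each $j$, a point $x_j$ on the ray spanned by $u_j$ with $\<x_j, u_j\> > \alpha_j$ but $\<x_j, u_i\> < \alpha_i$ for every $i \neq j$. Strict inequalities, together with a uniform margin over all $i \neq j$, then persist on a small ball $U_j$ around $x_j$. On $U_j$ only the $j$th term of the supremum can be positive, which gives exactly the dichotomy stated in Lemma~\ref{lemma:loss-counterexample}.

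The choice of $\alpha_j = t_j^2/(1+t_j^2)^{1/2}$ suggests looking at the supporting lines $\<x, u_j\> = \alpha_j$, that is, $x_1 + t_j x_2 = t_j^2$. These are the tangent lines to the parabola $x_1 = -x_2^2/4$ at the point $p_j = (-t_j^2, 2t_j)$. Check: the tangent there has normal proportional to $(1, x_2/2) = (1, t_j)$, and it satisfies $-t_j^2 + 2t_j^2 = t_j^2$. The region $\{x : \<x,u_i\> \le \alpha_i\}$ is the half-plane containing the parabola's interior side, and the tangency point $p_j$ lies strictly inside every other half-plane. Indeed,
\begin{equation*}
  \<p_j, (1,t_i)\> - t_i^2 = -t_j^2 + 2 t_i t_j - t_i^2 = -(t_i - t_j)^2 < 0
  \quad \mbox{for } i \neq j.
\end{equation*}
Now take $x_j = p_j + \eta_j u_j$ with $\eta_j > 0$ small. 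Then $\<x_j,u_j\> - \alpha_j = \eta_j > 0$. For $i \ne j$, the slack at $x_j$ is at most $-(t_i-t_j)^2/\sqrt{1+t_i^2} + \eta_j$.

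The main obstacle is uniformity over infinitely many $i$. Because $t_i \to 0$ while $t_j$ is fixed, $(t_i - t_j)^2 \to t_j^2 > 0$. Since the sequence is strictly decreasing, $\inf_{i \ne j} (t_i - t_j)^2 =: \gamma_j > 0$, attained at neighbouring indices or in the limit. Also $\sqrt{1+t_i^2} \le \sqrt 2$ because $t_i \le 1$. So every $i \ne j$ has slack at most $-\gamma_j/\sqrt2 + \eta_j$ at $x_j$. Choose $\eta_j = \min\{\gamma_j/(4\sqrt 2), t_j\}$. The maps $x \mapsto \<x,u_i\>$ are $1$-Lipschitz uniformly in $i$, so on the ball $U_j = x_j + (\eta_j/2)\ball$ we get $\<x,u_j\> - \alpha_j \ge \eta_j/2 > 0$ and $\<x,u_i\> - \alpha_i \le -\gamma_j/\sqrt2 + 3\eta_j/2 < 0$ for all $i \ne j$.

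On $U_j$ every term $z_i\hinge{\<x,u_i\>-\alpha_i}$ with $i \ne j$ therefore vanishes. This gives $\loss_z(x) = z_j(\<x,u_j\>-\alpha_j)$, which equals $0$ when $z_j = 0$ and is positive when $z_j = 1$. Finally, $\ltwo{x_j} \le \ltwo{p_j} + \eta_j \le t_j^2 + 2t_j + t_j \to 0$, so $x_j \to 0$.
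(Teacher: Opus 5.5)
Your proof is correct and is essentially the paper's argument: it uses the same tangency points $v_j = (-t_j^2, 2t_j)$ with slack $-(t_i - t_j)^2/(1+t_i^2)^{1/2}$, the same perturbation $x_j = v_j + \eta_j u_j$, and a small ball around $x_j$. The only differences are bookkeeping: you bound $\eta_j$ via $\gamma_j$ and $\sqrt{1+t_i^2} \le \sqrt{2}$, and you use radius $\eta_j/2$. Since $\ball$ is the closed unit ball, you should take the interior of $x_j + (\eta_j/2)\ball$ to get the open neighborhood the statement asks for.
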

\begin{proof}
  The points $v_j = (-t_j^2, 2 t_j)$ satisfy $v_j \to 0$ and
  $\<v_j, u_i\>
  = \frac{1}{(1 + t_i^2)^{1/2}}
  \cdot (2 t_i t_j - t_j^2)$,
  so
  \begin{align*}
    \<v_j, u_i\> - \alpha_i
    = \<v_j, u_i\> - \frac{t_i^2}{(1 + t_i^2)^{1/2}}
    = -\frac{1}{(1 + t_i^2)^{1/2}} (t_i - t_j)^2.
  \end{align*}
  In particular, $\<v_i, u_i\>-\alpha_i = 0$, while
  \begin{equation*}
    \sup_{j \neq i} (\<v_j, u_i\>-\alpha_i) 
    = -(1 + t_i^2)^{-1/2} \min\{(t_i - t_{i+1})^2,(t_i - t_{i-1})^2\}.
  \end{equation*}
  We construct $x_i \to 0$ via perturbations of the points $v_i$.
  Set $x_j= v_j + \eta_j u_j$ for the scalar $\eta_j = \min_{i \neq j}
  \frac{1}{2 (1 + t_i^2)^{1/2}} (t_i - t_j)^2 > 0$, which satisfies
  $\norm{x_j} \lesssim t_j \to 0$; observe that if $i \neq j$,
  \begin{align*}
    \<x_j, u_i\> - \alpha_i
    = -\frac{1}{(1 + t_i^2)^{1/2}} (t_i - t_j)^2
    + \eta_j \<u_i, u_j\>
    \le -\frac{1}{2 (1 + t_i^2)^{1/2}} (t_i - t_j)^2
    \le -\eta_j.
  \end{align*}
  On the other hand, for $i = j$,
  \begin{align*}
    \<x_i, u_i\> - \alpha_i = \eta_i \<u_i, u_i\>
    = \eta_i > 0.
  \end{align*}
  Take the open sets $U_j$ to be the open $\ell_2$-balls
  of radius $\eta_j > 0$ around $x_j$.
\end{proof}

By the lemma, we see that if $z_j = 1$, then
$\nabla \loss_z(x) = u_j$ for $x \in U_j$, while
$\nabla \loss_z(x) = 0$ for $x \in U_j$ if $z_j = 0$.
So $\nabla \poploss_P(x) = (1 - \delta) u_j$ for $x \in U_j$.
On the other hand, by the second Borel-Cantelli lemma, because $\P(P_n(Z_j =
0) = 1) = \delta^n > 0$,
\begin{align*}
  \P\left(P_n(Z_j = 0) = 1 ~ \mbox{for~countably~many}~ j \in \N\right) = 1.
\end{align*}
On the probability 1 event $\mc{E}$ that for countably many $j$, the empirical
probability $P_n(Z_j = 0) = 1$ (i.e., coordinate $j$ is always 0),
for any finite $J$ there is $j > J$ such that $P_n(Z_j = 0) = 1$.
On the event $\mc{E}$, because $\diam(U_j) \to 0$, for all $b > 0$ there
exists an (arbitrarily large) index $j$ such that $U_j \subset x^0 + b
\ball$ while $P_n(Z_j = 0) = 1$.
Then $\nabla \poploss_{P_n}(x) = 0$ for each $x \in U_j$,
while $\nabla \poploss_P(x) = (1 - \delta) u_j$.
For each $\lambda > 0$,
$\proxmap_{\lambda \poploss_{P_n}}(x) = x$ for all $x \in U_j$.

\section{Deferred proofs for
  Theorem~\ref{theorem:lebesgue-measure-positive}}

\subsection{Proof of Lemma~\ref{lemma:empirical-subdifferential-pointwise}}
\label{sec:proof-empirical-subdifferential-pointwise}

Using the standard identification~\cite[Thm.~V.3.3.8]{HiriartUrrutyLe93} of
the Hausdorff distance with support functions of sets, that is, that
$\dhaus(A, B) = \sup_{\ltwo{v} \le 1} |\sigma_A(v) - \sigma_B(v)|$ for
compact convex $A, B$ with $\sigma_A(v) = \sup_{a \in A} \<a, v\>$, we see
that
\begin{align*}
  \dhaus\left(S_{P_n}, S_P\right)
  & = \sup_{\ltwo{v} \le 1}
    \left|\frac{1}{n} \sum_{i = 1}^n
    \sigma_{S_{Z_i}}(v) - \sigma_{S_P}(v)
    \right|.
\end{align*}
Introducing the random signs
$\varepsilon_i \simiid \uniform\{-1, 1\}$, a symmetrization
argument~\cite[e.g.][]{delaPenaGi99} gives
\begin{align*}
  \E\left[\dhaus^p\left(S_{P_n}, S_P\right)\right]
  & \le
  C_p \cdot \E\left[\sup_{\ltwo{v} \le 1}
  \left|\frac{1}{n} \sum_{i = 1}^n \varepsilon_i \sigma_{S_{Z_i}}(v)
    \right|^p\right],
\end{align*}
where $C_p$ depends only on $p$.

Define the function $H_n(\varepsilon)
\defeq \sup_{v \in \ball}
|\sum_{i = 1}^n \varepsilon_i \sigma_{S_{Z_i}}(v)|$, which is
a $\sqrt{\sum_{i = 1}^n \lipconst^2(Z_i)}$-Lipschitz function
of $\varepsilon \in \{-1, 1\}^n$.
Then the convex concentration inequality~\cite[e.g.][Thm.~3.24]{Wainwright19}
states that
\begin{align*}
  \P\left(
  \left|H_n(\varepsilon)
  - \E[H_n(\varepsilon) \mid P_n]\right| \ge t \mid P_n
  \right) \le 2 \exp\left(-\frac{t^2}{2 n \E_{P_n}[\lipconst^2(Z)]}
  \right)
\end{align*}
for all $t \ge 0$, where $\E_{P_n}[\lipconst^2(Z)] = \frac{1}{n} \sum_{i =
  1}^n \lipconst^2(Z_i)$.
Using that $\E[X] = \int_0^\infty  P(X \ge t) dt$ for $X \ge 0$,
we thus obtain
\begin{align*}
  \E\left[n^p \dhaus^p\left(S_{P_n}, S_P\right)\right]
  & \le C_p \cdot \E\left[|H_n(\varepsilon)|^p\right] \\
  & \le 2^p C_p \cdot \left(\E\left[
    \E\left[
    \left|H_n(\varepsilon) - \E[H_n(\varepsilon)
      \mid P_n]\right|^p \mid P_n\right]\right]
  + \E\left[\left|\E[H_n(\varepsilon) \mid P_n]\right|^p
    \right]\right).
\end{align*}
For the first inner conditional expectation,
we observe that
\begin{align*}
  \E\left[\left|H_n(\varepsilon) - \E[H_n(\varepsilon)
      \mid P_n]\right|^p \mid P_n\right]
  & = \int_0^\infty \P\left(|H_n(\varepsilon) - \E[H_n(\varepsilon) \mid P_n]
  | \ge t^{1/p}\right) dt \\
  & \le 2 \int_0^\infty \exp\left(-\frac{t^{2/p}}{2n \E_{P_n}[\lipconst^2(Z)]}
  \right) dt \\
  & 
  = p \Gamma(p) n^{p/2} \E_{P_n}[\lipconst^2(Z)]^{p/2}
\end{align*}
by a standard gamma integral computation.
To upper bound the expectation
$\E[|H_n(\varepsilon)| \mid P_n]$ we use a standard
entropy integral~\cite[e.g.]{Wainwright19, VanDerVaartWe96}
to obtain
$\E[|H_n(\varepsilon)| \mid P_n] \lesssim
\sqrt{d n \E_{P_n}[\lipconst^2(Z)]}$.
Substituting above yields
\begin{align*}
  \E\left[n^p \dhaus^p(S_{P_n}, S_P)\right]
  & \le C_p \cdot
  (dn)^{p/2} \cdot \E\left[\E_{P_n}[\lipconst^2(Z)]^{p/2}\right]
  \le C_p \cdot 
  (dn)^{p/2} \cdot \E[\lipconst^p(Z)]
\end{align*}
by Jensen's inequality.

\subsection{Proof of Lemma~\ref{lemma:truncating-hausdorff}}
\label{sec:proof-truncating-hausdorff}

Let $\delta = \dhaus(C, D) \le \frac{c}{2}$.
Then each $x \in D$ satisfies $\dist(x, C) \le \delta$,
and similarly, each $y \in C$ satisfies $\dist(y, D) \le \delta$.
We proceed in three steps: first,
arguing that $C \cap 2B$ is non-empty, then
arguing that $\dist(x, C \cap 2B) \lesssim \delta$ for all
$x \in D \cap 2B$, and
finally that $\dist(y, D \cap 2B) \lesssim \delta$
for all $y \in C \cap 2B$.

\begin{enumerate}[leftmargin=*,label=\arabic*.]
\item
  We first argue that $C \cap 2B$ is non-empty.
  Take $x_0 \in D \cap B$ and the $y_0 \in C$ satisfying $\ltwo{x_0 -
    y_0} \le \delta$, so $\ltwo{y_0} \le \ltwo{x_0} + \delta < 2c$,
  and $y_0 \in 2 B$.
  Thus $C \cap 2B$ is non-empty.
\item
  Take $x \in D \cap 2B$, and let $y \in C$ satisfy $\ltwo{x - y} \le
  \delta$,
  so $\ltwo{y} \le 2c + \delta$.
  Let $0 < t < 1$ to be chosen
  and let
  \begin{align*}
    z = (1 - t) y + t y_0 \in C,
  \end{align*}
  where the inclusion follows by convexity of $C$.
  Computing norms, we have
  \begin{align*}
    \ltwo{z} & \le (1 - t) \ltwo{y} + t \ltwo{y_0}
    \le (1 - t) (\ltwo{x} + \delta) + t (\ltwo{x_0} + \delta) \\
    & \le (1 - t) (2c + \delta)
    + t(c + \delta)
    = 2 c - t c + \delta,
  \end{align*}
  which satisfies $\ltwo{z} \le 2c$ whenever
  $\delta \le t c$, that is,
  $z \in 2B$.
  Taking $t = \frac{\delta}{c}$ then yields
  \begin{align*}
    \ltwo{z - x} \le
    \ltwo{z - y} + \ltwo{x - y}
    \le t \ltwo{y - y_0} + \delta
    \le \frac{\delta}{c}(c + \delta + 2c + \delta)
    + \delta
    \le 6 \delta.
  \end{align*}
\item Take $y \in C \cap 2B$.
  Let $x \in D$ be the closest point to $y$, so 
  $\ltwo{x - y} \le \delta$
  and $\ltwo{x} \le 2c + \delta$.
  Take any point $x' \in D \cap B$, and for $0 \le t \le 1$ to
  be chosen let
  $z = (1 - t) x + t x' \in D$.
  Evidently, if $(1 - t) (2c + \delta) + t c
  = 2c - tc + (1 - t) \delta \le 2 c$,
  i.e., $\delta \le t (c + \delta)$,
  then $z \in D \cap 2B$; we may take
  $t = \delta / c$ to achieve this
  because $c \ge 2\delta$.
  In this case
  \begin{align*}
    \ltwo{y - z}
    \le \ltwo{y - x} + t \ltwo{x - x'}
    \le \delta + \frac{\delta}{c}
    (2c + \delta + c)
    \le 5 \delta.
  \end{align*}
\end{enumerate}
As $\delta = \dhaus(C, D) \le c/2$, we see that
\begin{align*}
  \dhaus(C, D) \le \frac{c}{2}
  ~~ \mbox{implies} ~~
  \dhaus(C \cap 2B, D \cap 2B)
  \le 6 \cdot \dhaus(C, D).
\end{align*}

\bibliographystyle{abbrvnat}
\bibliography{bib}

\end{document}